\let\originalcline\cline
\documentclass[pdflatex,sn-mathphys-num,oneside]{sn-jnl}
\let\cline\originalcline

\usepackage{graphicx}%
\usepackage{amsmath,amssymb,amsfonts}%
\usepackage{amsthm}%
\usepackage{mathrsfs}%
\usepackage{xcolor}%
\usepackage{textcomp}%
\usepackage{manyfoot}%
\usepackage{booktabs}%
\usepackage{algorithm}%
\usepackage{algorithmicx}%
\usepackage{algpseudocode}%
\usepackage{listings}%
\usepackage{comment}

\usepackage{multirow}%
\usepackage[ruled,vlined,algo2e]{algorithm2e}

\usepackage[title]{appendix}%
\makeatletter
\AtBeginDocument{%
  \let\sn@orig@appendices\appendices
  \def\appendices{%
    \sn@orig@appendices
    \def\sectionname{Supplementary Section}
    \def\figurename{Supplementary Fig.}
    \def\tablename{Supplementary Table}
    \gdef\thefigure{S\arabic{figure}}
    \gdef\thetable{S\arabic{table}}
    \gdef\theequation{S\arabic{equation}}
  }%
}%
\makeatother

\theoremstyle{thmstyleone}
\newtheorem{theorem}{Theorem}
\newtheorem{proposition}{Proposition}
\newtheorem{lemma}{Lemma}

\newtheorem{corollary}{Corollary}

\usepackage{geometry}
\begin{document}

\title[Article Title]{Topology-induced Operators Reveal Complementary Graph Representations without Training}


\author[1]{\fnm{Meng} \sur{Qin}}

\author[1*]{\fnm{Jinqiang} \sur{Cui}}\email{cuijq@pcl.ac.cn}

\author[2*]{\fnm{Hongwei} \sur{Zheng}}\email{hwzheng@pku.edu.cn}

\author[3]{\fnm{Weihua} \sur{Li}}

\author[4*]{\fnm{Sen} \sur{Pei}}\email{sp3449@cumc.columbia.edu}

\affil[1]{\orgdiv{Department of Strategic \& Advanced Interdisciplinary Research}, \orgname{Pengcheng Laboratory (PCL)}, \orgaddress{\city{Shenzhen}, \state{Guangdong}, \country{China}}}

\affil[2]{\orgname{Beijing Academy of Blockchain and Edge Computing}, \orgaddress{\city{Beijing}, \country{China}}}

\affil[3]{\orgdiv{LMIB, NLSDE, and School of Artificial Intelligence}, \orgname{Beihang University}, \orgaddress{\city{Beijing}, \country{China}}}

\affil[4]{\orgdiv{Mailman School of Public Health}, \orgname{Columbia University}, \orgaddress{\city{New York}, \country{USA}}}

\abstract{
Graph representation learning has largely focused on designing increasingly sophisticated models to transform graph topology into vector representations, or embeddings. However, the extent to which embedding quality depends on model learning, rather than on the underlying topological transformations, remains unclear. Here, we show that informative embeddings can be derived without complicated model design and gradient-based training. Propagating random features through implicit hierarchical structures induced by random walks and anonymous walks yields embeddings that capture node proximity and structural role, respectively. These two training-free embeddings preserve complementary aspects of graph organization and perform competitively with classic and recent methods across various node-, edge-, and graph-level tasks. They often require substantially less computation, resulting in a favorable quality–efficiency trade-off. Combining the two types of embeddings further improves inference quality of some tasks compared with using either embedding type alone. 
Our results suggest that informative graph embeddings can arise from carefully chosen topological transformations before any learning operation is applied.
}

\keywords{Graph Embedding, Graph Representation Learning, Identity and Position Embedding, Random Walk and Anonymous Walk}

\maketitle

\section{Introduction}\label{Sec:Intro}
Graphs (also called networks) provide a universal abstraction for modeling complex systems across nature and engineering, including ecological food webs \cite{reji2025species}, neural connectomes \cite{sporns2005human}, the Internet \cite{pastor2007evolution}, power grids \cite{brummitt2012suppressing}, and transportation systems \cite{guimera2005worldwide}.
Many scientific and engineering problems in these systems \cite{giot2003protein,schneider2011mitigation} can be formulated as inference tasks on graphs, such as community detection \cite{fortunato202220,su2022comprehensive}, anomaly detection \cite{qiao2025deep,ekle2024anomaly}, and link prediction \cite{martinez2016survey,kumar2020link,qin2023temporal}.
Graph representation learning has emerged as a fundamental technique for graph inference, mapping nodes, edges, or entire graphs into low-dimensional vector representations (also called embeddings) that preserve information relevant to downstream tasks.
Modern graph inference models, exemplified by graph convolutional networks \cite{Bhattacharya2026} and graph transformers \cite{shehzad2026graph}, increasingly rely on sophisticated architectures and trainable transformations to obtain such embeddings. Yet it remains unclear how much of the resulting embeddings is produced by model learning and how much is already determined by the topological transformations applied to the graph. Understanding this distinction is important for clarifying how graph topology is converted into representations and which aspects of this process genuinely require learning.

A key difficulty in addressing this question is that graph topology contains multiple types of properties suitable for different inference tasks. Two fundamental yet distinct examples are node positions and identities.
Node position concerns where a node lies relative to others in the graph. For instance, nodes belonging to the same community, with dense local connectivity, tend to have a high overlap of neighbors and hence nearby positions \cite{fortunato202220}. Position embedding (alternatively called proximity-preserving embedding \cite{rossi2020proximity,zhu2021node}) is designed to preserve such relative proximity.
Node identity, in contrast, concerns what structural role a node plays in the graph, which is relevant to the node's centrality \cite{grando2018machine}, ego-net structure \cite{mengirwe}, and degree patterns \cite{ribeiro2017struc2vec}. Identity embedding (also defined as structural role-based embedding \cite{rossi2020proximity,zhu2021node}) aims to encode this structural role.
These two properties provide complementary views of graph organization. Nodes from different communities with distinct positions can share similar structural roles, whereas nodes in the same community can also play substantially different roles.
Hence, a graph representation that preserves one aspect well does not necessarily capture the other.

This distinction also makes it challenging to determine how different topological properties emerge in modern graph representations. Most recent graph inference models, especially graph neural networks (GNNs) \cite{ju2024comprehensive,ju2025survey,zheng2026graph}, combine node attributes, topology-induced operators, trainable transformations, and task-specific objectives. Their resulting embeddings reflect intertwined effects of topology, attributes, and learning, making it hard to isolate which topological transformations preserve a specific property. This ambiguity is particularly relevant for attributed graphs, where attributes may be partially correlated or even inconsistent with topology \cite{newman2016structure,peel2017ground,qin2018adaptive,qin2021dual,wang2020gcn}.
Recent efforts have attempted to enhance topology awareness regarding node positions and identities by designing dedicated mechanisms for extracting structural information \cite{zhang2024disentangled,tang2025hyperrole,you2021identity,you2019position,devvrit2022s3gc,liu2024revisiting}.
However, these approaches often emphasize one type of topology property and combine topology-aware transformations with feature processing and gradient-based model optimization. Consequently, it remains unclear how complementary topology properties relate to one another and, more fundamentally, whether sophisticated learning is necessary to encode them. This separation is also relevant to efficiency, because elaborate feature extraction and training procedures introduce substantial computational costs.

Here, we address the above challenges from an operator-level perspective. Our key insight is that different stochastic transformations of graph topology can expose distinct structural channels even without learning model parameters.
Many classic position embedding methods, such as node2vec \cite{grover2016node2vec}, exploit random walks (RWs), where nodes with higher overlaps of neighbors and thus closer positions are more likely to be visited by RW sampling.
In contrast, anonymizing an RW according to the order in which nodes first occur removes their location-specific labels (Fig.~\ref{Fig:PoC}b) while retaining the pattern of structural relationships \cite{ivanov2018anonymous}. The resulting anonymous walks (AWs) have been theoretically shown to contain sufficient information to reconstruct node ego-nets \cite{micali2016reconstructing}.
Motivated by these findings, we develop PI-HIST (\underline{P}osition and \underline{I}dentity embedding from \underline{H}idden h\underline{I}erarchical \underline{S}tructures of graph \underline{T}opology).
It treats RWs and AWs as topology-induced operators rather than viewing them merely as sampling procedures. We use random features as the common inputs these operators and derive embeddings via a only single training-free feedforward propagation (FFP), which can expose complementary information about node positions and identities.
This simple design also suggests the potential for a better quality-efficiency trade-off since there is no iterative gradient-based optimization.

We systematically evaluate this framework across $7$ node-, edge-, and graph-level inference tasks, comparing PI-HIST with $18$ classic and state-of-the-art baselines on $8$ real-world and $9$ synthetic graphs.
Our results show that even with a random feature input and without gradient-based training, informative position and identity embeddings can emerge from PI-HIST's topology-induced propagation. The derived two types of embeddings exhibit different strengths across inference tasks. Their simple combination further improves the inference quality of some tasks compared with either embedding type alone.
The training-free design also leads to favorable quality-efficiency trade-offs relative to more elaborate embedding methods.
Collectively, these outcomes suggest that a substantial fraction of structural signals for informative graph embeddings can arise from the topology-induced operator before any learned transformation is introduced.
More broadly, disentangling complementary graph topology channels provides a new insight into studying graph embedding mechanisms at the operator level and offers a simple perspective on designing efficient and interpretable graph inference models.

\begin{figure}[t]
    \centering
    \includegraphics[width=1.0\linewidth,trim=0 0 0 0,clip]{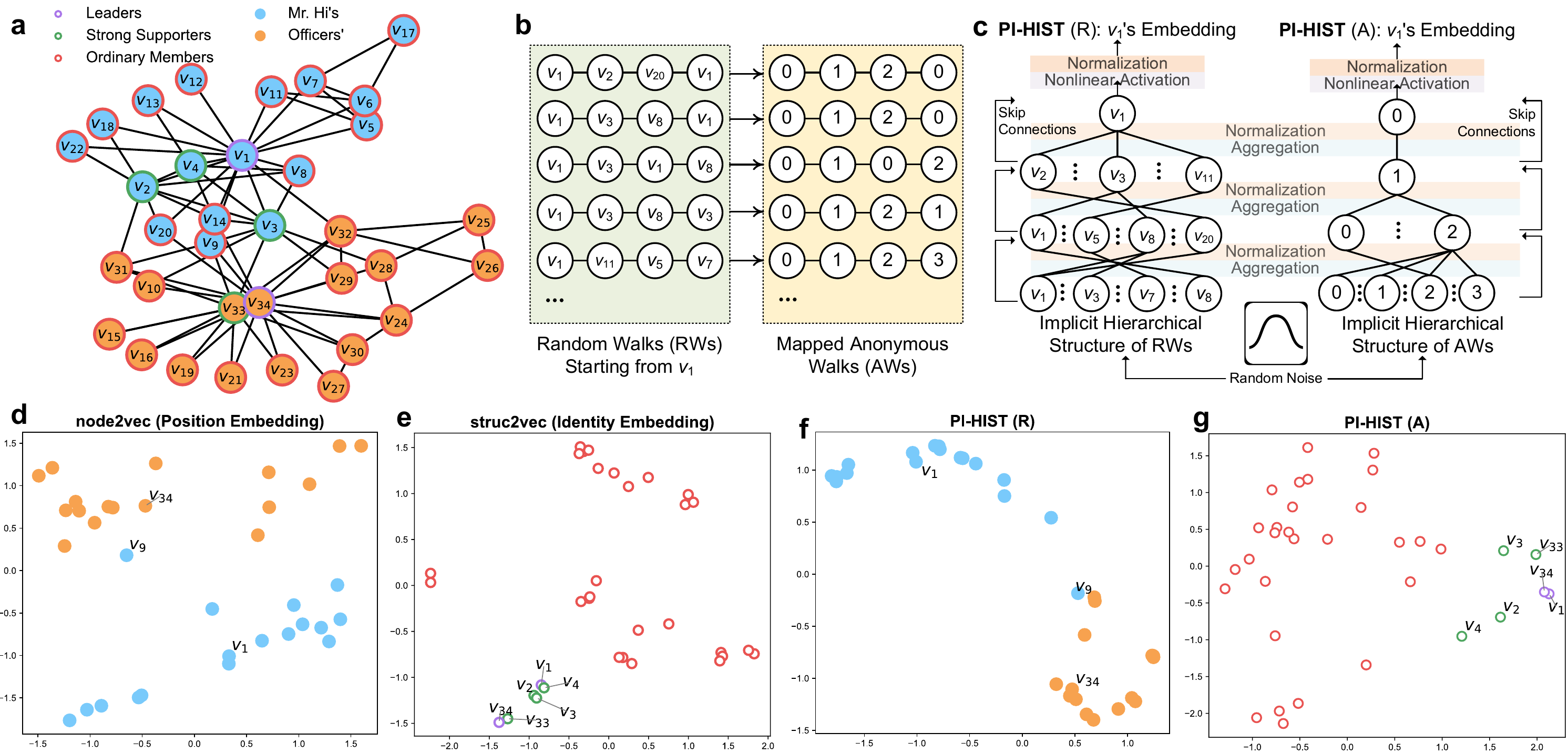}
    \caption{\textbf{Overview of PI-HIST with RWs, AWs, and position/identity embeddings}.
    \textbf{a}, Zachary's karate club network with two communities (blue and orange nodes) and three structural roles (purple, green, and red circles).
    \textbf{b}, Example RWs starting from node $v_1$ with length $L=3$ and their corresponding AWs.
    \textbf{c}, Architecture of PI-HIST based on RW- and AW-induced hierarchical structures. Starting from random features, PI-HIST derives node embeddings through a single FFP over a hierarchical structure, combined with simple skip connections, normalization, and nonlinear activation.
    PI-HIST~(R) and PI-HIST~(A) denote variants using RW and AW, respectively.
    \textbf{d}-\textbf{g}, Visualization of node2vec, struc2vec, and PI-HIST embeddings, obtained by reducing the original embedding dimensionality $d=16$ to $2$ via $t$-SNE.
    As classic position and identity embedding methods, node2vec and struc2vec can distinguish between community members and roles corresponding to node positions and identities, respectively. Surprisingly, PI-HIST~(R) and (A) can obtain position and identity embeddings consistent with node2vec and struc2vec, even with a single untrained FFP from random feature inputs.
    }
    \label{Fig:PoC}
\end{figure}

\section{Results}\label{Sec:Res}
\subsection{Proof-of-concept study of PI-HIST on position and identity embeddings}
To isolate the contribution of graph topology, we consider an unsupervised embedding setting in which topology is the only available input.
The proposed PI-HIST method adopts a simple design (Fig.~\ref{Fig:PoC}c) that uses RW- and AW-induced hierarchical structures as parameter-free backbones, whose layers from bottom to top correspond to the reversed order of RWs and AWs.
We feed random features to these backbones and directly get embeddings via a single FFP, combined with simple normalization, skip connections, and nonlinear activation.
For simplicity, we denote the two variants based on RW and AW as PI-HIST~(R) and PI-HIST~(A), respectively.
We later demonstrate that PI-HIST~(R)'s embedding derivation is equivalent to propagating random features through an untrained GNN with mean aggregation (cf. `Methods' section).
Different from RW, there are currently no closed-form solutions to directly obtain AW statistics.
We therefore develop an efficient algorithm to estimate AW statistics and extract associated hierarchical structures for PI-HIST~(A) before FFP. Full technical details are elaborated in `Methods' section and Supplementary Section~\ref{App:Meth}.
We also provide theoretical support for the effectiveness of PI-HIST in Supplementary Section~\ref{App:Th-Sup}.

The ability of PI-HIST to capture specific properties can be inferred from whether the resulting embeddings behave consistently with existing approaches.
We apply node2vec \cite{grover2016node2vec} and struc2vec \cite{ribeiro2017struc2vec}, which are classic position and identity embedding methods, to the widely studied Zachary's karate club network \cite{zachary1977information}, with the embedding dimensionality $d$ reduced from $16$ to $2$ using $t$-SNE \cite{van2008visualizing}.
This network records social connections between $34$ club members (Fig.~\ref{Fig:PoC}a), who were divided into $2$ competing factions after a conflict between the head coach, Mr. Hi ($v_{34}$), and club officers led by the president, John A. ($v_1$).

In Fig.~\ref{Fig:PoC}d, node2vec can distinguish most members of the two factions (or communities) highlighted by blue and orange nodes, where nodes from the same community have high overlaps of neighbors, close relative positions, and thus similar position embeddings.
Consistent with the original findings of Zachary~\cite{zachary1977information}, node2vec may misclassify $v_9$, which is structurally closer to the Officers' faction (orange nodes) but has to join Mr. Hi’s camp (blue nodes) due to a black belt test.
Nodes in this network also play different roles. Mr. Hi ($v_1$) and John A. ($v_{34}$) should be leaders of the two factions. We define nodes with high centrality as strong supporters of a faction and the rest nodes as ordinary members. In Fig.~\ref{Fig:PoC}e, struc2vec has the potential to distinguish ordinary members (red circles) from leaders and strong supporters (purple and green circles).

Surprisingly, both PI-HIST variants can obtain embeddings consistent with node2vec and struc2vec (Fig.~\ref{Fig:PoC}d-e) even through a single training-free FFP from random inputs.
PI-HIST~(R) discriminates most members from the two communities (Fig.~\ref{Fig:PoC}f), while PI-HIST~(A) effectively separates nodes with different roles (Fig.~\ref{Fig:PoC}g). Particularly, PI-HIST~(A) shows stronger identity-capturing ability than struc2vec, enabling the further distinction between leaders (purple circles) and strong supporters (green circles).
These observations provide initial evidence that the training-free propagation through RW- and AW-induced hierarchical structures can generate distinct position and identity embeddings.
They also motivate a broader question: whether the complementary information exposed by the RW- and AW-induced operators persists across different graph inference tasks.
We further conduct systematic evaluations and case studies at node, edge, and graph levels (Table~\ref{Tab:Task}), examining both the individual and joint contributions of the two types of embeddings.

\begin{table}[t]
\centering
\caption{Summary of experimental evaluations and case studies in this work.}
\label{Tab:Task}
\begin{tabular}{l|l|l|l}
\hline
\textbf{Levels} & \textbf{Tasks} & \textbf{\begin{tabular}[c]{@{}l@{}}Position/\\ Identity-Related\end{tabular}} & \textbf{Outcomes} \\ \hline
\multirow{4}{*}{Node-level} & Node identity classification & Identity & \multirow{4}{*}{Fig.~\ref{Fig:NC_Eva}, Supplementary Fig.~\ref{Fig:NC_Eva_Apx}} \\
 & Community detection & Position &  \\ \cline{2-3}
 & Node position classification & Position &  \\
 & Node identity clustering & Identity &  \\ \hline
\multirow{2}{*}{Edge-level} & Link prediction & Uncertain & \multirow{2}{*}{Fig.~\ref{Fig:GRLP_Eva}, Supplementary Fig.~\ref{Fig:GRLP_Eva_Apx}} \\
 & Graph reconstruction & Uncertain &  \\ \hline
Graph-level & Graph superfamily identification & Uncertain & Fig.~\ref{Fig:GSI_Eva}, Supplementary Fig.~\ref{Fig:GSI_Eva_PI_HIST},~\ref{Fig:GSI_Eva_Apx} \\ \hline
\end{tabular}
\end{table}

\begin{figure}[t]
    \centering
    \includegraphics[width=1.0\linewidth,trim=0 0 0 0,clip]{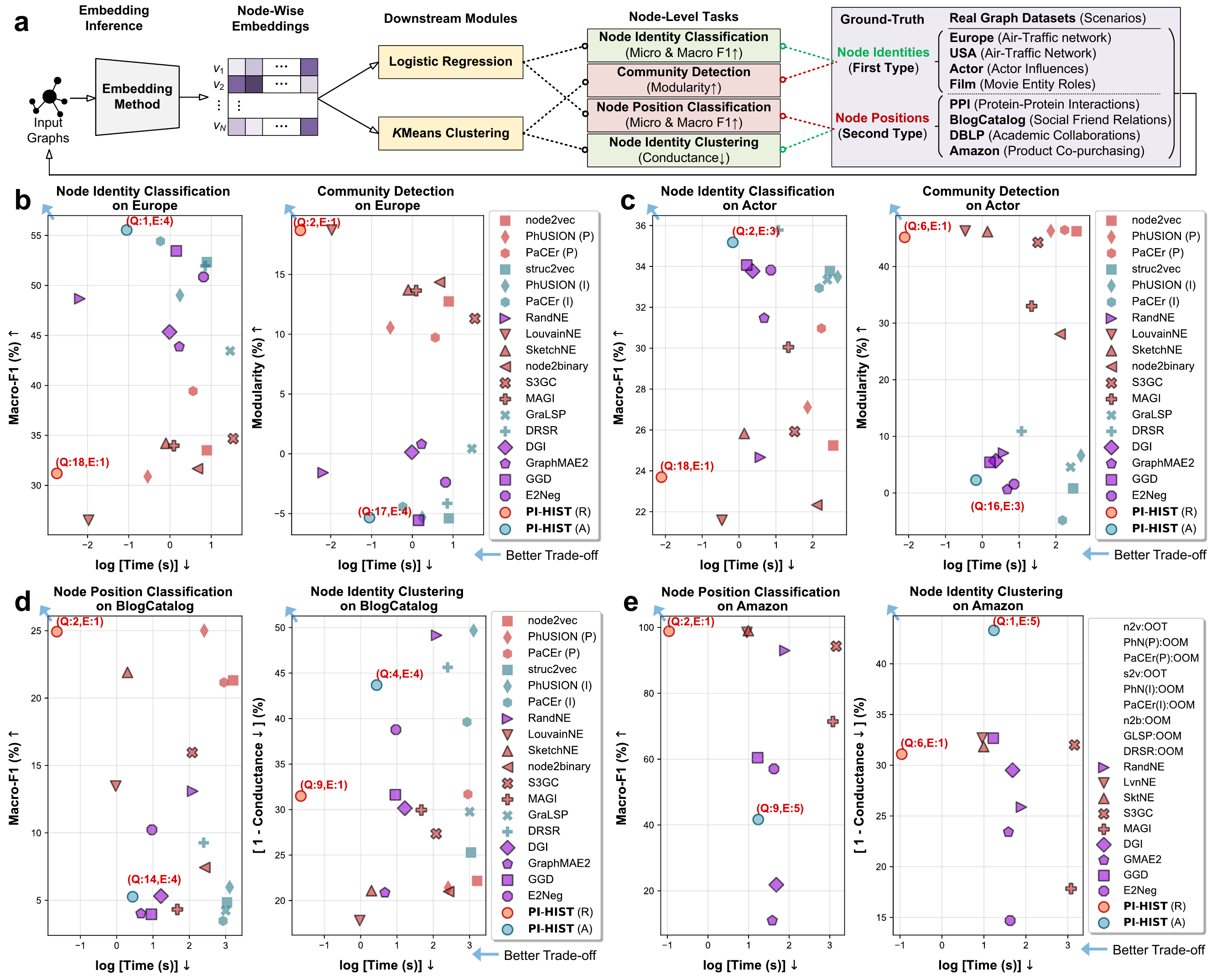}
    \caption{
    \textbf{Representative evaluation results of node-level tasks}.
    \textbf{a}, Overview of evaluation settings on $8$ public real-world graph datasets with $2$ types of ground-truth using $4$ node-level tasks.
    \textbf{b-e}, Evaluation on representative datasets with identity (Europe and Actor) or position (BlogCatalog and Amazon) ground-truth. Specifically, node identity classification (macro F1) and community detection (modularity) assess the quality of position and identity embeddings ($y$-axis) on identity-ground-truth datasets (b-c), whereas node position classification (macro-F1) and identity clustering (conductance) evaluate the quality of corresponding embeddings ($y$-axis) on position-ground-truth datasets (d-e).
    The logarithm of inference time (sec) is used to measure the efficiency of all cases ($x$-axis). $\uparrow$ and $\downarrow$ denote higher-is-better metrics and lower-is-better metrics, respectively.
    OOT and OOM represent out-of-time and out-of-memory exceptions.
    Q and E denote rankings of quality and efficiency obtained by the two PI-HIST variants.
    Blue arrows indicate directions toward better quality–efficiency trade-offs.
    Red and green dots denote position and identity embedding baselines, respectively. It is uncertain for those with purple dots which property they capture.
    Black-edged dots highlight efficient or scalable baselines.
    Surprisingly, PI-HIST~(R) and (A), despite their extremely simple architectures, achieve top-tier quality for position and identity embedding while maintaining top-tier efficiency in most cases, demonstrating their effectiveness in capturing distinct topology properties and achieving a favorable quality–efficiency trade-off.
    Additional results with more metrics and on other datasets are detailed in Supplementary Fig.~\ref{Fig:NC_Eva_Apx} and Supplementary Section~\ref{App:Res}.
    }
    \label{Fig:NC_Eva}
\end{figure}

\subsection{Evaluation of node-level tasks for position and identity embedding}

We first quantitatively examine whether both PI-HIST variants can derive informative position and identity embeddings using $4$ node-level tasks evaluated on $8$ public real-world graph datasets (Fig.~\ref{Fig:NC_Eva}a).
The datasets cover diverse scenarios and can be categorized into two types according to their ground-truth. Europe \cite{ribeiro2017struc2vec}, USA \cite{ribeiro2017struc2vec}, Actor \cite{jiao2021survey}, and Film \cite{jiao2021survey} are the first type of datasets providing labels about node identities (e.g., function roles or influences), while PPI \cite{grover2016node2vec}, BlogCatalog \cite{grover2016node2vec}, DBLP \cite{yang2015defining}, and Amazon \cite{yang2015defining} are the second type of datasets with position ground-truth (e.g., community memberships). The scales of these datasets also span a wide range from thousands to millions of edges (cf. Supplementary Table~\ref{Tab:Data} for details of datasets).

As summarized in Fig.~\ref{Fig:NC_Eva}a, we use supervised node identity (or position) classification to measure the ability to capture node identities (or positions) on the first (or second) type of datasets with corresponding ground-truth.
The two classification tasks quantitatively evaluate whether the derived embeddings can accurately separate nodes with different labels of positions and identities.
Community detection (also called node position clustering), a straightforward unsupervised position-related task, is employed to evaluate position-capturing capacity on the first type of datasets without position ground-truth.
By extending spectral clustering \cite{von2007tutorial}, we introduce node identity clustering, an unsupervised identity-related task, to evaluate identity-capturing quality on the second type of datasets without identity ground-truth.
The two clustering tasks test whether the given embeddings can successfully group together nodes with similar positions and identities, characterized by community structures and high-order degree patterns, respectively.
In addition, we also measure efficiency by recording the overall embedding inference time (sec) of each method.
Full experiment settings are detailed in Supplementary Section~\ref{App:Set-Task}.

We compare the quality and efficiency of PI-HIST~(R) and (A) over $18$ classic and state-of-the-art baselines with diverse categories (Supplementary Table~\ref{Tab:Meth}).
Representative evaluation results on Europe, Actor, BlogCatalog, and Amazon are visualized in Fig.~\ref{Fig:NC_Eva}b-e. Additional results with more quality metrics and on the remaining datasets are shown in Supplementary Fig.~\ref{Fig:NC_Eva_Apx}.
Given a pair of quality and efficiency metrics, one can map each method to a dot in a $2$D space.
We use red and green dots to denote position and identity embedding baselines. For the rest baselines with purple dots, it remains uncertain which property they can preserve. Efficient or scalable baselines are highlighted by black dot edges.
Because the inference time of all methods spans several orders of magnitude, from approximately $10^{-3}$ to $10^3$ seconds, we use a logarithmic scale for time.
We define that a method encounters the out-of-time (OOT) exception, if it fails to derive feasible embeddings within $10^4$ seconds.
OOM denotes the out-of-memory exception.
The blue arrow in each plot indicates the direction toward a better quality-efficiency trade-off.
For PI-HIST~(R) and (A), their rankings of quality and efficiency among all approaches are denoted by Q and E.

In most cases, one can observe consistent outcomes among different categories of methods. When evaluating identity-capturing ability (i.e., node identity classification and clustering on the first and second types of datasets), identity-preserving baselines (green dots), such as struc2vec \cite{ribeiro2017struc2vec}, generally outperform position embedding approaches (red dots), such as node2vec \cite{grover2016node2vec}. Similarly, in the rest cases measuring position-capturing ability, position embedding methods (red dots) tend to obtain better quality. This consistency supports the validity of our experimental design.

Surprisingly, even with a single training-free FFP from random inputs, at least one PI-HIST variant ensures quality competitive to or even better than the strongest baselines.
Particularly, PI-HIST~(R) and (A) have results consistent with classic position and identity embedding baselines (red and green dots), respectively.
PI-HIST~(R) has significantly better embedding quality than PI-HIST~(A) under the position-related evaluation settings (i.e., community detection and node position classification on the first and second types of datasets) and vice versa in the rest cases.
These outcomes therefore indicate that complementary topological information can emerge from different topology-induced operators even when the input only includes uninformative random features and no parameters are optimized.
They also support the view that informative graph embeddings can arise from carefully chosen topology-induced operators before any learned transformation is introduced.

The training-free design also provides a favorable computational profile. Since PI-HIST requires neither time-consuming feature augmentation nor iterative gradient-based optimization, both variants deliver top-tier efficiency in most cases.
Identity embedding methods (green dots) are generally less efficient than the rest approaches (red and purple dots). None of the identity embedding baselines can be considered efficient (black dot edges). They all suffer from OOT or OOM on large graphs (DBLP and Amazon).
This implies that effectively capturing node identities may sometimes require higher computational overhead than node positions. While less efficient than PI-HIST~(R) and some efficient baselines (e.g., LouvainNE \cite{bhowmick2020louvainne} and SketchNE \cite{xie2023sketchne}) in several cases, PI-HIST~(A) still achieves significantly better efficiency than identity embedding baselines and ranks within the top-$5$ among all methods in terms of efficiency.
In summary, PI-HIST has the potential to obtain a favorable trade-off between the inference quality and efficiency for position and identity embedding.

\begin{figure}[t]
    \centering
    \makebox[\linewidth]{
        \includegraphics[width=0.87\linewidth,trim=0 0 0 0,clip]{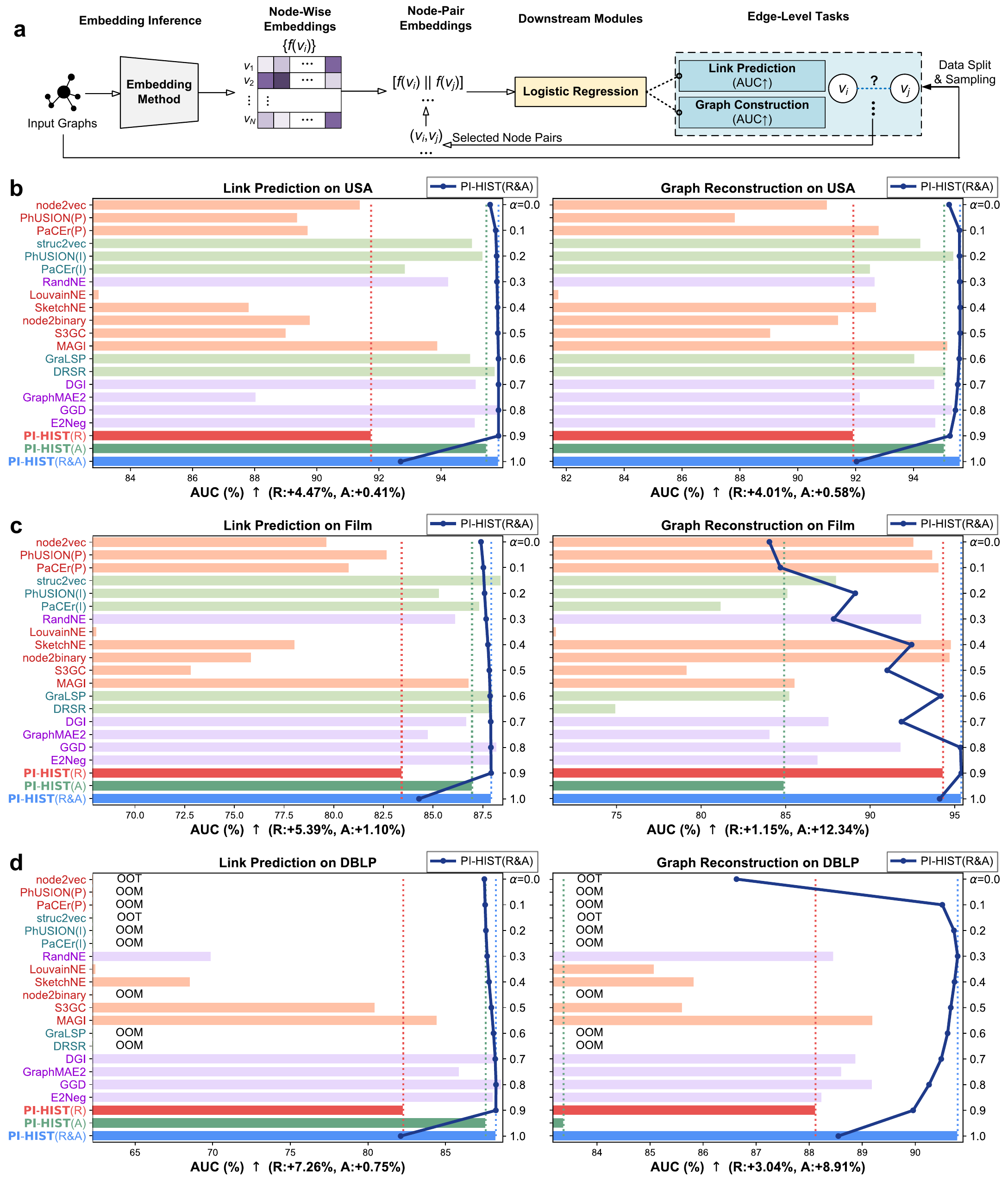}
    }
    \caption{
    \textbf{Representative evaluation results of edge-level tasks}.
    \textbf{a}, Overview of evaluation settings for the $2$ edge-level tasks, link prediction and graph reconstruction, on $8$ real-world graph datasets (Fig.~\ref{Fig:NC_Eva}a).
    \textbf{b}-\textbf{d}, Representative results of link prediction (AUC) and graph reconstruction (AUC) on USA, Film, and DBLP.
    Baselines with red and green labels/bars denote position and identity embedding methods, respectively. It is uncertain for methods with purple labels/bars which property they capture.
    R and A indicate improvements of PI-HIST (R\&A) over PI-HIST~(R) and (A), respectively.
    In some cases, PI-HIST~(R\&A), a simple combination of the two variants, achieves substantial quality improvements (e.g., $>1.0$\%), suggesting that integrating complementary position and identity information can improve the inference quality of some tasks.
    Additional evaluation results on the remaining datasets are detailed in Supplementary Fig.~\ref{Fig:GRLP_Eva_Apx} and Supplementary Section~\ref{App:Res}.
    }
    \label{Fig:GRLP_Eva}
\end{figure}

\subsection{Evaluation of edge-level tasks for combining node positions and identities}
In addition to the position- and identity-related tasks shown in Fig.~\ref{Fig:NC_Eva}a, it may be uncertain which property is preferred for some other tasks.
Based on the verified ability of PI-HIST~(R) and (A) to encode node positions and identities, we explore whether these two properties provide complementary topological information and together improve the inference quality.
We consider two edge-level tasks, link prediction and graph reconstruction, as representative examples (Fig.~\ref{Fig:GRLP_Eva}a). The tasks are evaluated on the same real-world graph datasets used in node-level experiments (Fig.~\ref{Fig:NC_Eva}a), with area under the receiver operating characteristic curve (AUC) used to quantify inference quality.

PI-HIST originally generates node-wise embeddings. Let $f(v_i) \in \mathbb{R}^d$ be the embedding of node $v_i$. To enable $\{ f(v_i) \}$ to support edge-level tasks, we obtain a node-pair embedding $e(v_i, v_j) := [f(v_i) || f(v_j)]$ by concatenating the corresponding node embeddings for each pair $(v_i, v_j)$. $e(v_i, v_j)$ is then fed into a downstream classifier, such as Logistic regression in Fig.~\ref{Fig:GRLP_Eva}a, to determine whether there is an edge or anomaly between $(v_i, v_j)$. This is a standard setting widely adopted by classic embedding approaches for edge-level tasks \cite{yang2020homogeneous}.
Full experiment settings are elaborated in Supplementary Section~\ref{App:Set-Task}.

Let $\{ f_{\rm{R}} (v_i) \}$ and $\{ f_{\rm{A}} (v_j) \}$ be node-wise embeddings from PI-HIST~(R) and (A). We consider a simple combination of the two variants, denoted as PI-HIST~(R\&A). For each node $v_i$, it computes a fused embedding via $f_{\rm{R\&A}}({v_i}): = {\gamma _{{\rm{norm}}}} ( [\alpha f_{\rm{R}}({v_i})||(1 - \alpha ){f_{\rm{A}}}({v_i})] )$, which works best with the downstream module for the two edge-level tasks. This operation concatenates embeddings from two sources and introduces a hyper-parameter $\alpha \in [0, 1]$ to adjust their relative fusion contributions. $\gamma_{\rm{norm}} (\cdot)$ is an optional normalization operation eliminating the magnitude difference between $\{ f_{\rm{R}} (v_i) \}$ and $\{ f_{\rm{A}} (v_j) \}$ (cf. `Method' Section for details).
Following the pipeline in Fig.~\ref{Fig:GRLP_Eva}a, we apply $\{ f_{\rm{R\&A}}({v_i}) \}$ to the two edge-level tasks, where we tune $\alpha \in \{ 0.0, 0.1, \cdots, 1.0\}$ for PI-HIST~(R\&A) and compare its best inference quality with the two PI-HIST variants and other baselines.

Representative evaluation results on USA, Film, and DBLP are depicted in Fig.~\ref{Fig:GRLP_Eva}b-d, with additional outcomes on the remaining datasets shown in Supplementary Fig.~\ref{Fig:GRLP_Eva_Apx}.
In each plot, red labels and bars highlight position-preserving baselines, while green ones mark identity-preserving counterparts.
It is uncertain for those with purple labels and bars which property they can preserve. R and A represent the quality improvements of PI-HIST~(R\&A) over PI-HIST~(R) and (A), respectively.
In addition to the reported metrics of PI-HIST~(R\&A) highlighted by blue bars, we also visualize detailed outcomes with respect to different settings of $\alpha$ using dark-blue curves.

Our results first show that the relative importance of node positions and identities varies across datasets and tasks.
For graph reconstruction on USA and PPI, classic identity embedding approaches (green bars) generally outperform position embedding baselines (red bars). Consistent with these baselines, PI-HIST~(A) (dark-green bar) achieves better graph reconstruction quality than PI-HIST~(R) (dark-red bar) on USA and PPI.
In contrast, position-preserving methods (red bars) tend to outperform identity-preserving counterparts on Film and Actor.
Similarly, PI-HIST~(R) (dark-red bar) also obtains better quality on the same datasets.
Although in most cases of link prediction, identity embedding baselines (green bars) generally attain superior performance than position-preserving approaches (red bars), the two types of methods achieve close quality on some datasets, such as Europe and BlogCatalog.
These observations indicate that link prediction and graph reconstruction are neither typically position- nor identity-dominant tasks.
Moreover, the relative performance of PI-HIST~(R) and (A) broadly follows that of established position- and identity-preserving embedding methods, providing further evidence that the two training-free variants capture distinct topology properties.

More importantly, combining the two types of embeddings generally improves inference quality over using either type alone.
Across all evaluation settings, PI-HIST~(R\&A) (blue bars) delivers better inference quality than both PI-HIST~(R) and (A) (dark-red and dark-green bars), with the gain varying across datasets and tasks.
Surprisingly, such an improvement is substantial in some cases (e.g., more than $1$\% for link prediction and graph reconstruction on Film), despite involving only a weighted fusion of the two variants.
Furthermore, PI-HIST~(R\&A) yields quality competitive to or even better than the strongest baselines.

The dependence on $\alpha$ further supports the complementary nature of the two PI-HIST variants.
For detailed outcomes of PI-HIST~(R\&A) (dark-blue curves), $\alpha \in \{ 0.1, 0.2, \cdots, 0.9\}$, which corresponds to fusing both position- and identity-related information, can ensure significantly better quality than the two corner cases $\alpha \in \{0.0, 1.0\}$ that rely exclusively on one embedding type.
Hence, the improvement does not simply result from selecting the better-performing PI-HIST variant for each dataset. Incorporating both types of embeddings can provide additional information for some inference tasks, including link prediction and graph reconstruction.

Together, these results extend the node-level findings, where the two types of embeddings derived by PI-HIST are not only distinguishable but also complementary.
The two topological operators based on RW- and AW-induced hierarchical structures expose different aspects of graph organization. Combining the resulting position and identity embeddings can improve inference when both aspects are relevant.
This provides empirical support for disentangling graph topology into complementary channels and subsequently recombining them according to the requirements of a downstream task.

\subsection{Case studies of graph-level tasks for combining positions and identities}
We next explore whether the position and identity embeddings derived by PI-HIST can also provide meaningful graph-level representations.
While PI-HIST's node-wise embeddings $\{ f(v_i) \}$ are not specifically designed to represent a whole graph $G$, the graph-level embedding $g (G) \in \mathbb{R}^d$ obtained by applying a standard permutation-invariant pooling operation to $\{ f(v_i) \}$ provides a way to investigate whether PI-HIST can retain graph-level structural patterns.
As a case study, we consider graph superfamily identification following the settings of Milo et al.~\cite{milo2004superfamilies} and Dong et al.~\cite{dong2017structural}.
This task tests whether the derived graph-level features or representations can distinguish graphs from different superfamilies while grouping those with similar structural patterns together.
A superfamily typically corresponds to a specific graph generation mechanism or application domain.

\begin{figure}[t]
    \centering
    \includegraphics[width=1.0\linewidth,trim=0 0 0 0,clip]{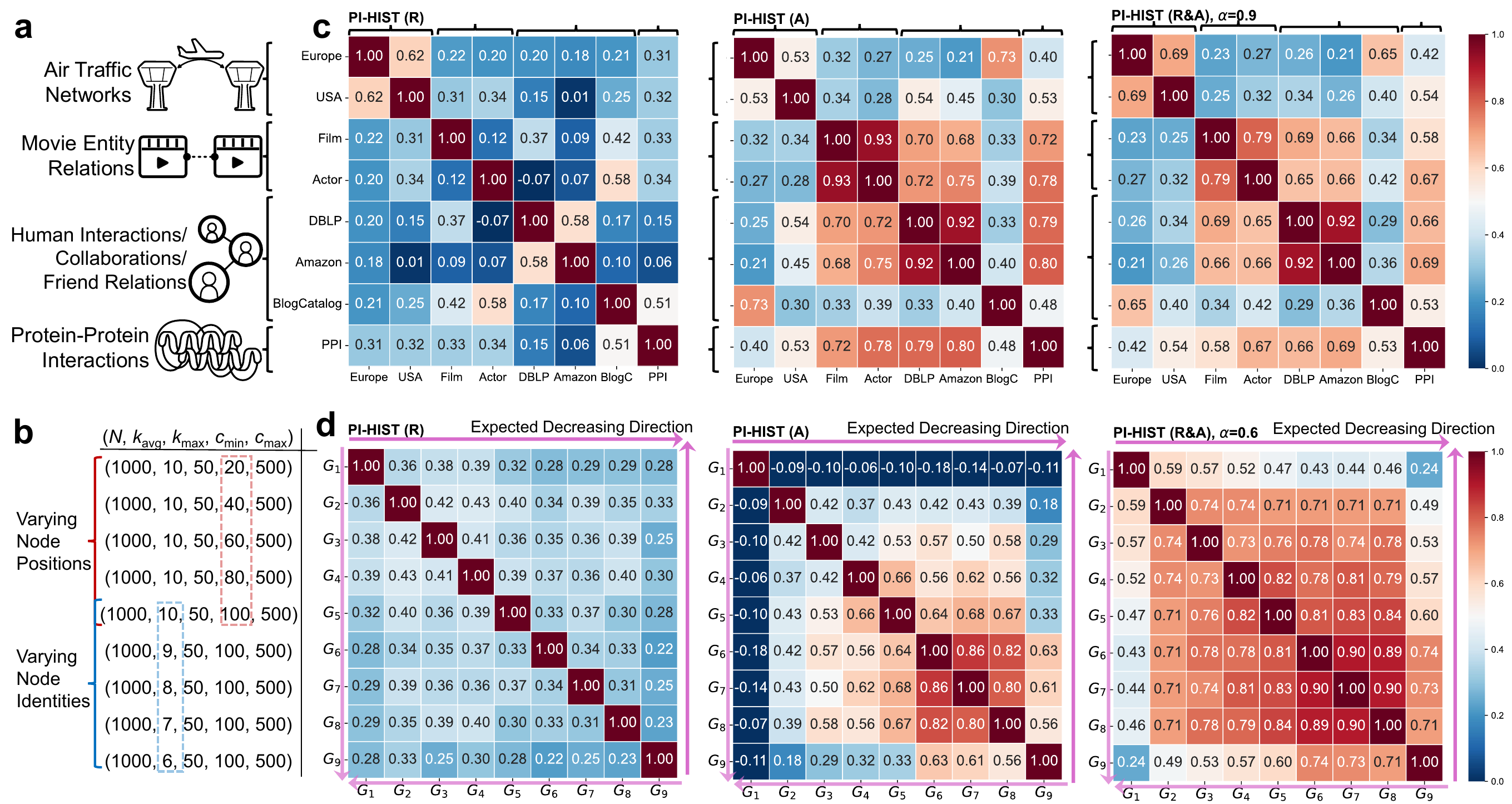}
    \caption{\textbf{Representative case studies for graph-level tasks}.
    \textbf{a}, The superfamily partition of real-world graph datasets (Fig.~\ref{Fig:NC_Eva}a) according to their scenarios.
    \textbf{b}, Parameter settings of the LFR synthetic benchmark, where $N$ denotes the number of nodes; $k_{\rm{avg}}$ and $k_{\rm{max}}$ are average and maximum degrees; $c_{\rm{min}}$ and $c_{\rm{max}}$ are minimum and maximum community sizes. $( G_1, G_2, \cdots, G_5)$ and $( G_5, G_6, \cdots, G_9)$ simulate gradual variations in node positions and identities, respectively.
    \textbf{c}-\textbf{d}, Correlation matrices of PI-HIST~(R), (A), and (R\&A) for superfamily identification on real-world and synthetic graphs. Purple arrows highlight directions along which correlation values are expected to decrease (i.e., away from the main diagonal) for results on synthetic graphs.
    Compared with PI-HIST~(R), PI-HIST~(A) generally exhibits clearer correlation patterns for several expected groupings, although a few cases captured by PI-HIST~(A) are less apparent than PI-HIST~(R).
    PI-HIST~(R\&A), with an appropriate $\alpha \in (0, 1)$, retains the strengths of both variants and shows a more consistent graph-level similarity structure.
    These observations suggest that fusing node positions and identities may provide complementary structural information for graph-level inference.
    Full results of PI-HIST~(R\&A) with the rest settings of $\alpha$ are detailed in Supplementary Fig.~\ref{Fig:GSI_Eva_PI_HIST}.
    Additional results for other baselines (BoD, BoHD, SSP, and CNS) are shown in Supplementary Fig.~\ref{Fig:GSI_Eva_Apx}.
    }
    \label{Fig:GSI_Eva}
\end{figure}

Given node-wise embeddings $\{ f(v_i) \}$ derived from the topology of a graph $G$, we obtain a graph-level embedding $g(G) := {\mathop{\rm MEAN}\nolimits} (\{ f(v_i) \})$ that simply averages representations of all nodes.
The superfamily identification on a set of graphs $\{ G \}$ can be evaluated by computing the correlation coefficient matrix of corresponding graph-level embeddings $\{ g(G) \}$.
Let $\{ f_{\rm{R}} (v_i) \}$ and $\{ f_{\rm{A}} (v_j) \}$ be node-wise embeddings from PI-HIST~(R) and (A). For each node $v_i$, PI-HIST~(R\&A) derives a fused embedding via $f_{\rm{R\&A}} (v_i) := \alpha f_{\rm{R}} (v_i) + (1 - \alpha) f_{\rm{A}} (v_i)$, which works best with the downstream computation of correlation matrix. The fused embedding of a graph $G$ is then obtained by $g_{\rm{R\&A}}(G) := {\gamma _{{\rm{norm}}}}({\mathop{\rm MEAN}\nolimits} (\{ f_{\rm{R\&A}}(v_i) \}))$, with ${\gamma _{{\rm{norm}}}} (\cdot)$ as an optional normalization operation (cf. `Methods' Section for details).

We first consider the $8$ real-world graphs used in node- and edge-level evaluations.
Following Milo et al.~\cite{milo2004superfamilies} and Dong et al.~\cite{dong2017structural}, we partition them into $4$ superfamilies according to their application scenarios (Supplementary Table~\ref{Tab:Data}) and rearrange their layout order based on this partition (Fig.~\ref{Fig:GSI_Eva}a) when visualizing a correlation matrix. 
Under this setting, entries within each main-diagonal block of a visualized matrix, which corresponds to a specific superfamily, are expected to have higher correlations than those across different blocks.

We further construct a controlled synthetic setting based on the Lancichinetti-Fortunato-Radicchi (LFR) benchmark \cite{lancichinetti2008benchmark}.
It uses a set of parameters to generate synthetic graphs and can mimic various topology patterns by adjusting these parameters, allowing us to systematically vary topological aspects related to node positions and identities.
Given that the two topology properties are closely related to community structures and degrees, we simulate gradual variations in node positions and identities by adjusting the minimum community size $c_{\min}$ and average degree $k_{\rm{avg}}$ in the LFR benchmark, respectively.
We generate $9$ synthetic graphs in total (Fig.~\ref{Fig:GSI_Eva}b), with $( G_1, \cdots, G_5)$ and $(G_5, \cdots, G_9)$ corresponding to progressively varying position- and identity-related structure, respectively.
Based on this layout order of graphs $( G_1, \cdots, G_9)$, entries in a visualized correlation matrix are expected to decrease as they move away from the main diagonal (purple arrows in Fig.~\ref{Fig:GSI_Eva}d).
A stronger diagonal trend therefore indicates better preservation of the underlying graph-level similarity.
Full details of experiment settings are elaborated in Supplementary Section~\ref{App:Set-Task}.

Similar to the edge-level evaluation, we tune $\alpha \in \{ 0.0, 0.1, \cdots, 1.0 \}$ for PI-HIST~(R\&A) and compare the best result with those of PI-HIST~(R) and (A).
Representative correlation matrices for the real-world and synthetic graphs are visualized in Fig.~\ref{Fig:GSI_Eva}c-d.
Full results of PI-HIST~(R\&A) under all settings of $\alpha$ are shown in Supplementary Fig.~\ref{Fig:GSI_Eva_PI_HIST}.
Following Dong et al.~\cite{dong2017structural}, we also compare PI-HIST with $4$ baselines that can derive graph-level features or statistics for the downstream correlation matrix computation.
The selected baselines include bag of degrees (BoD), bag of high-order degrees (BoHD), subgraph significance profile (SSP) \cite{milo2004superfamilies}, and common neighbor signature (CNS) \cite{dong2017structural}, with SSP and CNS as sophisticated methods that characterize graphs using statistics of motifs \cite{milo2002network} and structural diversity \cite{ugander2012structural}.
Additional results of these baselines are depicted in Supplementary Fig.~\ref{Fig:GSI_Eva_Apx}.

For case studies on real-world graphs (Fig.~\ref{Fig:GSI_Eva}c), PI-HIST~(A) exhibits relatively higher within-group correlations (dark red) for several expected superfamilies, such as \{Film, Actor\} and \{DBLP, Amazon\}, although some expected similarities are not fully recovered.
For instance, it may fail to group Europe and USA because the correlation between \{Europe, BlogCatalog\} is much larger than that between \{Europe, USA\}.
In contrast, PI-HIST~(R) shows a clearer correlation pattern for \{Europe, USA\}, despite its weaker diagonal correlation trend.
More importantly, PI-HIST~(R\&A) produces a more consistent block structure with the predefined grouping that preserves success cases of both PI-HIST variants while making PPI more distinguishable from other graphs, just as we anticipated.
Among the $4$ baselines (Supplementary Fig.~\ref{Fig:GSI_Eva_Apx}a), SSP also shows several within-group correlation patterns but is less consistent with the tentative grouping than PI-HIST~(R\&A).

The controlled synthetic experiment provides another qualitative illustration of how the two PI-HIST variants encode complementary structural information.
According to our simulation settings, PI-HIST~(R) and (A) are expected to have advantages in grouping $( G_1, \cdots, G_5 )$ and $( G_5, \cdots, G_{9} )$, respectively. Consistent with this expectation, the correlation matrix of PI-HIST~(A) (Fig.~\ref{Fig:GSI_Eva}d) shows that entries near the main-diagonal block induced by $(G_5, \cdots, G_9)$ generally decrease along the purple arrow's direction, although some failure cases remain near the block corresponding to $(G_1, \cdots, G_5)$.
Conversely, in PI-HIST~(R)'s correlation matrix, more success cases can be found near the main-diagonal block formed by $(G_1, \cdots, G_5)$, despite a weaker overall trend.
Furthermore, PI-HIST~(R\&A) leverages the advantages of both PI-HIST~(R) and (A). Almost all entries in the correlation matrix decrease gradually as they move away from the main diagonal.
The superfamily identification on such synthetic graphs is challenging. Only CNS out of the $4$ baselines (Supplementary Fig.~\ref{Fig:GSI_Eva_Apx}b) yields a correlation matrix with an overall trend matching that of PI-HIST~(R\&A). Compared with CNS, the correlation matrix of PI-HIST~(R\&A) exhibits a clearer diagonal correlation trend with a higher maximum correlation and fewer failure cases.

The most visually consistent correlation patterns of PI-HIST~(R\&A) in both the real-world and synthetic settings are generally obtained when $0 < \alpha < 1$, rather than at the two boundary cases $\alpha \in \{0, 1\}$ (Supplementary Fig.~\ref{Fig:GSI_Eva_PI_HIST}).
This observation, together with outcomes of node- and edge-level evaluations, further suggests that disentangling topology structures into position and identity embeddings encodes complementary information regarding graph topology.
An appropriate fusion of the two types of embeddings facilitates improved graph inference.
Overall, our graph-level case studies provide additional qualitative support for viewing position and identity embeddings as complementary channels of graph topology and for using their simple combination when downstream inference depends on both.

\section{Discussion}\label{Sec:Disc}
Graph embedding serves as a fundamental framework of modern graph inference models, exemplified by GNNs \cite{ju2024comprehensive,ju2025survey,zheng2026graph}.
Despite their effectiveness in various inference tasks, it remains difficult to disentangle the contributions of graph topology, feature processing, and parameter learning to the resulting embeddings.
This difficulty arises in part because graph topology contains multiple complementary properties, such as node positions and identities, whose relevance can vary across tasks.
Here, we address this challenge from an operator-level perspective.
PI-HIST probes RW- and AW-induced hierarchical structures with random features through a single training-free FFP. The resulting PI-HIST~(R) and (A) embeddings preferentially capture node positions and identities, respectively.
Across $7$ node-, edge-, and graph-level inference tasks on $8$ real-world and $9$ synthetic graphs, both PI-HIST variants achieve inference quality competitive to or even better than classic and recent embedding methods while generally requiring substantially less computation.
Moreover, combining the two types of embeddings improves the inference quality of several tasks, indicating that the two topology-induced operators expose complementary structural information.
These findings suggest that informative graph embeddings can emerge from carefully chosen topology-induced transformations before extensive gradient-based training is introduced.

The correlation between node position and identity has been discussed in prior research from different views. Srinivasan et al.~\cite{srinivasanequivalence} proved, using invariant theory, that identity and position embeddings share the same relationship as probability distributions and their samples.
Zhu et al.~\cite{zhu2021node} comprehensively evaluated $35$ combinations of node pair proximities, nonlinear filters, and dimension reduction strategies for graph embeddings. They showed that the choice of node proximity metric largely determines whether the derived embeddings capture node positions or identities.
Yan et al.~\cite{yan2024pacer} developed PaCEr, which first learns position embeddings from the RW-with-restart proximity distribution and then converts them to identity embeddings via a simple sorting-based transformation.
From a view of graph signal processing, Qin et al.~\cite{qin2025efficient,qin2025infraredgp} demonstrated that low- and high-pass filtering, which preserve the low- and high-frequency information of graph topology, can help yield position and identity embeddings, respectively.
Our work complements these studies by examining the same distinction through a different mechanism about stochastic-process-induced hierarchical structures, with RW and AW as examples.
While some embedding approaches \cite{jin2020gralsp,yan2024topological,mengirwe} also leverage RW and AW, they use these stochastic processes for feature augmentation or loss construction, which subsequently rely on parameter learning and may easily suffer from low efficiency and scalability.
In PI-HIST, RWs and AWs are not treated as sampling procedures or auxiliary inputs, but as topology-induced operators that expose different topology properties, which generally achieves a more favorable quality-efficiency trade-off.

In this article, we manually tune parameters and other optional settings of PI-HIST (cf. `Methods' section and Supplementary Section~\ref{App:Set-Param} for details), as there is no learning procedure.
The training-free formulation of PI-HIST isolates the information exposed by topology-induced operators from that introduced by feature learning and gradient-based optimization. It also provides a good starting point for investigating how limited parameter learning can improve the adaptability of topology-induced embeddings while preserving their computational efficiency.
As a natural extension of PI-HIST, we can simply let $\{ \varepsilon, \alpha \}$ be learnable parameters. This further allows PI-HIST to assign distinct aggregation weights $\varepsilon$ to different layers.
The selection of $L$, $\psi_{\rm act}(\cdot)$, and $\psi_{\rm norm} (\cdot)$ can also be formulated as a soft combination of all possible settings, with the combination weights learned by the attention mechanism.
Since real-world applications may have various requirements for inference quality and efficiency, a promising future direction is to enable PI-HIST to achieve a tunable trade-off between the two aspects. One possible strategy is to adaptively control the extent of training (e.g., adjusting the number of iterations), where more training may lead to better inference quality yet sacrifice a certain degree of efficiency.

To eliminate the uncertain influence from graph attributes \cite{newman2016structure,peel2017ground,qin2018adaptive,qin2021dual,wang2020gcn}, we consider the unsupervised position and identity embedding, where topology is the only available input source and attributes are unavailable.
This isolation is important for identifying the structural information exposed by RWs and AWs, but practical graph inference often involves additional node and edge attributes.
Integrating attributes into PI-HIST requires another systematic investigation of the underlying correlation between the given attributes and different topology properties.
An advanced extension is to adaptively weight attributes according to their correlation with graph topology.
The subset of attributes aligned with the topology property required by a downstream task can receive higher incorporation weights to improve inference quality, whereas inconsistent attributes should be downweighted to avoid potential quality degradation compared with using topology alone. This extension can also be implemented based on the attention mechanism.

Beyond the RW and AW considered in this article, many other graph processes and algorithms also induce latent hierarchical structures, such as tree structures hidden in graph coarsening \cite{joly2025taxonomy} and the Weisfeiler-Lehman test \cite{huang2021short}, which may capture useful graph properties.
Applying the PI-HIST principle to these topology-induced operators could provide a systematic way to compare the structural information exposed by different graph transformations.
Moreover, PI-HIST adopts the naive Euclidean space to encode embeddings passing through RW- and AW-induced hierarchical structures.
Compared with these Euclidean embeddings, it has been proved that recent advances in hyperbolic representation learning \cite{peng2021hyperbolic,sadat2026hyperbolic} can better preserve hierarchical structures with lower distortion and in lower-dimensional spaces.
Hence, extending PI-HIST to hyperbolic space may further improve inference quality with little additional computational overhead.

\section{Methods}\label{Sec:Meth}

\subsection{Problem statements and preliminaries}
This article considers the unsupervised position and identity embedding on undirected unweighted graphs.
In general, one can describe a graph as $G := (V, E)$, with $V:= \{v_1, v_2, \cdots, v_N\}$ and $E := \{ (v_i, v_j) | v_i, v_j \in V\}$ as the sets of nodes and edges.
To fully study pure topology properties of node position and identity while eliminating the uncertain influence from graph attributes \cite{newman2016structure,peel2017ground,qin2018adaptive,qin2021dual,wang2020gcn}, we assume that graph topology is the only available input and leave the integration of attributes as a significant future direction (`Discussion' section).

\textbf{Node Position}. Let $N_l(v_i)$ be the set of neighbors with exact distance $l$ from node $v_i$. Positions of nodes $\{ v_i \}$ can be characterized by the overlap of their $L$-hop neighbors $\cup _{l = 1}^L{N_l}({v_i})$ relevant to the community structure, where topology linkages are dense within each community but relatively loose between communities.
Nodes $\{ v_i\}$ with high overlaps of $L$-hop neighbors are expected to belong to the same community and thus have similar positions.
For the example graph in Fig.~\ref{Fig:PoC}a, members from the same community (blue or orange nodes) have close positions.

\textbf{Node Identity}.
Node identity describes the structural role that each node $v_i$ plays in the graph topology, which usually corresponds to a specific role or influence level in real-world network systems,  such as the club leader in Fig.~\ref{Fig:PoC}a, and is relevant to node centrality.
Such a role can be characterized by the (\romannumeral1) $L$-hop sub-graph $G_L (v_i)$ rooted at $v_i$ (also called $v_i$'s ego-net) induced by $ \cup _{l = 1}^L{N_l}({v_i})$ or (\romannumeral2) degree statistics ${\delta _{1:L}}: = ({\delta _1}({v_i}),{\delta _2}({v_i}), \cdots ,{\delta _L}({v_i}))$, with $\delta_l (v_i)$ as the degree count or distribution of $N_l(v_i)$.
Nodes $(v_i, v_j)$ with similar ego-nets $(G_L(v_i), G_L(v_j))$ or degree statistics $(\delta_{1:L}(v_i), \delta_{1:L}(v_j))$ are expected to play similar structural roles and thus have similar identities.
For instance, ordinary members in Fig.~\ref{Fig:PoC}a (nodes with red circles) have relatively low high-order degrees and are expected to have similar identities.

\textbf{Position and Identity Embedding}. We formulate graph embedding as a function $f: V \to {\mathbb{R}}^d$ that maps each node $v_i \in V$ to a low-dimensional vector (also called embedding) $f(v_i) \in {\mathbb{R}}^d$ ($d \ll N$), with certain graph properties preserved.
We define $f(v_i)$ as the position (or identity) embedding, if $\{ f(v_i) \}$ preserves node positions (or identities). Namely, nodes $(v_i, v_j)$ with similar positions (or identities) should have close embeddings $(f(v_i), f(v_j))$ in terms of high similarity or short distance in the embedding space.

The derived node-wise embeddings $\{ f(v_i) \}$ can be used as inputs to some downstream modules to support node-level inference tasks, including Logistic regression for node classification and $K$Means for node clustering. To enable these node-wise embeddings to support edge- and graph-level tasks, such as link prediction and graph superfamily identification, we further derive node-pair embeddings $\{ e(v_i, v_j)\}$ and a graph-wise embedding $g(G)$ based on $\{ f(v_i) \}$. Given a pair of nodes $(v_i, v_j)$, we follow a commonly used strategy \cite{yang2020homogeneous} that lets $e (v_i, v_j) := [f(v_i) || f(v_j)]$, with $||$ as the concatenation operation. To get the embedding of a graph $G = (V, E)$, we employ the standard mean pooling operation defined as $g(G) := {\mathop{\rm MEAN}\nolimits} \{ f(v_i) | v_i \in V\}$.

\textbf{RW and AW}. An RW with length $L$ is a sequence of nodes $w = (v^{(0)}, v^{(1)}, \cdots, v^{(L)})$, where $v^{(l)}$ represents the $l$-th node and $(v^{(l)}, v^{(l+1)}) \in E$. Let the index $l$ start from $0$. The RW $w$ can be mapped to an AW $(P_w(v^{(0)}), P_w(v^{(1)}), \cdots, P_w(v^{(L)}))$, with $P_w(v^{(l)})$ denoting the first occurrence index of $v^{(l)}$ in $w$.
As illustrated in Fig.~\ref{Fig:PoC}b, $(v_1, v_2, v_8, v_1)$ is a valid RW of the Zachary's karate club network, with $(0, 1, 2, 0)$ as its AW.

Particularly, nodes within the same community tend to share more neighbors and are more likely to be visited by RWs with a short length $L$, compared with those from different communities.
Micali et al. \cite{micali2016reconstructing} have proved that AW statistics estimated from sufficient RWs originating at each node $v_i$ can be utilized to recover $v_i$'s ego-net (Supplementary Section~\ref{App:Th-Sup}).
The anonymization operation allows RWs from nodes with long distances to share common AWs and thus similar AW statistics. In Fig.~\ref{Fig:PoC}a, $(0, 1, 2, 0)$ is the common AW of RWs $(v_1, v_3, v_8, v_1)$ and $(v_{34}, v_{24}, v_{30}, v_{34})$ sampled from $v_1$ and $v_{34}$, which belong to different communities.
In summary, RW and AW have the potential to encode node positions and identities, respectively.
Motivated by these findings, PI-HIST treats RWs and AWs as topology-induced operators that expose complementary information about node positions and identities.
It derives position and identity embeddings by propagating random features over RW- and AW-induced hierarchical structures (Fig.~\ref{Fig:PoC}b-c) using a single training-free FFP.
Further theoretical support for the effectiveness of PI-HIST is provided in Supplementary Section~\ref{App:Th-Sup}.

\subsection{PI-HIST~(R): position embedding inference}
Given a graph $G = (V, E)$, its topology structure can be described by an adjacency matrix ${\bf{A}} \in \{ 0, 1\}^{N \times N}$, where ${\bf{A}}_{i,j} = {\bf{A}}_{j,i} = 1$ if $(v_i, v_j) \in E$ and ${\bf{A}}_{i,j} = {\bf{A}}_{j,i} = 0$ otherwise; $N := |V|$ is the number of nodes. Let ${\bf{D}} := {\mathop{\rm diag}\nolimits} (k_1, k_2, \cdots, k_N)$ be the degree diagonal matrix, with $k_i = \sum\nolimits_j {{{\bf{A}}_{i,j}}}$ as the degree of node $v_i$.
$({\bf{D}}^{-1}{\bf{A}})$ is defined as the one-step transition matrix of RW on $G$. Accordingly, $({\bf{D}}^{-1}{\bf{A}})^{L}$ describes the $L$-step RW transition probabilities, where $[({\bf{D}}^{-1}{\bf{A}})^{L}]_{i,j}$ encodes the probability of visiting node $v_j$ via an RW originated at $v_i$ and with an exact length $L$.
In essence, $({\bf{D}}^{-1}{\bf{A}})^{L}$ can be interpreted as an $(L+1)$-layer hierarchical structure (Fig.~\ref{Fig:PoC}c), whose layers from bottom to top correspond to the reverse order of RWs. Adjacent layers are connected by weighted links. $({\bf{D}}^{-1}{\bf{A}})_{i,j}$ gives the weight from the $i$-th unit in layer $l$ to the $j$-th unit in layer $(l+1)$ for $0 \le l \le L$.

For simplicity, one can arrange embeddings $\{ f(v_i)\}$ of all nodes $V$ into a matrix form ${\bf{Z}} \in \mathbb{R}^{N \times d}$, where the $i$-th row ${\bf{Z}}_{i,:}$ corresponds to the embedding $f(v_i)$ of node $v_i$.
Given a random input ${\bf{\Theta }} \in \mathbb{R} ^{N \times d} \sim \mathcal{N} (0, 1/d)$, ${({{\bf{D}}^{ - 1}}{\bf{A}})^L}{\bf{\Theta }}$ is equivalent to propagating random features through the weighted hierarchical structure induced by RW with an exact length $L$.
To further leverage characteristics of RWs with length $l < L$, we extend ${({{\bf{D}}^{ - 1}}{\bf{A}})^L}{\bf{\Theta }}$ to a multi-layer architecture using skip connection and normalization from state-of-the-art graph learning approaches.
Let ${\bf{Z}}^{(l-1)} \in \mathbb{R}^{N \times d}$ and ${\bf{Z}}^{(l)} \in \mathbb{R}^{N \times d}$ be the input and output of the $l$-th layer, with ${\bf{Z}}^{(0)} = {\bf{\Theta }}$ denoting the random input.
The $l$-th layer is defined as
\begin{equation}\label{Eq:PI-HIST-R-FFP}
    {\bf{Z}}^{(l)} = {\varphi _{\mathop{\rm norm}\nolimits}} ( \varepsilon {\bf{Z}}^{(l-1)} + (1 - \varepsilon )({{\bf{D}}^{ - 1}} {\bf{A}}){\bf{Z}}^{(l-1)} ),
\end{equation}
where $\varepsilon \in [0, 1)$ is a tunable hyper-parameter ; ${\varphi _{\mathop{\rm norm}\nolimits}} (\cdot)$ represents an optional normalization operation.
Here, we consider the z-score normalization adapted from the batch normalization commonly used in state-of-the-art learning techniques. Given a matrix ${\bf{Q}} \in \mathbb{R}^{N \times d}$, ${\varphi _{\mathop{\rm norm}\nolimits}} ({\bf{Q}})$ normalizes each column of ${\bf{Q}}$ by letting ${{\bf{Q}}_{:,r}} \leftarrow [{{\bf{Q}}_{:,r}} - m({{\bf{Q}}_{:,r}})]/s({{\bf{Q}}_{:,r}})$, where $m(\cdot)$ and $s(\cdot)$ are functions to compute the mean and standard deviation over all entries in a vector or matrix.
It should be noted that the second component $(({\bf{D}}^{-1}{\bf{A}}){\bf{Z}}^{(l)})$ in (\ref{Eq:PI-HIST-R-FFP}) corresponds to the standard GNN mean aggregation \cite{hamilton2017inductive}. It takes the mean of vector representations from each node $v_i$'s neighbors $N_1(v_i)$ to get $v_i$'s output vector in the current layer. The first component ${\bf{Z}}^{(l)}$ introduces a skip connection to integrate outputs from the previous layer. $\varepsilon \in [0, 1)$ adjusts their relative contributions, which can also be interpreted as the extent to which each layer preserves the aggregated features $({\bf{D}}^{-1}{\bf{A}}){\bf{Z}}^{(l)}$.
For RW with a length $L$, ${\bf{Z}}^{(L)}$ is the output obtained by propagating random inputs through the RW-induced hierarchical structure.

Some simplified GNNs \cite{wu2019simplifying} apply nonlinear activation only to their output layers, with all nonlinearities of intermediate layers eliminated.
PI-HIST leverages this simple design, with all learnable parameters removed, to perform nonlinear activation and normalization on ${\bf{Z}}^{(L)}$.
Namely, PI-HIST~(R) derives its embeddings $\{ f(v_i)\}$ arranged in the matrix form ${\bf{Z}} \in \mathbb{R}^{N \times d}$ via
\begin{equation}\label{Eq:PI-HIST-R-NA}
    {\bf{Z}} = {\psi _{\mathop{\rm norm}\nolimits}}({\psi _{\mathop{\rm act}\nolimits}}({{\bf{Z}}^{(L)}})),
\end{equation}
where ${\psi _{\mathop{\rm act}\nolimits}} (\cdot)$ denotes an optional nonlinear activation function (e.g., tanh, sigmoid, and ReLU); ${\psi _{\mathop{\rm norm}\nolimits}} (\cdot)$ is another optional normalization operation.
Hence, in addition to the interpretation of a weighted hierarchical structure, PI-HIST~(R)'s embedding derivation is also equivalent to propagating random features through an untrained GNN with standard mean aggregation, skip connections, and nonlinear activation.

We summarize PI-HIST~(R)'s overall embedding inference procedure in Supplementary Algorithm~\ref{Alg:PI-HIST-P}. Let $M := |E|$ be the number of edges in the input graph $G$. As real-world large graphs are usually sparse, one can assume that $L < d \ll N < M$.
The complexity of propagating random features through one layer is no more than $O ((M+N)d)$ using the efficient sparse-dense matrix multiplication. Hence, the overall complexity of PI-HIST~(R) is within $O((M+N)dL) \approx O(N+M)$.

\subsection{PI-HIST~(A): identity embedding inference}

Different from RW, there are currently no closed-form solutions to directly obtain AW statistics.
We develop an efficient algorithm to estimate AW statistics via RW sampling and extract AW-induced hierarchical structure from such statistics. This algorithm contains six steps of RW sampling, AW mapping, AW indexing, AW counting, statistic gathering, and hierarchical structure extraction.

We first sample $n$ RWs starting from each node $v_i$ with length $L$ (RW sampling) and map them to corresponding AWs (AW mapping).
Given a walk length $L$, the union of all candidate AWs can be encoded by a tree $\mathcal{T}_L$ with height $(L+1)$, where each path from the root to a leaf represents a unique AW (cf. Supplementary Fig.~\ref{Fig:AW_EST}a for examples).
For efficient estimation, we map each observed AW $\omega$ to an index $I[\omega]$ based on $\mathcal{T}_L$'s encoding using dynamic programming (AW indexing).
One can estimate the frequency of each AW $\omega$ for node $v_i$ by counting instances of the node-index pair $(v_i, I[\omega])$ derived from RWs originating at $v_i$ (AW counting).
For all nodes in a large-scale graph, this procedure of sampling and counting is usually performed in multiple batches, with each run processing only one batch.
An additional step is required to combine results from all runs and obtain the complete AW statistics (statistics gathering).
Finally, we extract a hierarchical structure $\mathcal{H} (v_i)$ for each node $v_i$ from its estimated AW statistics (hierarchical structural extraction).
Most steps of the aforementioned algorithm are implemented with GPU-friendly operations and thus benefit from GPU acceleration, as is the case for most modern graph inference models, exemplified by GNNs.
Full technical details about this algorithm are elaborated in Supplementary Section~\ref{App:Meth-PI-HIST-A}.

For an arbitrary AW $\omega$ with length $L$, such as $\omega \in \{ (0,0,0),(0,0,1), \cdots ,(0,1,2)\}$ for $L=2$, the $l$-th-to-last element in $\omega$ can only take a value in $\{ 0, 1, \cdots, L+1-l\}$.
Hence, the hierarchical structure $\mathcal{H} (v_i)$ of node $v_i$ has $(L+1)$ layers, with the $l$-th layer (from bottom to top) containing units $\{ 0, \cdots, L+1-l\}$. 
There are also weighted links between adjacent layers, which are derived from the estimated AW statistics of $v_i$.
For simplicity, we denote the $s$-th unit in the $l$-th layer of $\mathcal{H} (v_i)$ as $h(t, l; v_i)$ (cf. Supplementary Fig.~\ref{Fig:AW_EST}c for examples of $\mathcal{H} (v_i)$).
Similar to PI-HIST~(R), PI-HIST~(A) derives the embedding $f(v_i)$ of $v_i$ via just one FFP through $\mathcal{H} (v_i)$ with random features as the input.
We extend this operation to a group of nodes using tensor computations, which enables the simultaneous FFP for all nodes.
A tensor ${\bf{B}}^{(l)} \in \mathbb{R}^{N \times L \times L}$ is introduced to describe weighted links between the $l$-th and $(l+1)$-th layers, where ${\bf{B}}^{(l)}_{i,t,p}$ represents the weight between $h(t, l; v_i)$ and $h(p, l+1; v_i)$. ${\bf{B}}^{(l)}_{i,t,p} = 0$ means that there is no link between corresponding units.
Let ${\bf{U}}^{(l-1)} \in \mathbb{R}^{N \times L \times d}$ and ${\bf{U}}^{(l)} \in \mathbb{R}^{N \times L \times d}$ be the (tensor) input and output of the $l$-th layer, with ${\bf{U}}^{(0)} = {\bf{\Theta}} \in {\mathbb{R}}^{N \times L \times d} \sim \mathcal{N} (0, 1/d)$.
The FFP of the $l$-th layer is defined as
\begin{equation}\label{Eq:PI-HIST-A-FPP}
    {{\bf{U}}^{(l)}} = {\varphi _{\mathop{\rm norm}\nolimits}} (\varepsilon {{\bf{U}}^{(l-1)}} + (1 - \varepsilon ){{\bf{B}}^{(l)}}{{\bf{U}}^{(l-1)}}),
\end{equation}
where ${{\bf{B}}^{(l)}}{{\bf{U}}^{(l-1)}}$ applies the tensor multiplication with ${({{\bf{B}}^{(l)}}{{\bf{U}}^{(l-1)}})_{i,l,r}}: = \sum\nolimits_{t = 1}^L {{\bf{B}}_{i,l,t}^{(l)}{\bf{U}}_{i,t,r}^{(l-1)}}$;
$\varepsilon \in [0, 1)$ is a hyper-parameter balancing contributions between the aggregated features and residual features from the previous layer; ${\varphi _{\mathop{\rm norm}\nolimits}} (\cdot)$ is an optional normalization operation.
We extend ${\varphi _{\mathop{\rm norm}\nolimits}} (\cdot)$ in (\ref{Eq:PI-HIST-R-FFP}) to the z-score normalization on a tensor ${\bf{Q}} \in \mathbb{R}^{N \times L \times d}$ over all nodes.
As there are $(L+2-l)$ units in the $l$-th layer of an AW-induced hierarchical structure (cf. Supplementary Fig.~\ref{Fig:AW_EST}c for examples), $\varphi_{\rm norm} ({\bf{Q}})$ conducts the normalization by setting ${{\bf{Q}}_{:,t,r}} \leftarrow [{{\bf{Q}}_{:,t,r}} - m({{\bf{Q}}_{:,t,r}})]/s({{\bf{Q}}_{:,t,r}})$ for $1 \le t \le (L+2-l)$ and $1 \le r \le d$.

Note that the last layer of $\mathcal{H} (v_i)$ only contains one unit. We can simply obtain node-wise representations from ${\bf{U}}^{(L)}_{:,1,:} \in \mathbb{R}^{N \times d}$. These embeddings are further combined with the same (optional) nonlinear activation and normalization in PI-HIST~(R).
Therefore, PI-HIST~(A) finally derive its embeddings $\{ f (v_i) \}$ for all nodes arranged in a matrix ${\bf{Z}} \in \mathbb{R}^{N \times d}$ via
\begin{equation}
    {\bf{Z}} = {\psi _{\mathop{\rm norm}\nolimits}}({\psi _{\mathop{\rm act}\nolimits}} ({\bf{U}}_{:,1,:}^{(L)}) ),
\end{equation}
where ${\psi _{\mathop{\rm act}\nolimits}} (\cdot)$ and ${\psi _{\mathop{\rm norm}\nolimits}} (\cdot)$ share the same definitions as those in (\ref{Eq:PI-HIST-R-NA}).

The overall embedding inference procedure of PI-HIST~(A) is summarized in Supplementary Algorithm~\ref{Alg:PI-HIST-A}.
Using the proposed algorithm for AW statistic estimation and hierarchical structure extraction, we can obtain $\{ {\bf{B}}^{(0)}, \cdots, {\bf{B}}^{L}\}$ that describe $\{ \mathcal{H}(v_i)|v_i \in V\}$ within the complexity of $O(NLn + N{\tilde \eta})$, where $\tilde \eta$ is the number observed AWs during estimation. Further details about the complexity analysis are provided in Supplementary Section~\ref{App:Meth-PI-HIST-A}.
Since there are $(L+1)$ layers in $\mathcal{H}(v_i)$ and at most $(L+1)$ units in each layer, the complexity of one FFP through $\{ \mathcal{H}(v_i) \}$ is $O (NL^2d)$.
For large sparse graphs, we can assume that $L < d \ll N, \tilde \eta$.
The overall complexity of PI-HIST~(A) is $O(N(L^2d + Ln + \tilde \eta)) \approx O(N(1 + n + \tilde \eta))$.

\subsection{Combination of PI-HIST~(R) and PI-HIST~(A)}
In addition to using PI-HIST to disentangle graph topology into position and identity embeddings, we also explore whether combining these two types of embeddings can improve graph inference.
Let $f_R(v_i)$ and $f_A(v_i)$ denote embeddings of node $v_i$ produced by PI-HIST~(R) and (A), respectively.
To construct the fused embedding $f_{\rm R\&A} (v_i)$ of PI-HIST~(R\&A) for $v_i$, two example combination strategies are considered, including the weighted sum ${f_{{\rm{R\& A}}}}({v_i}): = \alpha {f_R}({v_i}) + (1 - \alpha ){f_{\rm{A}}}({v_i})$ and weighted concatenation ${f_{{\rm{R\& A}}}}({v_i}): = [\alpha {f_R}({v_i})||(1 - \alpha ){f_{\rm{A}}}({v_i})]$, with $\alpha$ as a tunable hyper-parameter adjusting relative contributions between the two types of embeddings.
Our experiments suggest that different downstream tasks favor different combination strategies. For instance, weighted concatenation and sum work better for the two edge-level tasks and graph-level superfamily identification, respectively.

\subsection{Model Configurations and Parameter Settings}

The two PI-HIST variants and their combination involve several hyper-parameters $\{ \varepsilon, L, \alpha \}$ and optional settings $\{ \varphi_{\rm act}(\cdot), \varphi_{\rm norm} (\cdot), \gamma_{\rm norm} (\cdot) \}$ that can be selected by using the same tuning strategy as other unsupervised embedding approaches, including baselines in our experiments (cf. Supplementary Section~\ref{App:Set-Param} for details of this strategy).
As outlined in the `Discussion' section, we believe that these settings can be automatically optimized by simply extending PI-HIST to a limited learning setting, which potentially improves the inference quality while introducing additional computational overhead.
We leave this extension as a significant future direction, treating PI-HIST as a good start.

In our experiments, $L$ is selected from $\{ 5, 6, \cdots, 20\}$ and $\{ 5, 6, \cdots, 9\}$ for PI-HIST~(R) and (A), respectively.
 $\varepsilon$ is tuned over $\{ 0.0, 0.1, \cdots, 0.9 \}$ for both variants.
We comprehensively investigate the effects of different activation and normalization choices.
Specifically, we set $\psi_{\rm act} (\cdot)$ to $\emptyset$ (i.e., no activation), ReLU, tanh, sigmoid, or exp. Moreover, $\psi_{\rm norm} (\cdot)$ is set to $\emptyset$ (i.e., no normalization), column-wise z-score normalization, or row-wise $l_2$-normalization on embeddings arranged in a matrix ${\bf{Z}} \in \mathbb{R}^{N \times d}$. For column-wise z-score normalization, $\psi_{\rm norm} (\cdot)$ adopts the same definition as $\varphi_{\rm norm} (\cdot)$ in (\ref{Eq:PI-HIST-R-FFP}).
For each row ${\bf{Q}}_{i, :}$ of a matrix ${\bf{Q}}$, row-wise $l_2$-normalization updates it as ${\bf{Q}}_{i, :} \leftarrow {\bf{Q}}_{i, :}/ |{\bf{Q}}_{i, :}|_2$.
Ablation studies (Supplementary Fig.~\ref{Fig:PI-HIST-R-ppi-p}-\ref{Fig:PI-HIST-A-ppi-p}) show that applying $\psi_{\rm act} (\cdot)$ and $\psi_{\rm norm} (\cdot)$ improves the embedding quality of PI-HIST, where different tasks and datasets favor distinct settings.

For PI-HIST~(R\&A), we first determine the optimal settings of $\{ \varepsilon, L, \psi_{\rm act} (\cdot), \psi_{\rm norm} (\cdot) \}$ inherited from PI-HIST~(R) and (A), after which $\alpha$ and $\gamma_{\rm norm} (\cdot)$ are selected.
Specifically, we tune $\alpha \in \{ 0.0, 0.1, \cdots, 1.0\}$ and set $\gamma_{\rm norm} (\cdot)$ to $\emptyset$ (i.e., no normalization), column-wise z-score normalization, row-wise $l_2$-normalization, or row-wise z-score normalization.
Similar to the column-wise z-score normalization, such as $\varphi_{\rm norm} (\cdot)$ in (\ref{Eq:PI-HIST-R-FFP}), the row-wise z-score version applies the same operation independently to each row ${\bf{Q}}_{i,:}$ of a matrix ${\bf{Q}}$ via ${{\bf{Q}}_{i,:}} \leftarrow [{{\bf{Q}}_{i,:}} - m({{\bf{Q}}_{i,:}})]/s({{\bf{Q}}_{i,:}})$.
Detailed settings for all experimental evaluation cases are summarized in Supplementary Tables~\ref{Tab:Param} and \ref{Tab:Param-RA}.

\backmatter

\bmhead{Data Availability} We provide the sources of all the real-world graph datasets in Supplementary Section~\ref{App:Set-Data}. The preprocessed datasets of both real-world and synthetic graphs used in this research are also available via GitHub at \url{https://github.com/KuroginQin/PI_HIST}.

\bmhead{Code Availability} The code of this research for experimental evaluation and result visualization is available via GitHub at \url{https://github.com/KuroginQin/PI_HIST}. We provide the sources of official implementations of baseline methods in Supplementary Section~\ref{App:Set-Base}.

\bmhead{Acknowledgements}
This research was supported by Major Key Project of PCL (PCL2025A03), National Natural Science Foundation of China (62388101, T2588101), Beijing Natural Science Foundation (Z230001), National Science and Technology Major Project (2024ZD01NL00103), and Interdisciplinary Frontier Research Project of PCL (2025QYB015).




\begin{appendices}

\clearpage
\begin{figure}[p!]
    \centering
    \includegraphics[width=0.95\linewidth,trim=0 0 0 0,clip]{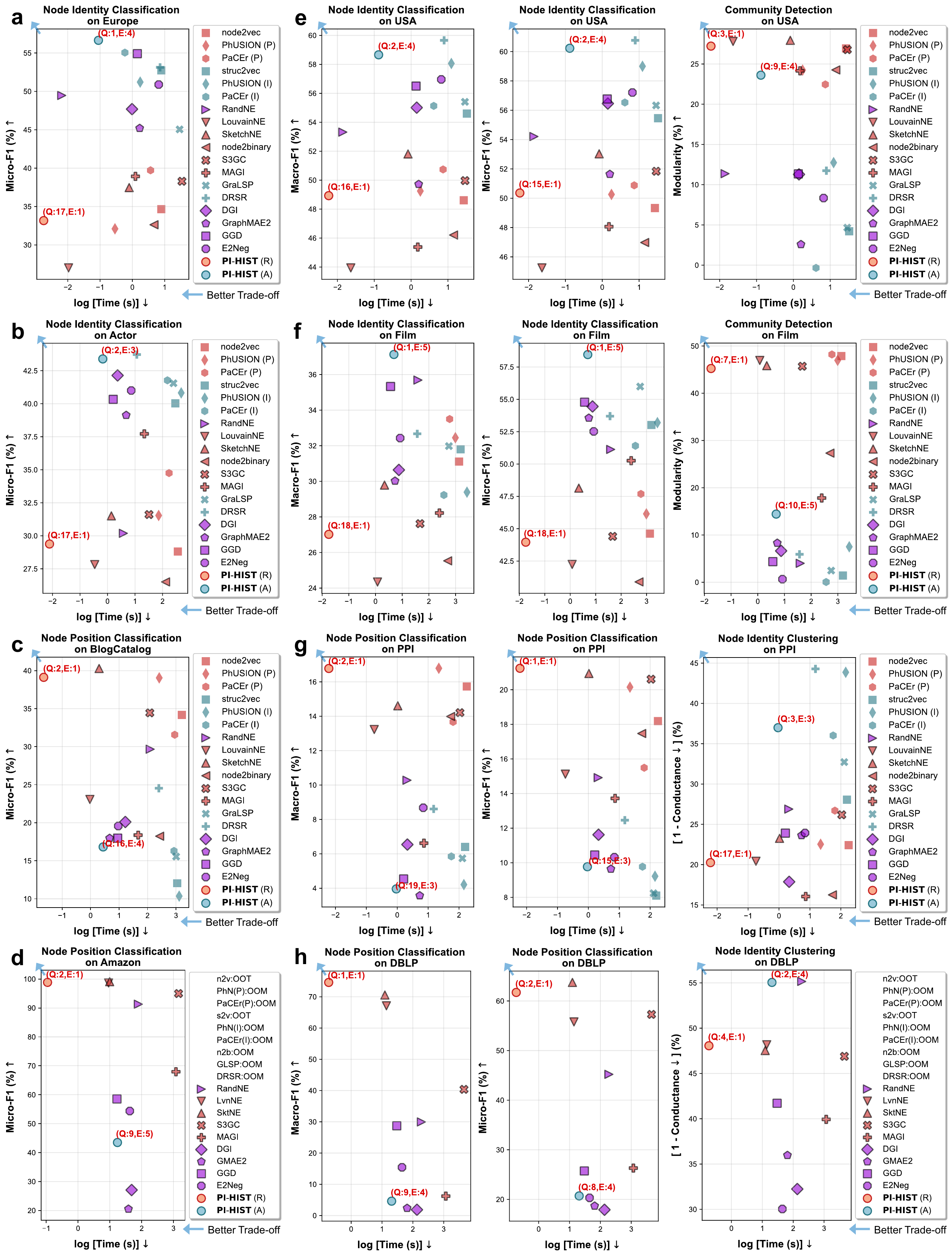}
    \caption{
	\textbf{Additional evaluation results of node-level tasks}.
    \textbf{a}-\textbf{d}, Additional evaluations on Europe, Actor, BlogCatalog, and Amazon using node identity and position classification (micro~F1$\uparrow$) to assess identity and position embeddings, respectively.
    \textbf{e}-\textbf{f}, Evaluations on USA and Film with position ground-truth using node identity classification (macro~F1$\uparrow$ and micro~F1$\uparrow$) and community detection (modularity$\uparrow$) to assess embedding quality.
    \textbf{g}-\textbf{h}, Evaluations on PPI and DBLP with position ground-truth using node position classification (macro~F1$\uparrow$ and micro~F1$\uparrow$) and node identity clustering (conductance$\downarrow$) for quality assessment.
    The logarithm of inference time$\downarrow$ (sec) measures efficiency of all cases. Q and E denote the quality and efficiency rankings of PI-HIST, respectively. Blue arrows indicate directions toward improved quality–efficiency trade-offs.
    }
    \label{Fig:NC_Eva_Apx}
\end{figure}
\clearpage

\begin{figure}[p!]
    \centering
    \includegraphics[width=0.85\linewidth,trim=0 0 0 0,clip]{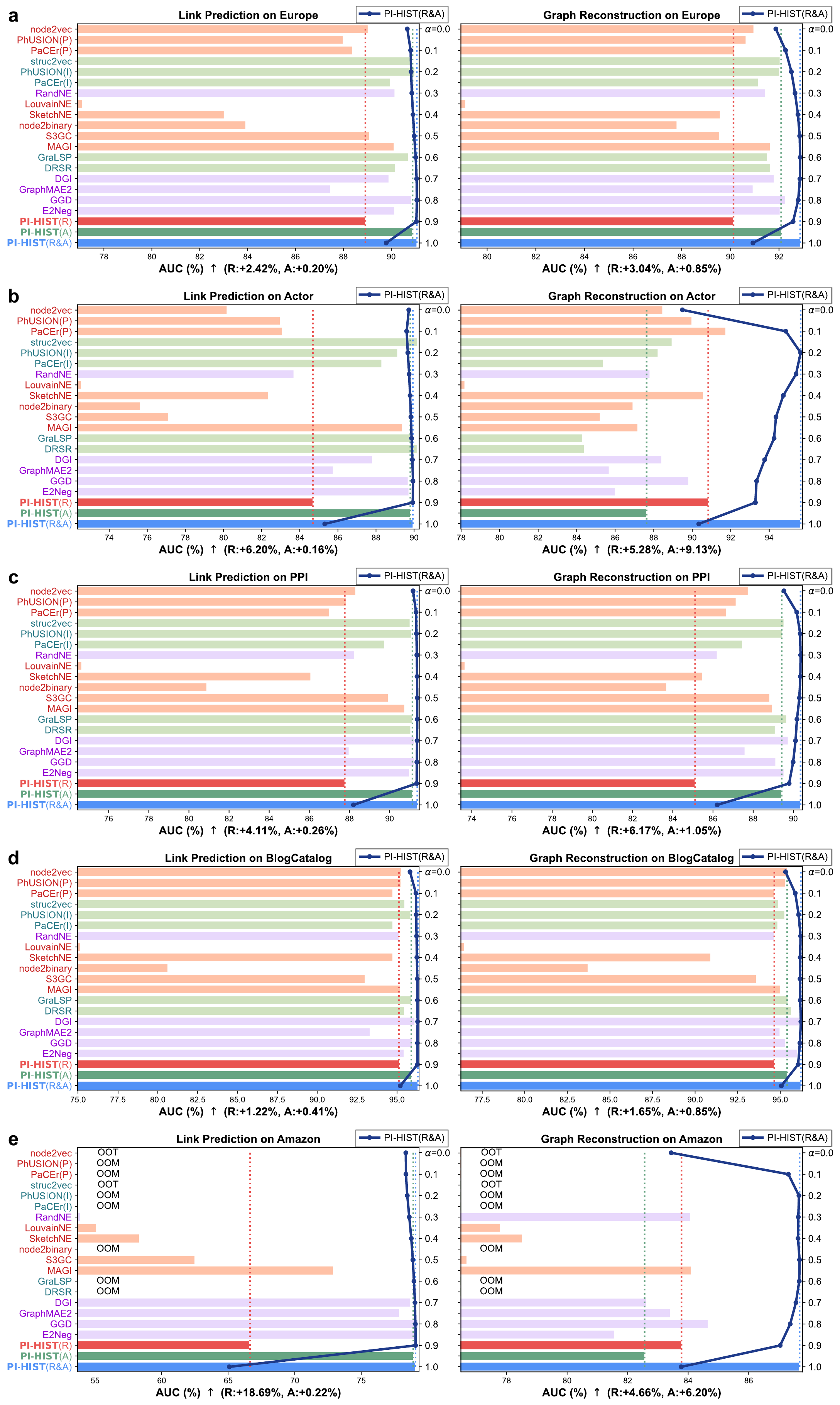}
    \caption{
	\textbf{Additional evaluation results of edge-level tasks}.
    \textbf{a}-\textbf{e}, Results of link prediction (AUC$\uparrow$) and graph reconstruction (AUC$\uparrow$) on Europe, Actor, PPI, BlogCatalog, and Amazon.
    R and A indicate improvements of PI-HIST (R\&A) over PI-HIST (R) and PI-HIST (A), respectively.
    Red and green labels/bars denote position and identity embedding approaches, respectively. It is uncertain for baselines with purple labels/bars which property they capture.
    }
    \label{Fig:GRLP_Eva_Apx}
\end{figure}
\clearpage

\begin{figure}[p!]
    \centering
    \includegraphics[width=1.0\linewidth,trim=0 0 0 0,clip]{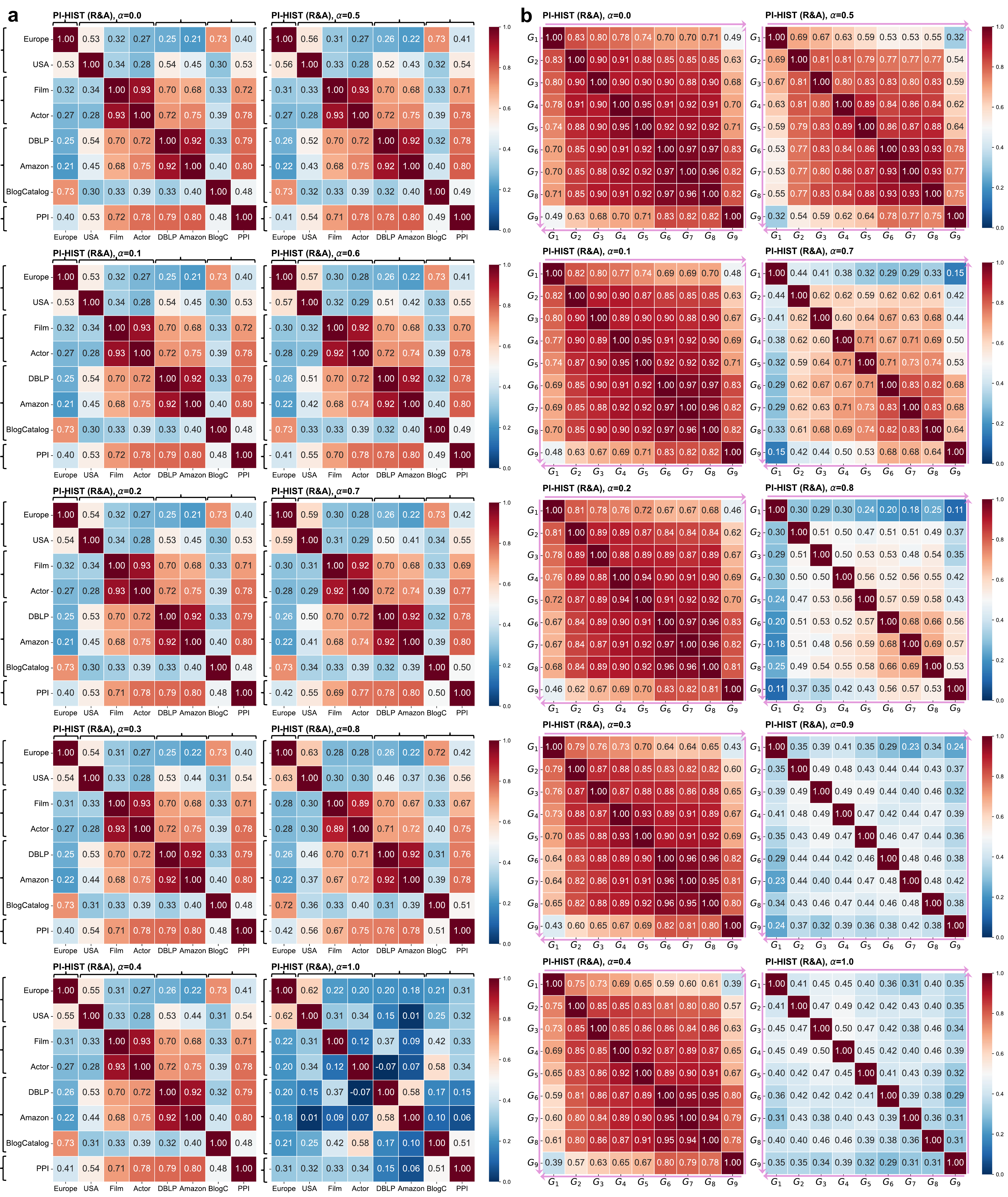}
    \caption{
	\textbf{Full case studies of PI-HIST~(R\&A) for graph-level tasks}.
    \textbf{a}, Graph superfamily identification on real-world graphs with PI-HIST~(R\&A) for $\alpha \in \{ 0.0, \cdots, 0.8, 1.0\}$ complementing Fig.~\ref{Fig:GSI_Eva}a ($\alpha=0.9$).
    \textbf{b}, Graph superfamily identification on synthetic graphs with PI-HIST~(R\&A) for $\alpha \in \{ 0.0, \cdots, 0.5, 0.7, \cdots, 1.0\}$ complementing Fig.~\ref{Fig:GSI_Eva}b ($\alpha=0.6$).
    }
    \label{Fig:GSI_Eva_PI_HIST}
\end{figure}
\clearpage

\begin{figure}[p!]
    \centering
    \includegraphics[width=1.0\linewidth,trim=0 0 0 0,clip]{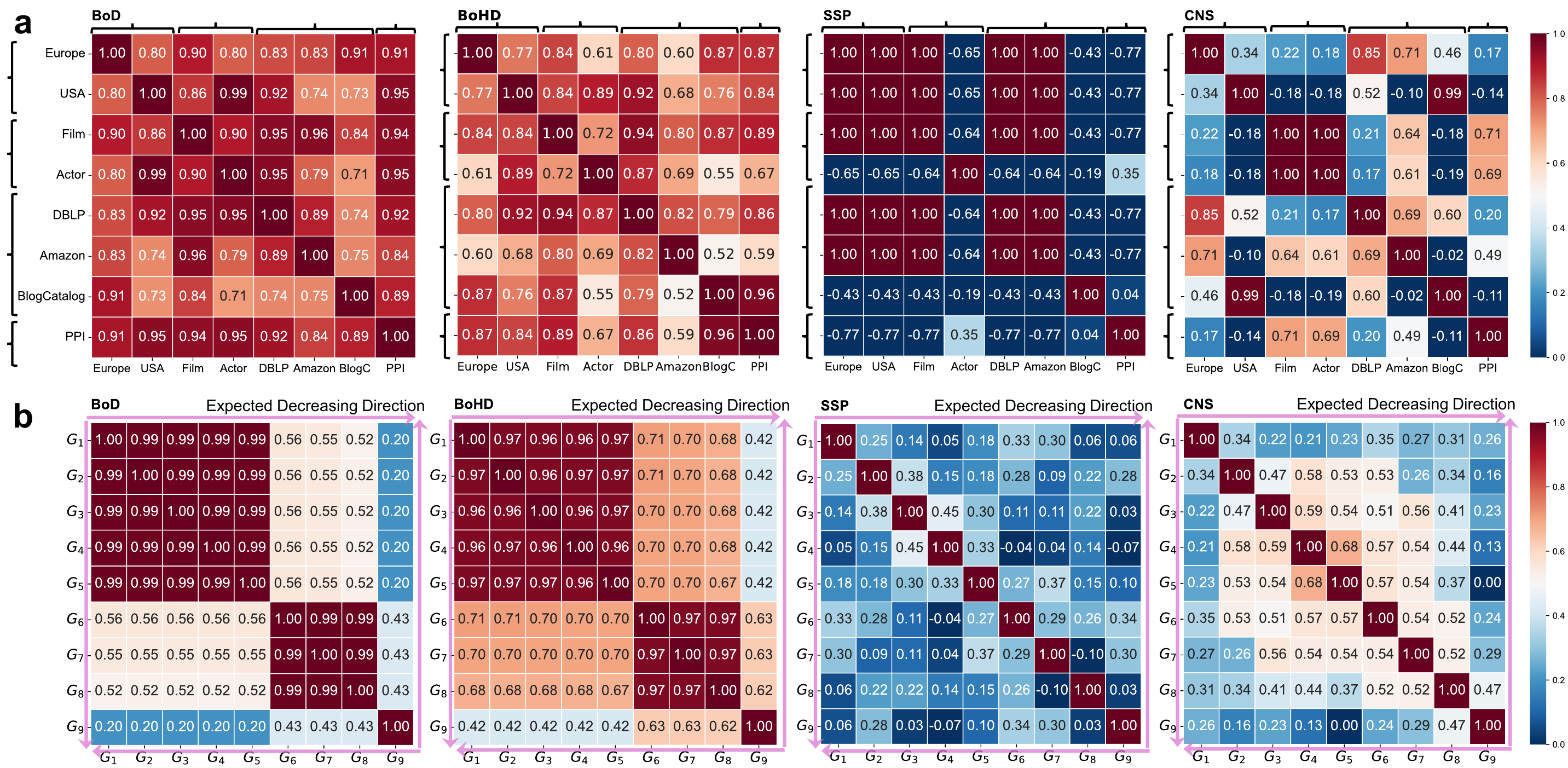}
    \caption{
	\textbf{Additional case studies of baselines for graph-level tasks}.
    \textbf{a}-\textbf{b}, Correlation matrices of BoD, BoHD, SSP, and CNS for superfamily identification on real-world and synthetic graphs. Purple arrows indicate directions along which correlation values are expected to decrease (i.e., away from the main diagonal) for results on synthetic graphs.
    }
    \label{Fig:GSI_Eva_Apx}
\end{figure}
\clearpage

\begin{figure}[p!]
    \centering
    \includegraphics[width=1.0\linewidth,trim=0 0 0 0,clip]{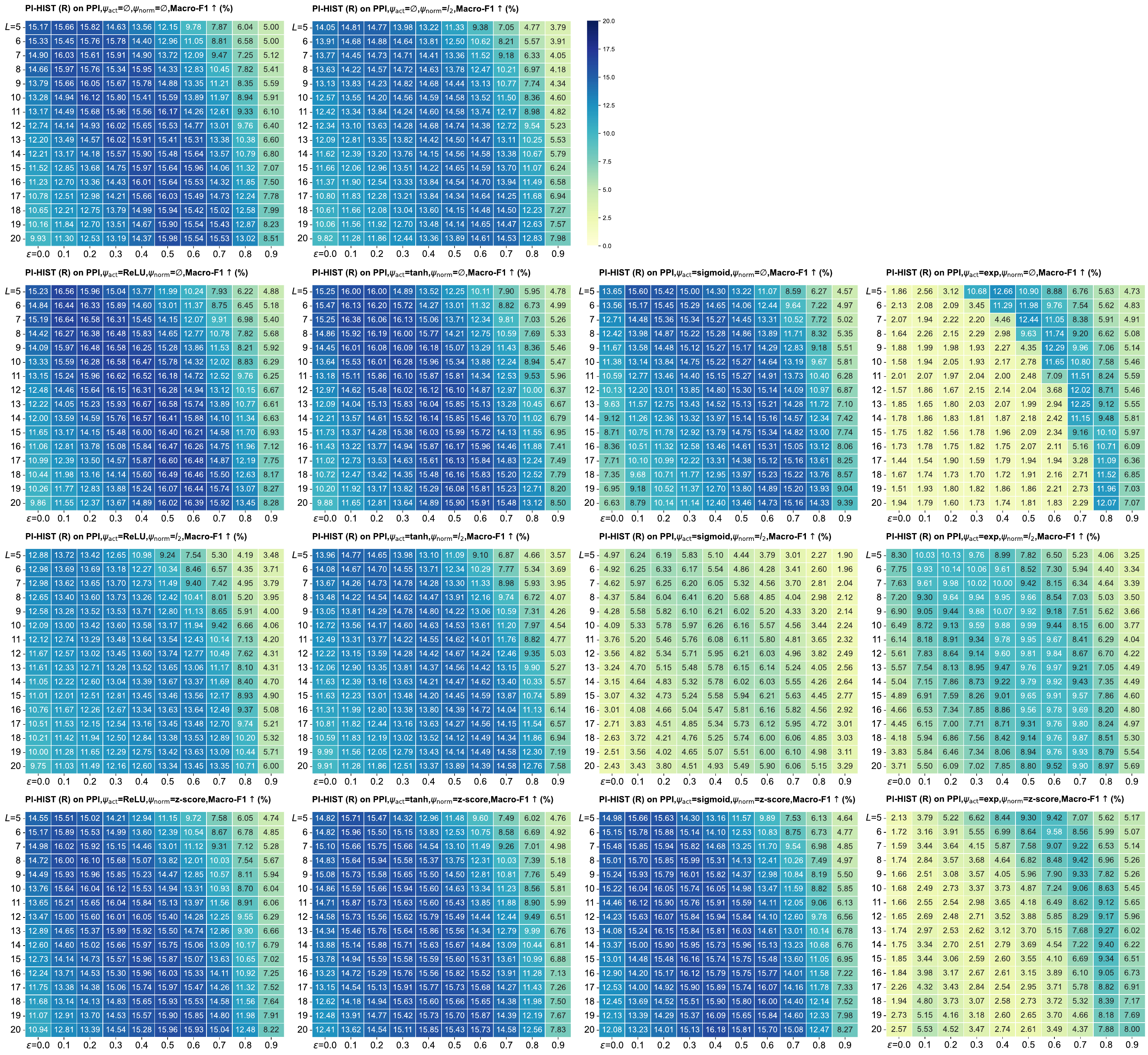}
    \caption{
	\textbf{Ablation study and parameter analysis of PI-HIST~(R) on PPI}. Representative results of node position classification (macro F1$\uparrow$, \%) on the validation set, with $L \in \{ 5, 6, \cdots, 20\}$ ($y$-axis) and $\varepsilon \in \{ 0.0, 0.1, \cdots, 0.9\}$ ($x$-axis). The optional activation function $\psi_{\rm act} (\cdot)$ is set to $\emptyset$ (no activation), ReLU, sigmoid, tanh, or exp. The optional normalization $\psi_{\rm norm} (\cdot)$ is set to $\emptyset$ (no activation), row-wise $l_2$-normalization, and column-wise z-score normalization.
    }
    \label{Fig:PI-HIST-R-ppi-p}
\end{figure}
\clearpage

\begin{figure}[p!]
    \centering
    \includegraphics[width=1.0\linewidth,trim=0 0 0 0,clip]{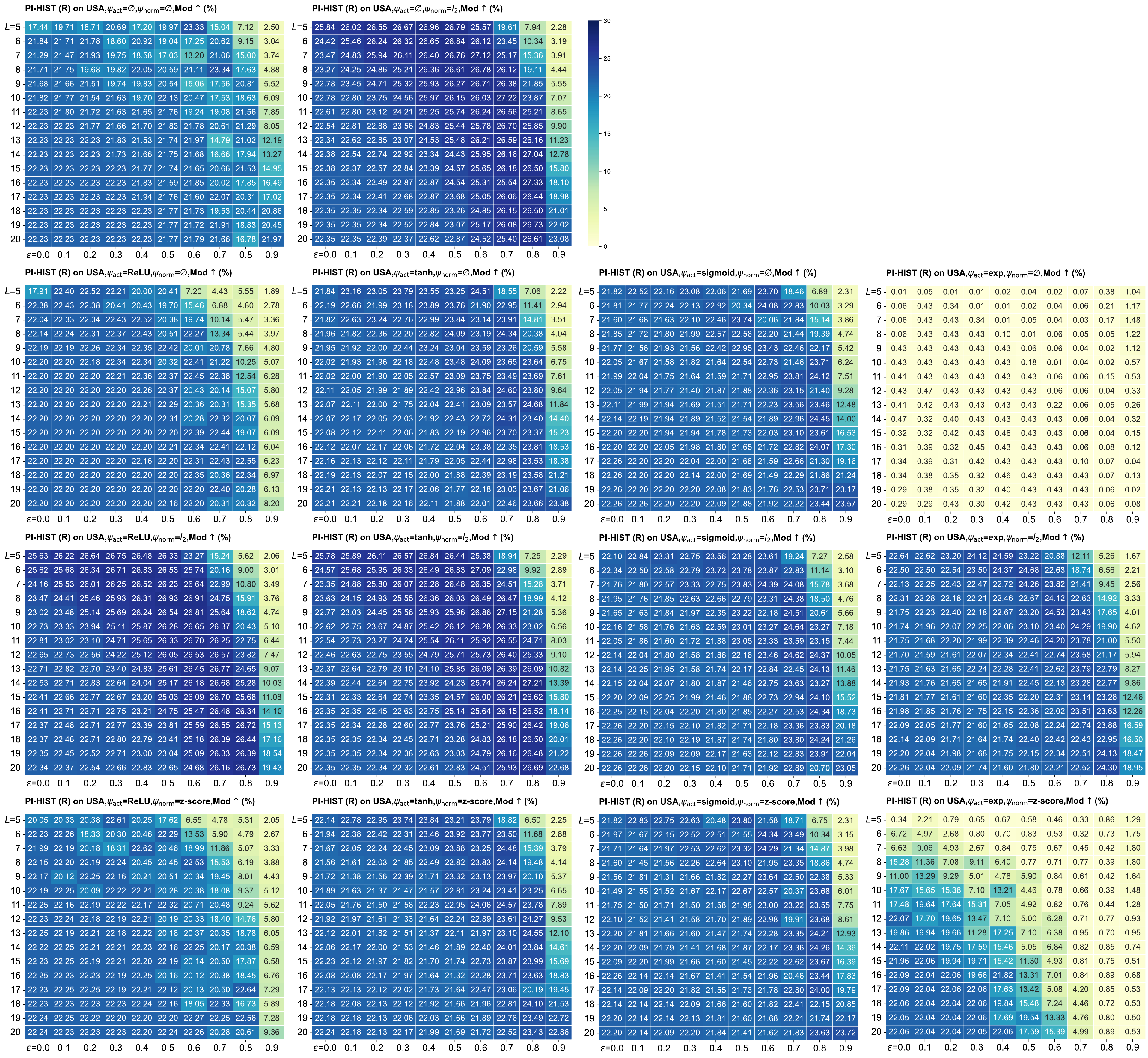}
    \caption{
	\textbf{Ablation study and parameter analysis of PI-HIST~(R) on USA}. Representative results of community detection (modularity$\uparrow$, \%), with $L \in \{ 5, 6, \cdots, 20\}$ ($y$-axis) and $\varepsilon \in \{ 0.0, 0.1, \cdots, 0.9\}$ ($x$-axis). The optional activation $\psi_{\rm act} (\cdot)$ is set to $\emptyset$ (no activation), ReLU, sigmoid, tanh, or exp. The optional normalization $\psi_{\rm norm} (\cdot)$ is set to $\emptyset$ (no activation), row-wise $l_2$-normalization, and column-wise z-score normalization.
    }
    \label{Fig:PI-HIST-R-usa-p}
\end{figure}
\clearpage

\begin{figure}[p!]
    \centering
    \includegraphics[width=1.0\linewidth,trim=0 0 0 0,clip]{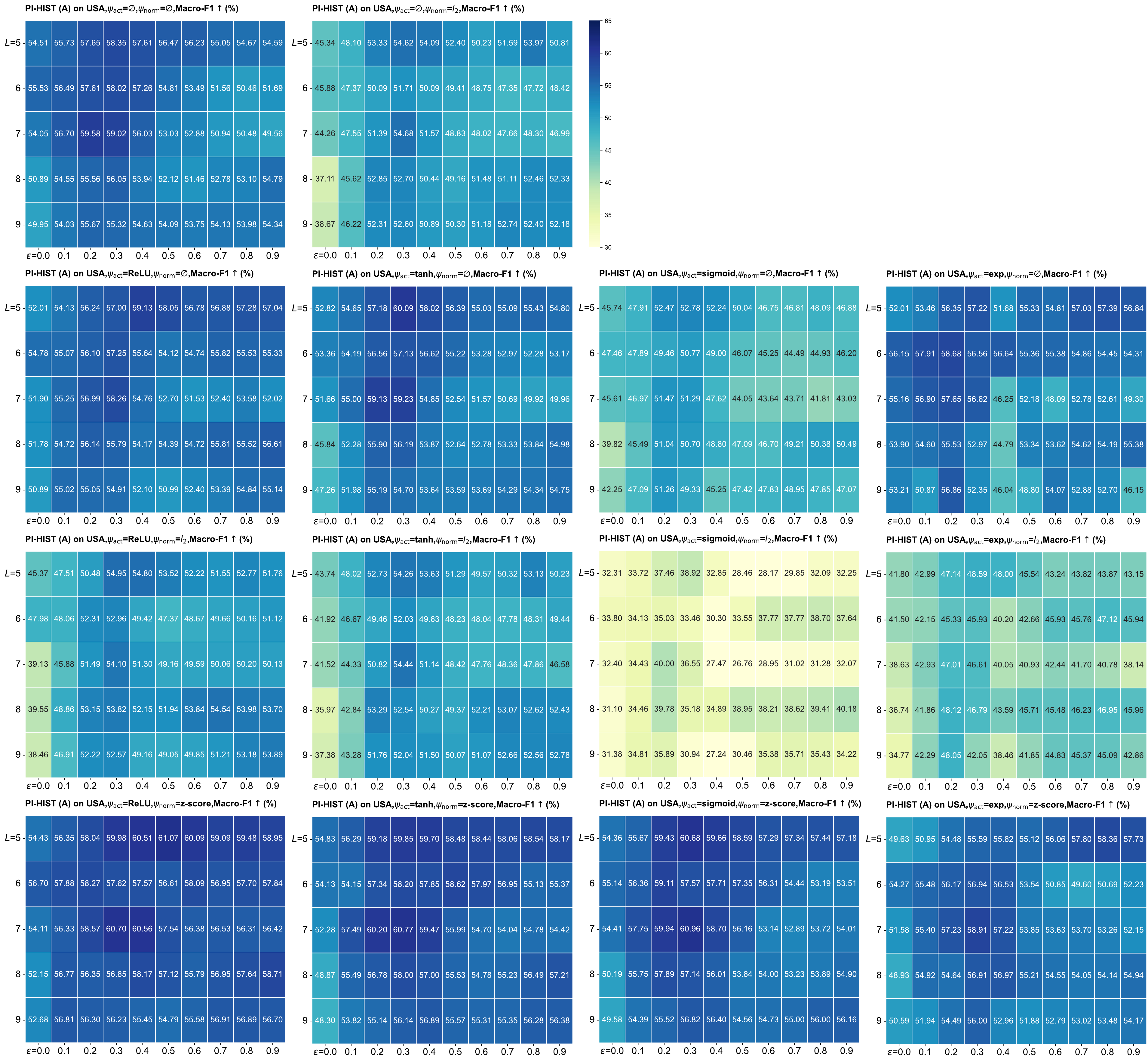}
    \caption{
	\textbf{Ablation study and parameter analysis of PI-HIST~(A) on USA}. Representative results of node identity classification (macro F1$\uparrow$, \%) on the validation set, with $L \in \{ 5, 6, \cdots, 9\}$ ($y$-axis) and $\varepsilon \in \{ 0.0, 0.1, \cdots, 0.9\}$ ($x$-axis). The optional activation $\psi_{\rm act} (\cdot)$ is set to $\emptyset$ (no activation), ReLU, sigmoid, tanh, or exp. The optional normalization $\psi_{\rm norm} (\cdot)$ is set to $\emptyset$ (no activation), row-wise $l_2$-normalization, and column-wise z-score normalization.
    }
    \label{Fig:PI-HIST-A-usa-p}
\end{figure}
\clearpage

\begin{figure}[p!]
    \centering
    \includegraphics[width=1.0\linewidth,trim=0 0 0 0,clip]{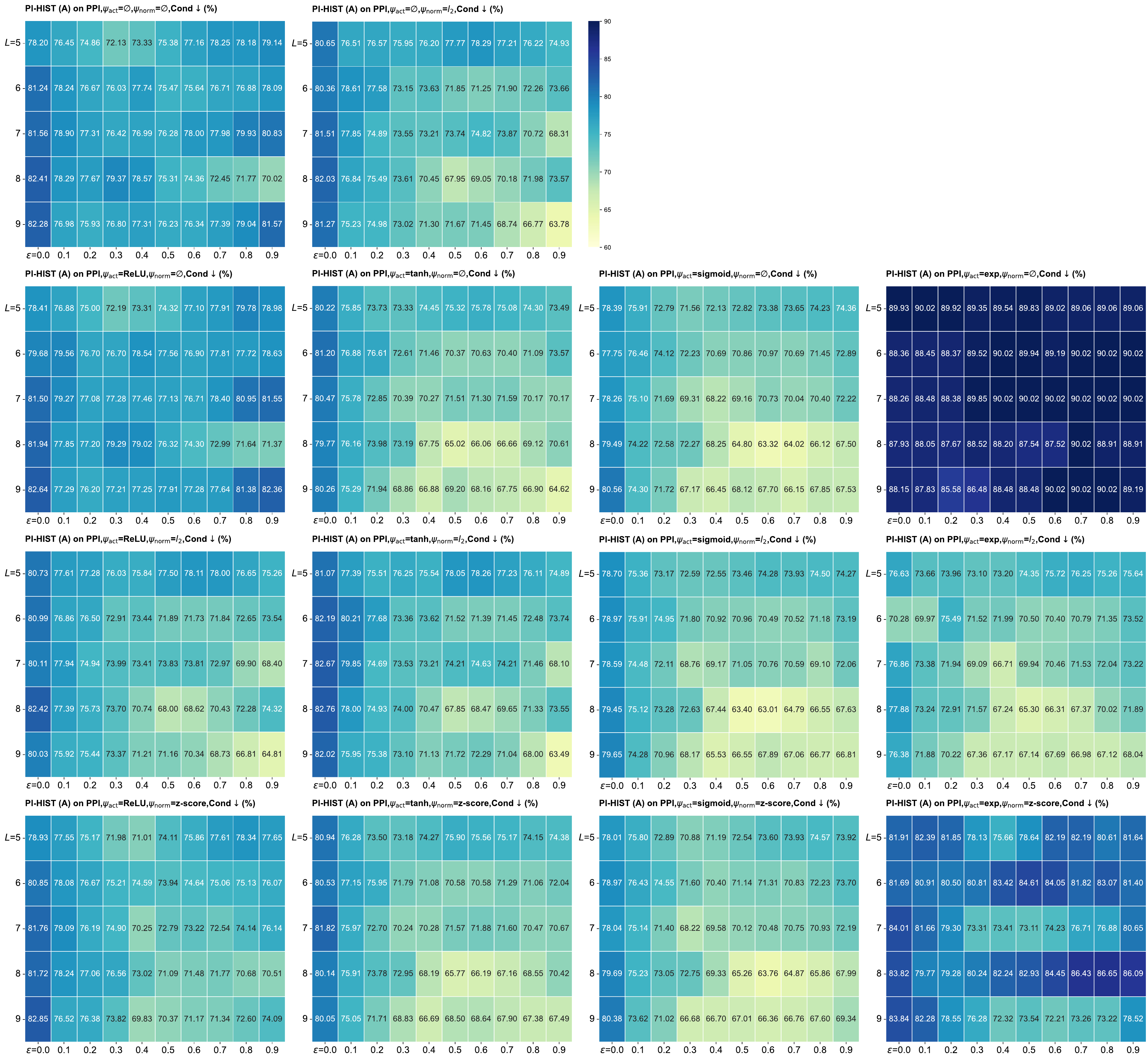}
    \caption{
	\textbf{Ablation study and parameter analysis of PI-HIST~(A) on PPI}. Representative results of node identity clustering (conductance$\downarrow$, \%), with $L \in \{ 5, 6, \cdots, 9\}$ ($y$-axis) and $\varepsilon \in \{ 0.0, 0.1, \cdots, 0.9\}$ ($x$-axis). The optional activation $\psi_{\rm act} (\cdot)$ is set to $\emptyset$ (no activation), ReLU, sigmoid, tanh, or exp. The optional normalization $\psi_{\rm norm} (\cdot)$ is set to $\emptyset$ (no activation), row-wise $l_2$-normalization, and column-wise z-score normalization.
    }
    \label{Fig:PI-HIST-A-ppi-p}
\end{figure}
\clearpage

\begin{figure}[p!]
    \centering
    \includegraphics[width=0.80\linewidth,trim=10 10 10 10,clip]{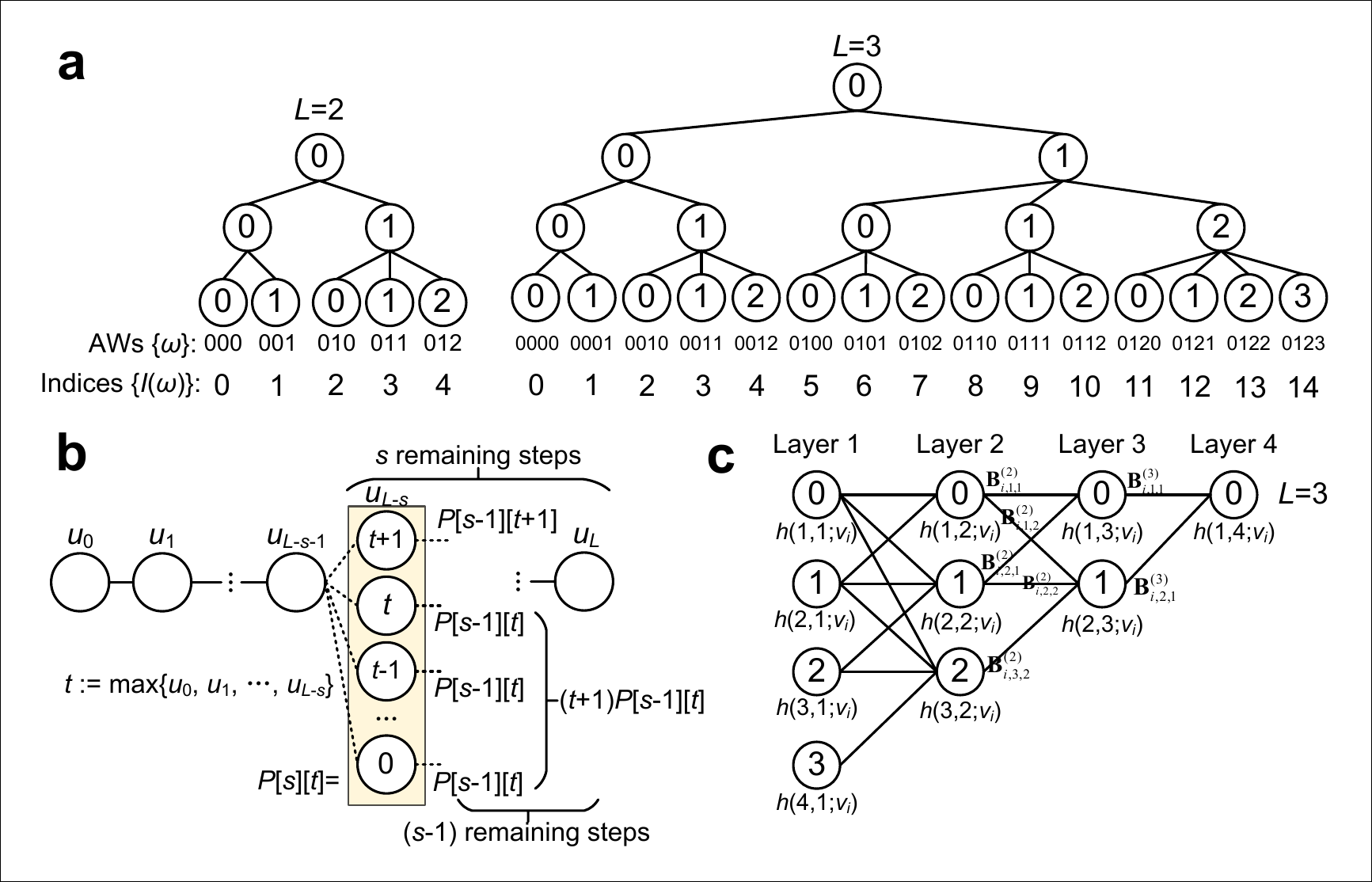}
    \caption{
	\textbf{Examples of key steps in PI-HIST~(A)'s embedding inference}.
    \textbf{a}, Tree $\mathcal{T}_L$ encoding for AW indexing with walk length $L = 2$ and $3$.
    \textbf{b}, Illustration of an entry $P[s][t]$ in DP table $P$ for $s$ remaining steps to generate a length-$L$ AW $\omega = (u_0, u_1, \cdots u_L)$ with $t$ as the maximum element visited so far. $u_l$ denotes the $l$-th element in $\omega$.
	\textbf{c}, An example of the AW-induced hierarchical structures $\mathcal{H}(v_i)$ for a node $v_i$ with length $L = 3$.
    }
    \label{Fig:AW_EST}
\end{figure}
\clearpage

\section{Additional Experiment Results}\label{App:Exp}

In addition to Fig.~\ref{Fig:NC_Eva}, outcomes of node-level tasks in terms of other metrics (micro F1 on Europe, Actor, BlogCatalog, and Amazon) and on the rest datasets (full three metrics on USA, Film, PPI, and DBLP) are visualized in Supplementary Fig.~\ref{Fig:NC_Eva_Apx}.
As a supplement to ig.~\ref{Fig:GRLP_Eva}, Supplementary Fig.~\ref{Fig:GRLP_Eva_Apx} shows evaluation results of edge-level tasks on Europe, Actor, PPI, BlogCatalog, and Amazon.
Following the same settings as Fig.~\ref{Fig:GSI_Eva}a-b, Supplementary Fig.~\ref{Fig:GSI_Eva_Apx} provides results of the $4$ baselines (BoD, BoHD, SSP, and CNS) for graph superfamily identification on both real-world and synthetic graphs.
In addition to the representative outcomes of PI-HIST~(R\&A) in Fig.~\ref{Fig:GSI_Eva}c-d, their results with respect to the rest settings of $\alpha$ are visualized in Supplementary Fig.~\ref{Fig:GSI_Eva_PI_HIST}.
Detailed numerical records of these evaluations, including mean and standard deviation over multiple independent runs and original efficiency scores before log transformation, are further summarized in Supplementary Tables~\ref{Tab:NC-Europe}-\ref{Tab:RA-alpha} and Supplementary Section~\ref{App:Res}.
Particularly, Table~\ref{Tab:Time-A} provides details about the inference time (sec) of different steps in PI-HIST (A).

Following the tuning strategy described in `Methods' section, we also analyze the effects of different settings of $\{ L, \varepsilon, \psi_{\rm act} (\cdot), \psi_{\rm norm} (\cdot)\}$ through ablation studies and parameter analyses.
Taking node-level tasks as examples, representative results of PI-HIST~(R) on PPI (node position classification with macro F1 on the validation set) and USA (community detection with modularity) are shown in Supplementary Fig.~\ref{Fig:PI-HIST-R-ppi-p} and \ref{Fig:PI-HIST-R-usa-p}.
Moreover, Supplementary Fig.~\ref{Fig:PI-HIST-A-usa-p} and \ref{Fig:PI-HIST-A-ppi-p} visualize outcomes of PI-HIST~(A) on USA (node identity classification with macro F1 on the validation set) and PPI (node identity clustering with conductance).
In most cases, applying $\psi_{\rm act} (\cdot)$ and $\psi_{\rm norm} (\cdot)$ can enhance the embedding quality.
For instance, PI-HIST~(A) achieves the best macro F1 ($60.77$\%) on USA when setting $\psi_{\rm act} (\cdot)$ and $\psi_{\rm norm} (\cdot)$ to tanh and column-wise z-score normalization, while the score decreases to $59.58$\% without these operations (i.e., $\psi_{\rm act} (\cdot) = \emptyset$ and $\psi_{\rm norm} (\cdot) = \emptyset$).
The optimal settings of $\{ L, \varepsilon, \psi_{\rm act} (\cdot), \psi_{\rm norm} (\cdot), \alpha\}$ vary across tasks and datasets, as summarized in Supplementary Tables~\ref{Tab:Param} and \ref{Tab:Param-RA}.

\section{Further Technical Details of PI-HIST}\label{App:Meth}

\subsection{Detailed Algorithms of PI-HIST~(R)}\label{App:Meth-PI-HIST-R}

The overall embedding derivation procedure of PI-HIST~(R) is summarized in Algorithm~\ref{Alg:PI-HIST-P}.

\begin{algorithm}[htbp]\small
\caption{Embedding Derivation of PI-HIST~(R)}
\label{Alg:PI-HIST-P}
\LinesNumbered
\KwIn{input graph $G = (V, E)$; embedding dimensionality $d$; RW length $L$; hyper-parameter $\varepsilon$}
\KwOut{derived embeddings ${\bf{Z}} \in \mathbb{R}^{N \times d}$ (matrix form)}
generate random input ${\bf{\Theta}} \in \mathbb{R}^{N \times d} \sim \mathcal{N} (0, 1/d)$ 

${\bf{Z}}^{(0)} \leftarrow {\bf{\Theta}}$

\For{$l \gets 0$ {\bf{to}} $L - 1$}{
    get ${\bf{Z}}^{(l+1)}$ via (1) in the main paper
}

get ${\bf{Z}}$ via (2) in the main paper
\end{algorithm}

\subsection{Detailed Algorithms of PI-HIST~(A)}\label{App:Meth-PI-HIST-A}

\begin{algorithm}[htbp]\small
\caption{Embedding Derivation of PI-HIST~(A)}
\label{Alg:PI-HIST-A}
\LinesNumbered
\KwIn{input graph $G = (V, E)$; embedding dimensionality $d$; total number of RWs $n$ sampled for each node; batch size $\hat n$ (i.e., number of RWs for each node processed in a run); RW/AW length $L$; hyper-parameter $\varepsilon$}
\KwOut{derived embeddings ${\bf{Z}} \in \mathbb{R}^{N \times d}$ (matrix form)}
get number of runs $m \leftarrow \left\lceil {n/\hat n} \right\rceil$

\For{$r \gets 1$ {\bf{to}} $m$}{
    sample $\hat n$ RWs for each node $v_i \in V$ ($\triangleright$\textbf{RW Sampling})
    
    map all sampled RWs $\{ w \}$ to their AWs $\{ \omega \}$ ($\triangleright$\textbf{AW Mapping})

    map all AWs $\{ \omega \}$ to their indices $\{ I[\omega] \}$ via Algorithm~\ref{Alg:AW-IDX} ($\triangleright$\textbf{AW Indexing})

    get statistics ${\bf{C}}^{(r)}$ of current run by counting all the observed node-index pairs $\{(v_i, I[\omega])\}$ ($\triangleright$\textbf{AW counting})
}

get full statistics ${\bf{C}}$ by summing $\{ {\bf{C}}^{(1)}, \cdots, {\bf{C}}^{(m)}\}$ and let ${\bf{C}} \leftarrow {\bf{C}} / n$
($\triangleright$\textbf{Statistic Gathering})

get hierarchical structures
$\{ {\bf{B}}^{(1)}, \cdots {\bf{B}}^{(L)} \}$ from ${\bf{C}}$ via (\ref{Eq:B}) ($\triangleright$\textbf{Hierarchical Structure Extraction})

generate random input ${\bf{\Theta}} \in \mathbb{R}^{L \times N \times d} \sim \mathcal{N} (0, 1/d)$ 

${\bf{U}}^{(0)} \leftarrow {\bf{\Theta}}$

\For{$l \gets 0$ {\bf{to}} $L - 1$}{
    get ${\bf{U}}^{(l+1)}$ via (3) in the main paper
}

get ${\bf{Z}}$ via (4) in the main paper
\end{algorithm}

The overall embedding derivation procedure of PI-HIST~(A) is summarized in Algorithm~\ref{Alg:PI-HIST-A}.
Similar to PI-HIST~(R), PI-HIST~(A) also derives node-wise embeddings, arranged in a matrix ${\bf{Z}} \in \mathbb{R}^{N \times d}$, via one FFP through AW-induced hierarchical structures (lines 14-18 in Algorithm~\ref{Alg:PI-HIST-A}).
As there are currently no closed-form solutions to directly obtain AW statistics of a graph $G = (V, E)$, before FFP, we employ an efficient GPU-friendly algorithm (lines 6-13 in Algorithm~\ref{Alg:PI-HIST-A}) to estimate AW statistics via RW sampling and extract AW-induced hierarchical structures described by tensors $\{ {\bf{B}}^{(1)}, \cdots, {\bf{B}}^{(L)} \}$. This algorithm includes steps of (\romannumeral1) RW sampling, (\romannumeral2) AW mapping, (\romannumeral3) AW indexing, (\romannumeral4) AW counting, (\romannumeral5) statistic gathering, and (\romannumeral6) hierarchical structure extraction.

\textbf{RW Sampling and AW mapping}. Generally, to estimate AW statistics of each node $v_i$, which form a distribution over all length-$L$ AWs, one can in sequence (\romannumeral1) sample $n$ RWs $\{ w \}$ started from $v_i$, (\romannumeral2) map each RW $w$ to its corresponding AW $\omega$, and (\romannumeral3) record the occurrence frequency of each observed AW $\omega$.
The proposed algorithm performs this procedure for all nodes simultaneously.
For large-scale graphs, we sample only $\hat n$ RWs from each node $v_i$ in one run, with $\hat n \le n$. The estimation is then completed in $m := \left\lceil {n/\hat n} \right\rceil $ runs (line 6 in Algorithm 2).
In each run, we employ the GPU-compatible implementation from \texttt{PyTorch Cluster} to facilitate the RW sampling (line 8 in Algorithm~\ref{Alg:PI-HIST-A}) and adopt \texttt{Numba} to implement the parallel mapping of all the sampled RWs to their corresponding AWs (line 9 in Algorithm~\ref{Alg:PI-HIST-A}).

\textbf{AW Indexing}.
For efficient counting, the derived AWs $\{ \omega \}$ are first mapped to corresponding indices $\{ I[\omega] \}$ using a GPU-optimized implementation based on \texttt{Numba} (line 10 in Algorithm~\ref{Alg:PI-HIST-A}).
For a fixed walk length $L$, the union of all candidate AWs can be encoded by a tree $\mathcal{T}_L$ with height $(L+1)$, where each root-to-leaf path represents a unique AW (cf. Supplementary Fig.~\ref{Fig:AW_EST}a for examples of $\mathcal{T}_2$ and $\mathcal{T}_3$).
In essence, $\mathcal{T}_L$ corresponds to the search space of the naive depth-first search process to generate all candidate length-$L$ AWs.
Let $\Omega(\mathcal{T}_L)$ be the sequence of AWs induced by all root-to-leaf paths arranged in order from left to right. The index $I[\omega]$ of an AW $\omega$ is defined as $\omega$'s index in $\Omega(\mathcal{T}_L)$, with $\{ I[\omega] \}$ started from $0$.
For instance, we have $I[(0, 0, 0, 0)] = 0$ and $I[(0, 1, 2, 3)] = 14$ for $L=3$ (Supplementary Fig.~\ref{Fig:AW_EST}a).
However, the direct index mapping involves enumerating or searching all candidate AWs in the worst case, which is computationally intractable when $L$ is large.
We therefore develop a dynamic programming (DP) solution for AW indexing as summarized in Algorithm~\ref{Alg:AW-IDX}.
It maintains an auxiliary table $P$ that considers a process of generating an AW sequence $(u_0, u_1, \cdots, u_L)$ step-by-step starting from $(u_0) = (0)$, with the $l$-th step inserting an element $u_l \in \{ 0, 1, \cdots, L-1\}$.
As illustrated in Supplementary Fig.~\ref{Fig:AW_EST}b, each entry $P[s][t]$ stores the number of AWs that can be generated with $s$ remaining steps (when inserting $u_{L-s}$), where the maximum element visited so far (i.e., $\max \{ u_0, u_1, \cdots, u_{L-s} \}$) is $t$.
Initially, we let $P[0][t] = 1$ for $0 \le t \le L$.
The state transition of $P[s][t]$ for $1 \le s \le L$ and $0 \le t \le L$ is then defined as
\begin{equation}
    P[s][t] = (t + 1)P[s - 1][t] + P[s - 1][t + 1],
\end{equation}
where the first and second terms correspond to cases of inserting $u_{L-s} \in \{ 0, 1, \cdots, t\}$ and $u_{L-s} = (t+1)$, respectively (Supplementary Fig.~\ref{Fig:AW_EST}b).
Since $P$ is determined solely by $L$, it can be precomputed before running the overall PI-HIST~(A) inference algorithm.
Given an AW $\omega = (u_0, u_1, \cdots, u_L)$ with length $L$, we first initialize its index $I [\omega]$ as $0$ (line 21 in Algorithm~\ref{Alg:AW-IDX}). We then in sequence examine each element $u_l$ ($1 \le l \le L$) in $\omega$ to update $I [\omega]$ by adding $u_l \times P[s-1][u_{\max}]$, where $s = (L - l)$ is the number of remaining steps; $u_{\max}$ is the maximum element visited so far (lines 22-27 in Algorithm~\ref{Alg:AW-IDX}).

\begin{algorithm}[htbp]\small
\caption{DP Solution to AW Indexing}
\label{Alg:AW-IDX}
\LinesNumbered
\KwIn{RW/AW length $L$; AW $\omega = (u_0, u_1, \cdots, u_L)$; precomputed DP table $P$}
\KwOut{AW index $I[\omega]$}
$u_{\max} \leftarrow 0$ //Initialize the maximum element visited so far

$s \leftarrow L - 1$ //Initialize steps remained for generation

$I[\omega] \leftarrow 0$ //Initialize the index to be computed

\For{$l \gets 1$ {\bf{to}} $L$}{
    get the $l$-th element $u_l$ from $\omega$

    $I[\omega] \leftarrow I[\omega] + u_l \times P[s-1][u_{\max}]$

    \If{$u_{\max} < u_l$ {\rm{(i.e.,}} $u_l = u_{\max} + 1${\rm{)}} }{
        $u_{\max} \leftarrow u_{\max} + 1$ //Update the maximum element visited so far
    }
    $s \leftarrow s - 1$ //Decrement steps remained
}

\end{algorithm}

\textbf{AW Counting and Statistic Gathering}.
Based on AW indexing, we estimate AW statistics of the current run $r$ by counting the observed node-index pairs $\{ (v_i, I[\omega]) \}$ for all nodes (line 11 in Algorithm~\ref{Alg:PI-HIST-A}), using the GPU-compatible unique value extraction provided by \texttt{PyTorch}.
Let $\eta_L$ be the number of all candidate length-$L$ AWs. Each AW index $I[\omega] \in \{ 0, 1, \cdots, \eta_L - 1\}$. The counting result of current run $r$ can be arranged in a matrix ${\bf{C}}^{(r)} \in \mathbb{R}^{N \times \eta_L}$, with ${\bf{C}}^{(r)}_{i, (I[\omega_j]+1)}$ as the occurrence count of $(v_i, I[\omega_j])$.
Although $\eta_L$ grows exponentially with the increase of $L$, ${\bf{C}}^{(r)}$ is usually very sparse, because only a small ratio of candidate AWs are observed during the estimation.
The full statistics arranged in a matrix ${\bf{C}} \in \mathbb{R}^{N \times \eta_L}$ can be obtained by summing $\{ {\bf{C}}^{(1)}, \cdots {\bf{C}}^{(m)} \}$ of all runs via sparse matrix summation (line 12 in Algorithm~\ref{Alg:AW-IDX}).

\textbf{Hierarchical Structure Extraction}.
For a fixed walk length $L$, $l$-th-to-last element in an AW $\omega$ can only take a value in $\{ 0, 1, \cdots, L+1-l\}$.
Therefore, the AW-induced hierarchical structure $\mathcal{H} (v_i)$ of node $v_i$ has $(L+1)$ layers, with the $l$-th layer (from bottom to top) containing units $\{ 0, \cdots, L+1-l\}$ (cf. Supplementary Fig.~\ref{Fig:AW_EST}c for an example of $L = 3$). We further assign weights to links between adjacent layers using the estimated AW statistics ${\bf{C}}_{i,:}$ of $v_i$, which is equivalent to extracting a weighted hierarchical structure about AW (line 13 in Algorithm~\ref{Alg:PI-HIST-A}).
As mentioned in `Methods' section, we use a tensor ${\bf{B}}^{(l)} \in \mathbb{R}^{N \times L \times L}$ to encode weighted links between the $l$-th and $(l+1)$-th layers in the hierarchical structures of all nodes.
Let $\omega_j [l]$ denote the $l$-th element ($0 \le l \le L$) in the $j$-th AW ($0 \le j \le \eta_L - 1$).
We define
\begin{equation}\label{Eq:B}
    {\bf{B}}_{i,s,t}^{(l)}: = \sum\limits_{j = 0}^{{\eta _L} - 1} {{{\bf{C}}_{i,(j+1)}} \times {\bf{1}}({\omega _j}[l - 1] = s) \times {\bf{1}}(\omega_j [l] = t)},
\end{equation}
where ${\bf{1}}(a=b)$ is the indicator function, with ${\bf{1}}(a=b) = 1$ if $a=b$ and ${\bf{1}}(a=b) = 0$ otherwise.
Namely, the weight ${\bf{B}}^{(l)}_{i,s,t}$ equals the total frequency of AWs that contains $(s, t)$ at the $l$-th step, with ${\omega _j}[l - 1] = s$ and $\omega_j [l] = t$.
From this perspective, we compute $\{ {\bf{B}}^{(1)}, \cdots, {\bf{B}}^{(L)} \}$ by scanning all the observed AWs of each node once, which corresponds to only a tiny subset of non-zero entries in ${\bf{C}}$,
using a GPU-optimized implementation based on \texttt{Numba}.

For a graph $G$ with $N$ nodes, the complexities of sampling $n$ length-$L$ RWs for each node (RW sampling) and mapping these RWs to AWs (AW mapping) are both $O(NLn)$.
The complexity of mapping an AW $\omega$ to its index $I[\omega]$ using Algorithm~\ref{Alg:AW-IDX} is $O(L)$. Since there are in total $(Nn)$ AWs, it takes $O(NLn)$ for the AW indexing step.
To count $\{ (v_i, I[\omega]) \}$, we need to scan all the derived $(Nn)$ AWs once. Hence, the complexity of AW counting is within $O(Nn)$.
Let $\tilde \eta$ be the number of observed AWs during the estimation. 
We usually have $\tilde \eta \ll \eta$, because only a small ratio of candidate AWs can be observed, consisent with that $\{ {\bf{C}}^{(1)}, \cdots, {\bf{C}}^{(m)} \}$ and ${\bf{C}}$ are sparse matrices. The complexity of merging estimation results from $m$ runs (statistic gathering) is $O(Nm{\tilde \eta})$ using sparse matrix summation.
Finally, the step of hierarchical structure extraction, which derives tensors $\{ {\bf{B}}^{(1)}, \cdots, {\bf{B}}^{(L)} \}$ encoding $\{ \mathcal{H}(v_i) | v_i \in V \}$, can be completed within $O(N{\tilde \eta})$ by traversing non-zero entries of ${\bf{C}}$.
For large sparse graphs, we can assume that $L \ll N$ and $m \ll N$.
The overall complexity of the proposed AW statistic estimation procedure is about $O(NLn + N{\tilde \eta}) \approx O(N(n + {\tilde \eta}))$.

\section{Theoretical Support for PI-HIST}\label{App:Th-Sup}
This section provides theoretical support for the effectiveness of PI-HIST. We first show that by simplifying away the nonlinear activations and normalization, the propagation of Gaussian random features through a topology-induced operator in PI-HIST can be interpreted as a Gaussian random projection of high-dimensional structural features.
For PI-HIST~(R), we further demonstrate that the RW diffusion profiles of nodes within the same metastable community are close. For PI-HIST~(A), we show how similarity in AW statistics propagates through the AW-induced hierarchical operator using the established connection between AW statistics and rooted ego-net structures.

\subsection{Random features preserve the geometric of topology-induced structural representations}
Consider the topology-induced operator that associates each node $v_i$ with a high-dimensional representation ${\bf{x}}_i \in \mathbb{R}^{F}$. We can arrange the representations of all nodes in a matrix ${\bf{X}} \in \mathbb{R}^{N \times F}$, with ${\bf{x}}_i := {\bf{X}}_{i,:}$.
After simplifying away the nonlinear activations and normalization, PI-HIST can be formulated as ${\bf{Z}} = {\bf{X}}{\bf{\Theta}}$, where ${\bf{\Theta}} \in \mathbb{R}^{N \times d}$, ${\bf{\Theta}}_{ir} \mathop \sim\limits^{\rm{i.i.d}} \mathcal{N} (0, 1/d)$, and  $d \ll N$; ${\bf{z}}_i := {\bf{Z}}_{i,:}$ represents the resulting low-dimensional embedding of $v_i$.
In this sense, PI-HIST can be viewed as the Gaussian random projection on ${\bf{X}}$, where ${\bf{\Theta}}$ provides a random probe of the structural geometry encoded in ${\bf{X}}$, with its effectiveness guaranteed by the following Gaussian form of the Johnson–Lindenstrauss lemma~\cite{dasgupta2003elementary}.

\begin{lemma}[Gaussian random projection]\label{Th:Rand-Proj}
    For any $0 < \pi < 1$ and $0 < \rho < 1$, if $d \ge \frac{8}{{{\pi ^2}}}\ln \frac{{{N^2}}}{\rho }$, then with probability at least $(1 - \rho)$, we have
    \begin{equation}
        (1 - \pi )||{{\bf{x}}_i} - {{\bf{x}}_j}||_2^2 \le ||{{\bf{z}}_i} - {{\bf{z}}_j} ||_2^2 \le (1 + \pi )||{{\bf{x}}_i} - {{\bf{x}}_j}||_2^2,
    \end{equation}
    for each pair of nodes $(v_i, v_j)$, with ${\bf{z}}_i = {\bf{x}}_i{\bf{\Theta}}$.
\end{lemma}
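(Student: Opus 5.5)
Since $\mathbf{z}_i-\mathbf{z}_j=(\mathbf{x}_i-\mathbf{x}_j)\mathbf{\Theta}$ by linearity, Lemma~\ref{Th:Rand-Proj} reduces to a norm-preservation statement for one fixed vector, followed by a union bound over the at most $N^2$ pairs. (For the product $\mathbf{x}_i\mathbf{\Theta}$ to make sense we take $\mathbf{\Theta}\in\mathbb{R}^{F\times d}$; the paper's $N\times d$ is the case $F=N$.)

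Fix a nonzero $\mathbf{u}\in\mathbb{R}^F$ and write $\mathbf{y}=\mathbf{u}\mathbf{\Theta}$. Each coordinate $y_r=\sum_k u_k\mathbf{\Theta}_{kr}$ is Gaussian with mean $0$ and variance $\|\mathbf{u}\|_2^2/d$, and the coordinates are independent because the columns of $\mathbf{\Theta}$ are independent. Hence
\[
Q:=\frac{d\,\|\mathbf{y}\|_2^2}{\|\mathbf{u}\|_2^2}\sim\chi^2_d ,
\]
and the claim for one pair is that $Q/d$ lies in $[1-\pi,1+\pi]$.

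The main step is a two-sided chi-square concentration bound of the form
\[
\Pr\bigl[\,|Q/d-1|>\pi\,\bigr]\le 2\exp(-d\pi^2/8).
\]
I would prove it by the Chernoff method using the moment generating function $\mathbb{E}[e^{\lambda Q}]=(1-2\lambda)^{-d/2}$ for $\lambda<1/2$.

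For the upper tail, Markov's inequality gives
\[
\Pr[Q\ge(1+\pi)d]\le \exp\!\bigl(-\tfrac d2\,[\pi-\ln(1+\pi)]\bigr)
\]
after optimizing at $\lambda=\tfrac{\pi}{2(1+\pi)}$. It then suffices to use $\ln(1+\pi)\le \pi-\pi^2/2+\pi^3/3$, which yields $\pi-\ln(1+\pi)\ge \pi^2/2-\pi^3/3\ge\pi^2/4$ for $\pi<1$.

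For the lower tail, the same argument with $-\lambda$ gives
\[
\Pr[Q\le(1-\pi)d]\le \exp\!\bigl(-\tfrac d2\,[-\pi-\ln(1-\pi)]\bigr)\le \exp(-d\pi^2/4),
\]
since $-\ln(1-\pi)\ge \pi+\pi^2/2$. Both tails are therefore at most $\exp(-d\pi^2/8)$, and in fact slightly better.

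\textbf{Main obstacle.} The delicate part is only this elementary logarithm inequality, i.e.\ making sure the constants line up with the stated $8/\pi^2$. The constant $8$ leaves slack, so I would not expect a genuine difficulty there.

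Finally, apply the single-pair bound with $\mathbf{u}=\mathbf{x}_i-\mathbf{x}_j$ for every pair; pairs with $\mathbf{x}_i=\mathbf{x}_j$ hold trivially. There are fewer than $N^2/2$ pairs, each failing with probability at most $2e^{-d\pi^2/8}$, so the total failure probability is at most $N^2e^{-d\pi^2/8}$. The condition $d\ge \frac{8}{\pi^2}\ln\frac{N^2}{\rho}$ makes this at most $\rho$, which gives the simultaneous guarantee with probability at least $1-\rho$.
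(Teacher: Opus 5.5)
Your proposal follows the same route as the paper's proof. For each pair you reduce $\|(\mathbf{x}_i-\mathbf{x}_j)\mathbf{\Theta}\|_2^2/\|\mathbf{x}_i-\mathbf{x}_j\|_2^2$ to $\chi^2_d/d$, apply $\Pr[|\chi^2_d/d-1|>\pi]\le 2e^{-d\pi^2/8}$, and take a union bound over fewer than $N^2/2$ pairs. The paper just cites this as a ``standard chi-square concentration inequality'', written there in a somewhat garbled form. You instead derive it by the Chernoff method. Along the way you also fix the $F\times d$ shape of $\mathbf{\Theta}$ and handle the degenerate case $\mathbf{x}_i=\mathbf{x}_j$, both of which the paper passes over.

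There is one false step in your upper-tail bound, and it is the step you called harmless. The inequality $\pi^2/2-\pi^3/3\ge\pi^2/4$ is equivalent to $\pi\le 3/4$, so it fails on $(3/4,1)$; at $\pi=1$ it reads $1/6\ge 1/4$. On all of $(0,1)$, the cubic Taylor bound only gives $\pi-\ln(1+\pi)\ge\pi^2/6$. That yields an upper tail of $e^{-d\pi^2/12}$, which does not reach the stated constant $8/\pi^2$. So the ``slack'' in the constant $8$ is used up for large $\pi$ along this route.

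The inequality you actually need is still true. Set $h(\pi):=\pi-\ln(1+\pi)-\pi^2/4$. Then $h(0)=0$ and $h'(\pi)=\frac{\pi(1-\pi)}{2(1+\pi)}\ge 0$ on $[0,1]$, so $\pi-\ln(1+\pi)\ge\pi^2/4$ there. This gives an upper tail of exactly $e^{-d\pi^2/8}$; note it is tight, not better than needed. Your lower-tail bound of $e^{-d\pi^2/4}$ is correct as written. With this replacement the two tails sum to at most $2e^{-d\pi^2/8}$, and the rest of your argument, including the union bound, goes through unchanged.
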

\begin{proof}
    For each pair of nodes $(v_i, v_j)$, we have $({\bf{z}}_i - {\bf{z}}_j) = [({\bf{x}}_i - {\bf{x}}_j)\bf{\Theta}]$. Since entries in ${\bf{\Theta}}$ are variables following $\mathcal{N} (0, 1/d)$, each coordinate of $[({\bf{x}}_i - {\bf{x}}_j)\bf{\Theta}]$ is also Gaussian with variance $||{\bf{x}}_i - {\bf{x}}_j||_2^2/d$. Therefore, we can obtain
    \begin{equation}
        \frac{{||{{\bf{z}}_i} - {{\bf{z}}_j}||_2^2}}{{||{{\bf{x}}_i} - {{\bf{x}}_j}||_2^2}} = \frac{{||({{\bf{x}}_i} - {{\bf{x}}_j}){\bf{\Theta }}||_2^2}}{{||{{\bf{x}}_i} - {{\bf{x}}_j}||_2^2}} \sim \frac{{\chi_d^2}}{d}.
    \end{equation}
    By $\mathbb{E}[\chi_d^2] = d$, we have the following derivation
    \begin{equation}
        \mathbb{E}[||{{\bf{z}}_i} - {{\bf{z}}_j}||_2^2] = \mathbb{E}[||({{\bf{x}}_i} - {{\bf{x}}_j}){\bf{\Theta }}||_2^2] = ||{{\bf{x}}_i} - {{\bf{x}}_j}||_2^2 \mathbb{E}[\frac{{||({{\bf{x}}_i} - {{\bf{x}}_j}){\bf{\Theta }}||_2^2}}{{||{{\bf{x}}_i} - {{\bf{x}}_j}||_2^2}}] = ||{{\bf{x}}_i} - {{\bf{x}}_j}||_2^2 \mathbb{E}[\frac{\chi_d^2}{d}] = ||{{\bf{x}}_i} - {{\bf{x}}_j}||_2^2.
    \end{equation}
    For $X \sim \chi_d^2$, $\mathbb{E}[X] = d$ and ${\mathop{\rm Var}\nolimits} (X) = 2d$.
    The standard chi-square concentration inequality gives, for $0 < \pi < 1$,
    \begin{equation*}
        \Pr (|X - k| > t) \le 2 \exp \{  - {dt^2}/(8k)\} \Rightarrow
    \end{equation*}
    \begin{equation*}
        \Pr (|\frac{{\chi _d^2}}{d} - E[\frac{{\chi _d^2}}{d}]| > \pi ) = \Pr (|\frac{{\chi _d^2}}{d} - 1| > \pi ) \le 2\exp \{  - \frac{{d{\pi ^2}}}{8}\} \Rightarrow
    \end{equation*}
    \begin{equation*}
        \Pr (|\frac{{\chi _d^2}}{d} - 1| > \pi ) = \Pr (| \frac{{||{{\bf{z}}_i} - {{\bf{z}}_j}||_2^2}}{{||{{\bf{x}}_i} - {{\bf{x}}_j}||_2^2}} - 1 | > \pi ) \le 2\exp \{  - \frac{{d{\pi ^2}}}{8}\} \le \frac{2}{{{N^2}}}\rho \Rightarrow
    \end{equation*}
    \begin{equation}\label{Eq:AuxInq}
        \Pr [(1 - \pi )||{{\bf{x}}_i} - {{\bf{x}}_j}||_2^2 \le ||{{\bf{z}}_i} - {{\bf{z}}_j} ||_2^2 \le (1 + \pi )||{{\bf{x}}_i} - {{\bf{x}}_j}||_2^2 ] \ge 1 - \frac{2}{{{N^2}}}\rho.
    \end{equation}
    By applying (\ref{Eq:AuxInq}) to at most $N^2/2$ pairs of nodes completes the proof.
\end{proof}

\subsection{RW-induced operators preserve node-position information}
Let $w_l$ be the node visited by the $l$-th step of a RW, with the index $l$ starting from $0$. $({\bf{D^{-1}A}})^{L}$ then encodes the transition probability that $({{\bf{D}}^{ - 1}}{\bf{A}})_{i,j}^{(L)} = \Pr ({w_L} = {v_j}|{w_0} = {v_i})$.
The $i$-th row $P_i^{(L)} := ({{\bf{D}}^{-1/2}\bf{A}})^L_{i,:}$ is the probability distribution of nodes visited by a length-$L$ RW starting from $v_i$.

Consider a metastable community $C \subseteq V$ satisfying two standard assumptions.
For simplicity, let $P_{i,C}^{(L)}: = \Pr ({w_L} \in C|{w_0} = {v_i})$ and $P_{i,\bar C}^{(L)}: = \Pr ({w_L} \notin C|{w_0} = {v_i})$, which correspond to the probability within and outside the community $C$ after the $L$-step RW starting from $v_i$, respectively.
First, RWs initialized at nodes inside $C$ approach a similar within-community distribution before substantially escaping the community. Namely, there exists a reference distribution ${\upsilon _C}$ within $C$ such that, for each $v_i \in C$,
\begin{equation}\label{Eq:Cond-1}
    ||P_{i,C}^{(L)} - {\upsilon _C}|{|_1} \le {\epsilon _{{\rm{in}}}}.
\end{equation}
Second, the probability of escaping $C$ within the considered diffusion scale is small. Namely, for each $v_i \in C$,
\begin{equation}\label{Eq:Cond-2}
    ||P_{i,\bar C}^{(L)}||_1 \le {\epsilon _{{\rm{out}}}}.
\end{equation}

\begin{proposition}[Within-community diffusion similarity]\label{Th:RW}
    Under the assumptions defined in (\ref{Eq:Cond-1}) and (\ref{Eq:Cond-2}), for any two nodes $v_i, v_j \in C$,
    \begin{equation}
        ||P_{i,C}^{(L)} - P_{j,C}^{(L)}|{|_1} \le 2({\epsilon _{{\rm{in}}}} + {\epsilon _{{\rm{out}}}}).
    \end{equation}
\end{proposition}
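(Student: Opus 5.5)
The plan is a triangle inequality through the reference distribution $\upsilon_C$, followed by a check of where the slack for $\epsilon_{\rm out}$ is used. View $P_{i,C}^{(L)}$ as the restriction of the row distribution $P_i^{(L)}$ to the coordinates indexed by $C$. It is a sub-probability vector of total mass $1 - \|P_{i,\bar C}^{(L)}\|_1 \ge 1-\epsilon_{\rm out}$, and $\upsilon_C$ is a reference vector on the same coordinates.

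\textbf{Step 1.} For $v_i, v_j \in C$, write
$P_{i,C}^{(L)} - P_{j,C}^{(L)} = (P_{i,C}^{(L)} - \upsilon_C) - (P_{j,C}^{(L)} - \upsilon_C)$
and apply the triangle inequality in $\ell_1$. Condition (\ref{Eq:Cond-1}) then gives the bound $2\epsilon_{\rm in}$, which is already at most $2(\epsilon_{\rm in}+\epsilon_{\rm out})$ because $\epsilon_{\rm out}\ge 0$.

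\textbf{Step 2.} To explain the role of $\epsilon_{\rm out}$, I would also give a more robust version. If the reference distribution in (\ref{Eq:Cond-1}) is read as a comparison of the \emph{normalized} within-community distributions $\hat P_{i,C}^{(L)} := P_{i,C}^{(L)}/P^{(L)}_{i}(C)$, the two readings differ only slightly. Since $P_i^{(L)}(C) \ge 1-\epsilon_{\rm out}$, one has
\[
\|P_{i,C}^{(L)}-\hat P_{i,C}^{(L)}\|_1 = 1-P_i^{(L)}(C)\le \epsilon_{\rm out}.
\]
The extra triangle-inequality step then contributes at most $\epsilon_{\rm out}$ per node, so the total is again $2(\epsilon_{\rm in}+\epsilon_{\rm out})$.

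An equivalent route compares the full rows: $\|P_i^{(L)}-P_j^{(L)}\|_1$ splits into the parts on $C$ and on $\bar C$. The $\bar C$ part is at most $\|P_{i,\bar C}^{(L)}\|_1+\|P_{j,\bar C}^{(L)}\|_1\le 2\epsilon_{\rm out}$ by (\ref{Eq:Cond-2}), and the $C$ part is at most $2\epsilon_{\rm in}$ by Step 1. This even bounds the full-row distance by $2(\epsilon_{\rm in}+\epsilon_{\rm out})$, which is the version most useful later when combined with Lemma~\ref{Th:Rand-Proj}.

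\textbf{Main obstacle.} The mathematics is a pair of triangle inequalities. The real difficulty is interpreting the notation consistently. The paper writes $P_i^{(L)}$ with $\mathbf{D}^{-1/2}$, apparently a typo for $\mathbf{D}^{-1}$. It also defines $P_{i,C}^{(L)}$ both as a scalar probability and as a vector compared with $\upsilon_C$ in $\ell_1$. I would fix the vector-restriction interpretation explicitly and state the result for the full rows, which implies the restricted statement.
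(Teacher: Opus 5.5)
Your proposal is correct and matches the paper's proof: the paper bounds the $C$-part by $2\epsilon_{\rm in}$ via the triangle inequality through $\upsilon_C$ and the $\bar C$-part by $2\epsilon_{\rm out}$ via (\ref{Eq:Cond-2}). It then adds the two to conclude $\|P_i^{(L)}-P_j^{(L)}\|_1\le 2(\epsilon_{\rm in}+\epsilon_{\rm out})$, which is exactly your ``equivalent route''. Your Step~2 (the normalized reading of (\ref{Eq:Cond-1})) does not appear in the paper and is not needed, though it is a valid explanation of where the $\epsilon_{\rm out}$ slack could enter the restricted statement.
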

\begin{proof}
    On the one hand, we have $||P_{i,C}^{(L)} - {\upsilon _C}|{|_1} \le {\epsilon _{{\rm{in}}}}$ and $||P_{j,C}^{(L)} - {\upsilon _C}|{|_1} \le {\epsilon _{{\rm{in}}}}$ for any nodes $v_i, v_j \in C$. By the triangle inequality, we can obtain
    \begin{equation}\label{Eq:AuxInq-1}
        ||P_{i,C}^{(L)} - P_{j,C}^{(L)}|{|_1} = ||(P_{i,C}^{(L)} - {\upsilon _C}) - (P_{j,C}^{(L)} - {\upsilon _C})|{|_1} \le ||P_{i,C}^{(L)} - {\upsilon _C}|{|_1} + ||P_{j,C}^{(L)} - {\upsilon _C}|{|_1} \le 2{\epsilon _{{\rm{in}}}}.
    \end{equation}
    On the other hand, we also have $||P_{i,\bar C}^{(L)}|| \le {\epsilon _{{\rm{out}}}}$ and $||P_{j,\bar C}^{(L)}|| \le {\epsilon _{{\rm{out}}}}$ for any nodes $v_i, v_i \in C$. Also by the triangle inequality, we further have the following derivation
    \begin{equation}\label{Eq:AuxInq-2}
        ||P_{i,\bar C}^{(L)} - P_{j,\bar C}^{(L)}|{|_1} \le ||P_{i,\bar C}^{(L)}|| + ||P_{j,\bar C}^{(L)}|| \le 2{\epsilon _{{\rm{out}}}}
    \end{equation}
    By merging (\ref{Eq:AuxInq-1}) and (\ref{Eq:AuxInq-2}), we can finally obtain
    \begin{equation}
        ||P_i^{(L)} - P_j^{(L)}|{|_1} = ||(P_{i,C}^{(L)} + P_{i,\bar C}^{(L)}) - (P_{j,C}^{(L)} + P_{j,\bar C}^{(L)})|{|_1} \le ||P_{i,C}^{(L)} - P_{j,C}^{(L)}|{|_1} + ||P_{i,\bar C}^{(L)} - P_{j,\bar C}^{(L)}|{|_1} \le 2({\epsilon _{{\rm{in}}}} + {\epsilon _{{\rm{out}}}}).
    \end{equation}
    This completes the proof.
\end{proof}
Since $||{\bf{x}}||_2 \le ||{\bf{x}}||_1$, we also have $||P_i^{(L)} - P_j^{(L)}|{|_2} \le 2({\epsilon _{{\rm{in}}}} + {\epsilon _{{\rm{out}}}})$. Combing Proposition~\ref{Th:RW} with Lemma~\ref{Th:Rand-Proj} gives a direct result in the embedding space.
\begin{corollary}[Within-community proximity after random projection]
    For PI-HIST~(R), ${\bf{z}}_i = P_i^{L}{\bf{\Theta}}$. Under the two assumptions of Proposition~\ref{Th:RW}, with a high probability, 
    \begin{equation}
        ||{{\bf{z}}_i} - {{\bf{z}}_j}|{|_2} \le 2\sqrt {1 + \pi } ({\epsilon _{{\rm{in}}}} + {\epsilon _{{\rm{out}}}})
    \end{equation}
    for all nodes $v_i, v_j \in C$.
\end{corollary}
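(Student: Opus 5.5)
The corollary follows by composing Lemma~\ref{Th:Rand-Proj} with Proposition~\ref{Th:RW}, taking the high-dimensional structural representation to be the RW diffusion profile itself. Concretely, I would set ${\bf{X}}$ to be the matrix whose $i$-th row is ${\bf{x}}_i := P_i^{(L)}$, so $F = N$ and ${\bf{X}} = ({\bf{D}}^{-1}{\bf{A}})^L$. Dropping the optional normalization and activation, and taking $\varepsilon = 0$ so that each layer in (\ref{Eq:PI-HIST-R-FFP}) is a plain application of ${\bf{D}}^{-1}{\bf{A}}$, PI-HIST~(R) yields exactly ${\bf{Z}} = {\bf{X}}{\bf{\Theta}}$, i.e.\ ${\bf{z}}_i = P_i^{(L)}{\bf{\Theta}}$, as stated in the corollary.

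First, fix $0<\pi<1$ and $0<\rho<1$, and assume $d \ge \frac{8}{\pi^2}\ln\frac{N^2}{\rho}$. Under this condition Lemma~\ref{Th:Rand-Proj} gives an event $\mathcal{E}$ of probability at least $1-\rho$ on which, simultaneously for all pairs,
\begin{equation*}
||{\bf{z}}_i - {\bf{z}}_j||_2^2 \le (1+\pi)\,||P_i^{(L)} - P_j^{(L)}||_2^2 .
\end{equation*}
This is the sense of ``with high probability''. The crucial feature is that $\mathcal{E}$ is uniform over pairs and does not depend on $C$ or on which nodes we later restrict to.

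Second, for $v_i, v_j \in C$, Proposition~\ref{Th:RW} (via the decomposition $P_i^{(L)} = P_{i,C}^{(L)} + P_{i,\bar C}^{(L)}$ carried out in its proof) gives
\begin{equation*}
||P_i^{(L)} - P_j^{(L)}||_1 \le 2(\epsilon_{\rm in} + \epsilon_{\rm out}).
\end{equation*}
Since $||{\bf{x}}||_2 \le ||{\bf{x}}||_1$, the same bound holds in $\ell_2$, as remarked after the proposition.

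Finally, on $\mathcal{E}$ we combine the two displays and take square roots:
\begin{equation*}
||{\bf{z}}_i - {\bf{z}}_j||_2 \le \sqrt{1+\pi}\,||P_i^{(L)} - P_j^{(L)}||_2 \le 2\sqrt{1+\pi}\,(\epsilon_{\rm in}+\epsilon_{\rm out}).
\end{equation*}
Because the bound holds on $\mathcal{E}$ for every pair, it holds for all $v_i,v_j \in C$ at once with probability at least $1-\rho$.

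The one point needing care is that the statement as written is really about the full profile $P_i^{(L)}$, not only its restriction to $C$. Proposition~\ref{Th:RW}'s displayed conclusion bounds only the $C$-restricted part, so I would invoke the final line of its proof, which bounds the whole difference $||P_i^{(L)} - P_j^{(L)}||_1$. I would also state explicitly the simplifications that make ${\bf{Z}} = {\bf{X}}{\bf{\Theta}}$ exact, namely no $\varphi_{\rm norm}$, no $\psi_{\rm act}$ or $\psi_{\rm norm}$, and $\varepsilon = 0$. Otherwise the lazy walk $(\varepsilon {\bf{I}} + (1-\varepsilon){\bf{D}}^{-1}{\bf{A}})^L$ must replace ${\bf{D}}^{-1}{\bf{A}}$ in assumptions (\ref{Eq:Cond-1})--(\ref{Eq:Cond-2}). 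The rest of the argument is unchanged in that case.
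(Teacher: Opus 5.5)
Your proof is correct and follows the paper's route: the paper also takes ${\bf{x}}_i = P_i^{(L)}$ in Lemma~\ref{Th:Rand-Proj}, uses the full-profile bound $\|P_i^{(L)} - P_j^{(L)}\|_1 \le 2(\epsilon_{\rm in}+\epsilon_{\rm out})$ from the last line of the proof of Proposition~\ref{Th:RW} together with $\|\cdot\|_2 \le \|\cdot\|_1$, and then applies the upper random-projection inequality and takes square roots. Your explicit linearization assumptions ($\varepsilon=0$, no normalization or activation) and your remark on the lazy-walk operator ${\bf{Q}}^L$ match the paper's own comments around the corollary.
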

In summary, nodes within the same metastable community have similar diffusion profiles and also remain close after Gaussian random projection.
This provides a theoretical mechanism through which PI-HIST~(R) preserves node-position information.
PI-HIST~(R) additionally contains a skip connection controlled by $\varepsilon$, which can be formulated as
\begin{equation*}
    {\bf{Z}}^{(l)} = [\varepsilon {\bf{I}} + (1 - \varepsilon )({{\bf{D}}^{ - 1}} {\bf{A}})] {\bf{Z}}^{(l-1)}.
\end{equation*}
Let ${\bf{Q}} := \varepsilon {\bf{I}} + (1 - \varepsilon )({{\bf{D}}^{ - 1}} {\bf{A}})$. PI-HIST~(R)'s $L$-layer linear core is ${\bf{Z}}^{(L)} = {\bf{Q}}^L{\bf{\Theta}}$. The above reasoning can also be applied to the lazy-walk operator ${\bf{Q}}^L$, which corresponds more directly to PI-HIST~(R).

\subsection{AW-induced operators preserve structural-role information}
For PI-HIST~(A), the following Theorem~\ref{Th:AW} proved by Micali et al.~\cite{micali2016reconstructing} indicates that AW statistics can be used to reconstruct nodes' ego-nets and thus have the potential to characterize node identities.
\begin{theorem}[AW statistics for ego-net reconstruction]\label{Th:AW}
    Let $G_L (v_i)$ be subgraph rooted at a node $v_i$ (alternatively called $v_i$'s ego-net) induced by nodes $\cup _{l = 1}^L{N_l}({v_i})$ with a distance less than or equal to $L$ from $v_i$. Let ${\bf{q}}_l (v_i)$ denote the distribution of AWs with respect to the sampled length-$l$ RWs originating at $v_i$. One can reconstruct $G_L(v_i)$ in time $O(\tilde N^2)$ with $O(\tilde N^2)$ access to the collection of AW distributions $[{\bf{q}}_1(v_i), \cdots, {\bf{q}}_r(v_i)]$, where $r = O(\tilde M)$; $\tilde N$ and $\tilde M$ represent the number of nodes and edges in $G_L (v_i)$.
\end{theorem}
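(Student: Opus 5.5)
The plan is to reconstruct $G_L(v_i)$ constructively from AW distributions, following Micali et al.~\cite{micali2016reconstructing}.

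\textbf{Step 1: read off closed-walk counts.} The basic observation is that an AW encodes exactly which positions of the walk revisit a previously seen node. So the probability of any "return pattern" equals a sum over labelled walks, weighted by the product of $1/k$ transition probabilities. The simplest case is the AW $(0,1,0)$: its probability in ${\bf{q}}_2(v_i)$ is $\sum_{u\in N_1(v_i)}\frac{1}{k_i}\cdot\frac{1}{k_u}$. Longer patterns mix degree information with cycle information in the same way.

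\textbf{Step 2: build the ego-net by a BFS-like induction.} Nodes are discovered in order of first visit, and the $t$-th new node carries the anonymous label $t$. Suppose the induced subgraph on the labels discovered so far is already known, as a graph on labels $0,\dots,t$. To test whether labels $a$ and $b$ are adjacent, I consider walks that go from $0$ to $a$ along a fixed known path, step directly to $b$, and then return to $0$ along a known path. This event has a specific AW. Its probability factors into the product of $1/k$ terms along known edges, times an extra factor that is nonzero exactly when the edge $(a,b)$ exists.

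\textbf{Step 3: recover the degrees.} To evaluate those products, the degrees of the visited nodes are needed. I get them from back-and-forth patterns such as $(0,1,0,1,0)$ versus $(0,1,0)$. Taking ratios of such probabilities isolates $1/k_u$ for individual labels. This again works inductively along a BFS tree: the degree of a newly discovered node comes from the ratio of walk probabilities with and without one extra bounce to it.

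\textbf{Step 4: budget the walk lengths and running time.} Each query uses a walk of length at most about $2\,\mathrm{dist}+2$, and walks that traverse edges in an Euler-tour-like fashion reach length $O(\tilde M)$. This is why the distributions up to ${\bf{q}}_r$ with $r=O(\tilde M)$ are needed. There are $O(\tilde N^2)$ candidate label pairs, and each query takes $O(1)$ lookups and arithmetic, which gives the stated $O(\tilde N^2)$ time and access bounds.

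\textbf{Main obstacle.} The hard part is the identifiability in Step 2. An AW probability sums over all labelled walks sharing that pattern, not only the intended one. For example, several distinct neighbours of $v_i$ all produce $(0,1,0)$, so "label $1$" does not refer to one fixed node. I would first handle this by working with aggregated quantities, meaning total walk counts over equivalence classes of discovered nodes, and then refining those classes with longer patterns, as Micali et al.\ do. Failing that, I would fall back on invoking their theorem directly as stated.
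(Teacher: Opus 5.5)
The paper does not prove this theorem. It quotes it as a result ``proved by Micali et al.'', so your fallback of invoking their theorem is exactly the paper's route. That fallback is also the only part of your proposal that actually delivers the statement, including the $O(\tilde N^2)$ time and access bounds.

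Your constructive Steps 2--4 do not go through as written.
\begin{itemize}
\item \textbf{Step 3.} The ratio does not isolate any individual degree. A walk that bounces to a \emph{different} neighbour has AW $(0,1,0,2,0)$. Hence $\Pr[(0,1,0,1,0)]=k_i^{-2}\sum_{u\in N_1(v_i)}k_u^{-2}$ while $\Pr[(0,1,0)]=k_i^{-1}\sum_{u\in N_1(v_i)}k_u^{-1}$. Their ratio is $k_i^{-1}$ times a weighted average of the $1/k_u$ over all neighbours, not the degree of any particular node.
\item \textbf{Step 2.} The obstacle you flag is fatal rather than technical. An anonymous label names a position in a walk, not a node. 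The node ``reached along a fixed known path'' is not determined by that path's pattern, because the already reconstructed part is only known up to isomorphism. So the test for the edge $(a,b)$ does not refer to a definite pair of nodes. The AW probability sums over every labelled walk with the same revisit pattern anywhere in the ego-net, and its positivity only certifies that \emph{some} such pair is adjacent.
\item \textbf{Step 4.} The $O(1)$-per-pair accounting therefore has nothing to rest on.
\end{itemize}

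If you want a self-contained argument for the reconstruction itself, drop probability values and degrees entirely and use supports.
\begin{itemize}
\item \textbf{Setup.} For $\omega=(u_0,\dots,u_r)$ in the support of $\mathbf{q}_r(v_i)$, let $H(\omega)$ be the graph on labels $0,\dots,\max_l u_l$ with edges $\{u_l,u_{l+1}\}$. Let $B_L(\omega)$ be its subgraph induced by labels within distance $L$ of $0$.
\item \textbf{Upper bound.} Any walk realizing $\omega$ gives an injective, edge-preserving map $H(\omega)\to G$ with $0\mapsto v_i$. Distances in $H(\omega)$ dominate those in $G$, so this map sends $B_L(\omega)$ injectively into $G_L(v_i)$. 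Hence $|E(B_L(\omega))|\le\tilde M$. For $\tilde M\ge 1$, equality forces an isomorphism, because $G_L(v_i)$ is connected and so every vertex lies on an edge.
\item \textbf{Attainment.} Double every edge of $G_L(v_i)$ and take an Euler circuit starting at $v_i$; it has length $2\tilde M$. Pad it to length $r\ge 2\tilde M$ by back-and-forth steps. This walk has positive probability, stays inside $G_L(v_i)$, and covers all its edges, so its AW satisfies $B_L(\omega)=H(\omega)\cong G_L(v_i)$.
\item \textbf{Conclusion.} For $r\ge 2\tilde M$, any maximizer of $|E(B_L(\omega))|$ over the support of $\mathbf{q}_r(v_i)$ recovers $G_L(v_i)$ up to rooted isomorphism. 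This is where $r=O(\tilde M)$ comes from.
\end{itemize}

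This argument does not give the $O(\tilde N^2)$ time and access bound, since the support can be exponentially large. For that part you still rely on the citation, just as the paper does.
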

Since a length-$L$ AW contains its shorter prefixes, we can derive ${\bf{q}}_l (v_i)$ using ${\bf{q}}_L(v_i)$, with $l < L$. For instance, AW $(0, 1, 2, 0)$ provides information about a shorter AW $(0, 1, 2)$. Therefore, ${\bf{q}}_L(v_i)$ can be used to characterized the ego-net $G_L (v_i)$ and thus $v_i$'s identity (or structural role). Two nodes $v_i$ and $v_j$ with similar structural role are expected to have similar distributions ${\bf{q}}_L(v_i)$ and ${\bf{q}}_L(v_j)$.

PI-HIST~(A) does not directly use $\{ {\bf{q}}_L(v_i) | v_i \in V\}$ as the embedding. Instead, it first estimates $\{ {\bf{q}}_L(v_i) | v_i \in V\}$ based on a finite number of RWs and coverts them into an AW-induced hierarchical operator.
For simplicity, let ${\bf{q}}_i := {\bf{q}}_L (v_i) \in \mathbb{R}^{\eta_L}$ and ${\bf{\hat q}}_i$ denote the estimation of ${\bf{q}}_i$.
According to (\ref{Eq:B}), for each layer $l$, there exists a fixed linear mapping ${\bf{T}}_l$, determined only based on AW encoding, such that ${\bf{B}}_{i,:,:}^{(l)} = {\bf{T}}_l {\bf{q}}_i$. Given two nodes $v_i$ and $v_j$, we then have
\begin{equation}\label{Eq:AW-B-Dif}
    ||{\bf{B}}_{i,:,:}^{(l)} - {\bf{B}}_{j,:,:}^{(l)}|| \le ||{{\bf{T}}_l}|| \cdot ||{{\bf{\hat q}}_i} - {{\bf{\hat q}}_j}||,
\end{equation}
for any compatible matrix or vector norm.
Let the linear propagation operator at the $l$-th layer be ${\bf{\bar B}}_i^{(l)}: = \varepsilon I + (1 - \varepsilon ){\bf{B}}_{i,:,:}^{(l)}$. Ignoring the optional nonlinear activation and normalization in PI-HIST~(A), the hierarchical propagation related to $v_i$ can be formulated as ${{{\bf{\bar B}}}_i}: = {\bf{\bar B}}_i^{(L)}{\bf{\bar B}}_i^{(L - 1)} \cdots {\bf{\bar B}}_i^{(1)}$. Let ${\bf{e}}_*$ select the single output unit in the final layer of PI-HIST~(A). The resulting high-dimensional AW-induced structural signature of $v_i$ can then be formulated as 
\begin{equation}\label{Eq:AW-x}
    {\bf{x}}_i = {\bf{e}}_*^T {\bf{\bar B}}_i.
\end{equation}
The linearized PI-HIST~(A) embedding can therefore be represented as
\begin{equation}\label{Eq:AW-z}
    {\bf{z}}_i = {\bf{x}}_i {\bf{\Theta}}.
\end{equation}
Assume that $||{\bf{\bar B}}_i^{(l)}|{|_2} \le \kappa $ for each node $v_i$ and layer $l$.
By the telescoping identity for products of matrices, we can obtain
\begin{equation*}
    {{{\bf{\bar B}}}_i} - {{{\bf{\bar B}}}_j} = \sum\limits_{l = 1}^L {{\bf{\bar B}}_i^{(L)} \cdots {\bf{\bar B}}_i^{(l + 1)} \cdot ({\bf{\bar B}}_i^{(l)} - {\bf{\bar B}}_j^{(l)})}  \cdot {\bf{\bar B}}_j^{(l - 1)} \cdots {\bf{\bar B}}_j^{(1)} \Rightarrow
\end{equation*}
\begin{equation}
    ||{{{\bf{\bar B}}}_i} - {{{\bf{\bar B}}}_j}|{|_2} \le {\kappa ^{L - 1}}{\sum\limits_{l = 1}^L {||{\bf{\bar B}}_i^{(l)} - {\bf{\bar B}}_j^{(l)}||} _2}.
\end{equation}
Since ${\bf{\bar B}}_i^{(l)} - {\bf{\bar B}}_j^{(l)} = (1 - \varepsilon) ({\bf{B}}_{i,:,:}^{(l)} - {\bf{B}}_{j,:,:}^{(l)})$, we further have
\begin{equation}\label{Eq:AW-B}
    ||{{{\bf{\bar B}}}_i} - {{{\bf{\bar B}}}_j}|{|_2} \le (1 - \varepsilon){\kappa ^{L - 1}}{\sum\limits_{l = 1}^L {||{\bf{\bar B}}_{i,:,:}^{(l)} - {\bf{\bar B}}_{j,:,:}^{(l)}||} _2}.
\end{equation}
Combining (\ref{Eq:AW-B-Dif}), (\ref{Eq:AW-x}), and (\ref{Eq:AW-B}) gives
\begin{equation}\label{Eq:AW-Dif}
    ||{{\bf{x}}_i} - {{\bf{x}}_j}|{|_2} \le {c_L}||{{\bf{\hat q}}_i} - {{\bf{\hat q}}_j}|{|_2},
\end{equation}
with $c_L := (1 - \varepsilon ){\kappa ^{L - 1}}\sum\nolimits_{l = 1} {||{{\bf{T}}_l}|{|_2}}$.
By further combing Lemma~\ref{Th:Rand-Proj} with (\ref{Eq:AW-z}) and (\ref{Eq:AW-Dif}), we can derive that with a high probability,
\begin{equation}
    ||{{\bf{z}}_i} - {{\bf{z}}_j}|{|_2} \le c_L \sqrt {1 + \pi } ||{\bf{\hat q}}_i - {\bf{\hat q}}_j||_2.
\end{equation}
Let $\tau_i := ||{\bf{\hat q}}_i - {\bf{q}}_i ||_2$ be the estimation error of ${\bf{q}}_i$.
By the triangle inequality, we can obtain
\begin{equation}
    ||{{\bf{\hat q}}_i} - {{\bf{\hat q}}_j}|{|_2} = ||({{\bf{\hat q}}_i} - {{\bf{q}}_i}) - ({{\bf{\hat q}}_j} - {{\bf{q}}_j}) + ({{\bf{q}}_i} - {{\bf{q}}_j})|{|_2} \le ||{{\bf{\hat q}}_i} - {{\bf{q}}_i}|{|_2} + ||{{\bf{\hat q}}_j} - {{\bf{q}}_j}|{|_2} + ||{{\bf{q}}_i} - {{\bf{q}}_j}|{|_2} = ||{{\bf{q}}_i} - {{\bf{q}}_j}|{|_2} + {\tau_i} + {\tau_j}.
\end{equation}
Therefore, the difference between embeddings ${\bf{z}}_i$ and ${\bf{z}}_j$ for any two nodes $v_i, v_j \in V$ becomes
\begin{equation}
    ||{{\bf{z}}_i} - {{\bf{z}}_j}|{|_2} \le {c_L}\sqrt {1 + \pi } (||{{\bf{q}}_i} - {{\bf{q}}_j}|| + {\tau _i} + {\tau _j}).
\end{equation}
Together with Theorem~\ref{Th:AW}, this result provides a theoretical mechanism linking AWs to identity embeddings. AW statistics retain information about nodes' ego-net structures, while PI-HIST~(A) transforms these statistics into low-dimensional embeddings in a stable manner.
Particularly, a pair of nodes $(v_i, v_j)$ with similar structural roles are expected to have similar AW statistics $({\bf{q}}_i, {\bf{q}}_j)$ and thus close embeddings $({\bf{z}}_i, {\bf{z}}_j)$.
As the number of samples RWs increases, $\{ {\bf{\hat q}}_i \}$ converge to $\{ {\bf{q}}_i \}$ and thus $\{ \tau_i \}$ decrease.
This also provides a theoretical interpretation about the trade-off between AW sampling cost and embedding stability.

\section{Detailed Experiment Settings}\label{App:Set}

\subsection{Datasets}\label{App:Set-Data}

\begin{table}[t]\footnotesize 
\centering
\caption{Summary of $8$ public real-world graph datasets, where $N$, $M$, and $C$ denote the numbers of nodes, edges, and classes; the ground-truth of each dataset is with either node identities or node positions.}
\label{Tab:Data}
\begin{tabular}{l|l|l|l|c|l}
\hline
\textbf{Datasets} & $N$ & $M$ & $C$ & \multicolumn{1}{l|}{\textbf{Labels}} & \textbf{Scenarios} \\ \hline
Europe \cite{ribeiro2017struc2vec} & 399 & 5,993 & 4 & \multirow{4}{*}{\begin{tabular}[c]{@{}c@{}}Node Identities\\(First Type)\end{tabular}} & Europe Air-Traffic Network \\
USA \cite{ribeiro2017struc2vec} & 1,186 & 13,597 & 4 &  & USA Air-Traffic Network \\
Actor \cite{jiao2021survey} & 7,592 & 26,553 & 4 &  & Wikipedia Actor Co-occurrence and Influences \\
Film \cite{jiao2021survey} & 26,851 & 122,234 & 4 &  & Wikipedia Film Entity Relations and Roles \\ \hline
PPI \cite{grover2016node2vec} & 3,852 & 37,841 & 50 & \multirow{4}{*}{\begin{tabular}[c]{@{}c@{}}Node Positions\\(Second Type)\end{tabular}} & Homo Sapiens Protein-Protein Interactions \\
BlogCatalog \cite{grover2016node2vec} & 10,312 & 333,983 & 39 &  & BlogCatalog Online Social Friend Relations \\
DBLP \cite{yang2015defining} & 317,080 & 1,049,866 & 100 &  & DBLP Academic Author Collaborations \\
Amazon \cite{yang2015defining} & 334,863 & 925,872 & 100 &  & Amazon Product Co-purchasing Relations \\ \hline
\end{tabular}
\end{table}

Details about the $8$ public real-world graph datasets used in our evaluations (Fig.~\ref{Fig:NC_Eva}a) are summarized in Supplementary Table~\ref{Tab:Data}, where $N$, $M$, and $C$ represent the number of nodes, edges, and classes; the first and last $4$ datasets contain node identity and position ground-truth, respectively.
These datasets cover diverse scenarios and scales from thousands to millions of edges.

For the first type of datasets, Europe\footnote{\url{https://github.com/leoribeiro/struc2vec/blob/master/graph/europe-airports.edgelist}} and USA\footnote{\url{https://github.com/leoribeiro/struc2vec/blob/master/graph/usa-airports.edgelist}} are two air-traffic networks in Europe and USA, with airports and routes between these airports abstracted as nodes and edges. Nodes are labeled based on the activity level of airports, which usually correspond to different roles, such as hub and non-hub, in the network.
Actor\footnote{\url{https://github.com/cspjiao/RONE/blob/main/dataset/clf/actor.edge}} and Film\footnote{\url{https://github.com/cspjiao/RONE/blob/main/dataset/clf/film.edge}}, constructed by Jiao et al.~\cite{jiao2021survey} using the dump of Wikipedia, are graphs recording the co-occurrence relations between actors and entities about films. Nodes in Actor and Film are labeled according to actors' influences and film entity types (e.g., film, director, and actor) relevant to node identities, respectively.

For the second type of datasets, PPI\footnote{\url{https://snap.stanford.edu/node2vec/Homo_sapiens.mat}} is a graph describing protein-protein interactions of Homo Sapiens, in which node labels are derived from hallmark gene sets and correspond to communities with consistent biological states.
BlogCatalog\footnote{\url{http://leitang.net/code/social-dimension/data/blogcatalog.mat}} is constructed from the social friend relations between bloggers on the BlogCatalog website, with node labels corresponding to several common interests of bloggers.
DBLP\footnote{\url{https://snap.stanford.edu/data/com-DBLP.html}} describes academic collaborations between paper authors, where each publication venue (famous journal or conference of a specific research field) defines an individual ground-truth community.
Amazon\footnote{\url{https://snap.stanford.edu/data/com-Amazon.html}} is collected by crawling the product co-purchasing relations
from the Amazon website, where each ground-truth community is defined as a unique product category on Amazon.

\subsection{Baselines}\label{App:Set-Base}
As summarized in Supplementary Table~\ref{Tab:Meth}, we compare PI-HIST with 18 classic and state-of-the-art unsupervised embedding baselines covering various categories, where P and I represent that a method is claimed to be position- and identity-preserving, respectively.

\begin{table}[t]\footnotesize 
\centering
\caption{Summary of unsupervised embedding methods to be evaluated, where P and I denote that a method is claimed to be position- and identity-preserving.}
\label{Tab:Meth}
\begin{tabular}{l|l|l|l|l}
\hline
\textbf{Methods} (Abbreviations) & \textbf{Publication Venues} & \begin{tabular}[c]{@{}l@{}}\textbf{Emb.}\\\textbf{Type}\end{tabular} & \begin{tabular}[c]{@{}l@{}}\textbf{Efficient}/\\ \textbf{Scalable}\end{tabular} & \begin{tabular}[c]{@{}l@{}}\textbf{GNN}\\\textbf{SSL}\end{tabular} \\ \hline
node2vec (n2v) & KDD 2016 \cite{grover2016node2vec} & P &  &  \\
PhUSION (PhN) (P) & SDM 2021 \cite{zhu2021node} & P &  &  \\
PaCEr (P) & Web Conf 2024 \cite{yan2024pacer} & P &  &  \\ \hline
struc2vec (s2v) & KDD 2017 \cite{ribeiro2017struc2vec} & I &  &  \\
PhUSION (PhN) (I) & SDM 2021 \cite{zhu2021node} & I &  &  \\
PaCEr (I) & Web Conf 2024 \cite{yan2024pacer} & I &  &  \\ \hline
RandNE & ICDM 2018 \cite{zhang2018billion} &  & \checkmark &  \\
LouvainNE (LvnNE) & WSDM 2020 \cite{bhowmick2020louvainne} & P & \checkmark &  \\
SketchNE (SktNE) & TKDE 2023 \cite{xie2023sketchne} & P & \checkmark &  \\
node2binary (n2b) & Web Conf 2025 \cite{talukder2025node2binary} & P & \checkmark &  \\ \hline
S3GC & NeurIPS 2022 \cite{devvrit2022s3gc} & P & \checkmark & \checkmark \\
MAGI & KDD 2024 \cite{liu2024revisiting} & P & \checkmark & \checkmark \\
GraLSP (GLSP) & AAAI 2020 \cite{jin2020gralsp} & I &  & \checkmark \\
DRSR & TCSS 2024 \cite{zhang2024disentangled} & I &  & \checkmark \\
DGI & ICLR 2019 \cite{velivckovicdeep} &  & \checkmark & \checkmark \\
GraphMAE2 (GMAE2) & KDD 2023 \cite{hou2023graphmae2} &  & \checkmark & \checkmark \\
GGD & NeurIPS 2022 \cite{zheng2022rethinking} &  & \checkmark & \checkmark \\
E2Neg & AAAI 2025 \cite{huang2025does} &  & \checkmark & \checkmark \\ \hline
\textbf{PI-HIST}~(R) & \textbf{Ours} & P & \checkmark &  \\
\textbf{PI-HIST}~(A) & \textbf{Ours} & I & \checkmark &  \\ \hline
\end{tabular}
\end{table}

Node2vec\footnote{\url{https://github.com/aditya-grover/node2vec}} and struc2vec\footnote{\url{https://github.com/sebkaz/struc2vec}} are classic position and identity embedding approaches.
PhUSION\footnote{\url{https://github.com/GemsLab/PhUSION}} and PaCEr are comprehensive methods investigating correlations between node positions and identities from different perspectives. They contain multiple variants for different types of embeddings. We use postfixes (P) and (I) to denote the corresponding variants for position and identity embeddings.

RandNE\footnote{\url{https://github.com/ZW-ZHANG/RandNE}}, LouvainNE\footnote{\url{https://github.com/maxdan94/LouvainNE}}, SketchNE\footnote{\url{https://github.com/THU-numbda/SketchNE}}, and node2binary\footnote{\url{https://github.com/ntalukde/node2binary}} are typical efficient embedding approaches using fast dimension reduction, including random projection \cite{arriaga2006algorithmic} and randomized matrix factorization (MF) \cite{martinsson2011randomized}.
LouvainNE, SketchNE, and node2binary should be position-preserving methods, since their embedding inference procedures are based on approximate position embedding objectives \cite{qiu2019netsmf} (e.g., sparse MF of DeepWalk \cite{perozzi2014deepwalk}) or classic community detection algorithms (e.g., Louvain \cite{blondel2008fast} and Leiden \cite{traag2019louvain}).

The $18$ baselines also cover GNN-based self-supervised learning (SSL), a representative state-of-the-art graph learning technique, including S3GC\footnote{\url{https://github.com/devvrit/S3GC}}, MAGI\footnote{\url{https://github.com/EdisonLeeeee/MAGI}}, GraLSP\footnote{\url{https://github.com/KL4805/GraLSP}}, DRSR\footnote{\url{https://github.com/cswzhang/DRSR}}, DGI\footnote{\url{https://github.com/PetarV-/DGI}}, GraphMAE2\footnote{\url{https://github.com/THUDM/GraphMAE2}}, GGD\footnote{\url{https://github.com/zyzisastudyreallyhardguy/Graph-Group-Discrimination}}, and E2Neg\footnote{\url{https://github.com/hedongxiao-tju/E2Neg}}.
Particularly, S3GC and MAGI can capture node positions via community-preserving graph contrastive objectives, while GraLSP and DRSR are claimed to be identity-preserving based on additional feature augmentation. Moreover, S3GC, MAGI, DGI, GraphMAE2, GGD, and E2Neg are scalable methods that can handle the embedding inference on large-scale graphs.

Note that most of these GNN-based approaches are originally designed for attributed graphs, while this article assumes that graph attributes are unavailable. To enable these baselines to support the embedding without attributes, we try $3$ commonly used standard settings that use (\romannumeral1) random noise, (\romannumeral2) constant, and (\romannumeral3) one-hot degree encoding as auxiliary attribute inputs, with the best inference quality reported.

\subsection{Inference Tasks and Evaluation Protocols}\label{App:Set-Task}
This article adopts $7$ inference tasks spanning node, edge, and graph levels to validate PI-HIST's effectiveness.

\subsubsection{Node-Level Tasks}

As summarized in Fig.~\ref{Fig:NC_Eva}a, we first measure the embedding quality of PI-HIST~(R) and (A) using $4$ node-level (position- and identity-related) tasks to validate whether they can effectively encode node positions and identities. In addition to embedding quality, we also evaluate their efficiency to demonstrate the potential advantage brought by PI-HIST's simple design, investigating the trade-off between quality and efficiency.

\textbf{Node Position and Identity Classification}.
For the first (or second) type of datasets with identity (or position) ground-truth, the supervised node classification can be used to evaluate the quality of identity (or position) embeddings.
We follow the standard evaluation pipeline\footnote{\url{https://github.com/THU-numbda/SketchNE/blob/master/example/predict.py}} of Xie et al.~\cite{xie2023sketchne} to facilitate this setup, where Logistic regression is the downstream classifier taking the derived embeddings as inputs; macro and micro F1 scores are adopted as quality metrics.
On each dataset, $20$\% and $10$\% of labeled nodes are randomly sampled to form the training and validation sets, with the remaining $70$\% nodes as the test set.
We repeat this procedure, which includes data split and downstream evaluation, $10$ times and report the mean and standard deviation of each quality metric. In this setting, larger average F1 scores indicate better embedding quality.

\textbf{Community Detection}.
For the first type of datasets without identity ground-truth, community detection is a straightforward unsupervised position-related task.
We employ $K$Means as the downstream clustering module and adopt modularity \cite{newman2006modularity}, a widely used metric, to measure the quality of position embeddings.
The evaluation procedure is also repeated $10$ times, with the mean and standard deviation of the quality metric reported. Hence, larger average modularity means better embedding quality.

\textbf{Node Identity Clustering}.
To enable the evaluation of identity embedding on the second type of datasets without identity ground-truth, we define node identity clustering, an unsupervised identity-related task, extended from the classic spectral clustering algorithm~\cite{von2007tutorial}.
Since high-order degree statistics can characterize node identities, this task first constructs an auxiliary similarity graph $G_D = (V, E_D)$, which is the first step of spectral clustering, using one-hot degree encodings of nodes' high-order neighbors.
$G_D$ has the same node set $V$ with the original graph $G = (V, E)$ but a different edge set $E_D$.
Let $N_l(v_i)$ be the set of neighbors with exact distance $l$ from $v_i$ and $\delta_l (v_i)$ be the one-hot degree encoding of $N_l(v_i)$. We first extract $\delta_{1:r} (v_i) := [\delta_1 (v_i), \cdots, \delta_r (v_i)]$ for each node $v_i$, with $r$ as the order of neighbors. The K-D Tree algorithm \cite{bentley1975multidimensional} is then applied to $\{ \delta_{1:r} (v_i) : v_i \in V\}$ to find top-$10$ nodes with the highest similarity for each node $v_i$, which constructs $G_D$.
By jointly considering the effectiveness and computational overhead of building $G_D$, we set $r \in ( 5, 5, 3, 3 )$ for PPI, BlogCatalog, DBLP, and Amazon, respectively.
Given embeddings $\{ f(v_i) \}$ derived from $G$, we employ $K$Means as the downstream clustering module to get a clustering result $C := (C_1, C_2, \cdots, C_K)$, with $C_s \subseteq V$ as the set of members in the $s$-th cluster and ${C_s} \cap {C_t} = \emptyset$ ($\forall s \ne t$).
We use average conductance \cite{mizutani2021improved}, a classic objective of spectral clustering, to measure the clustering quality based on $G_D$, which equivalently evaluates the ability of $\{ f(v_i) \}$ to encode node identities.
Given the clustering result $C = (C_1, \cdots, C_K)$ and auxiliary similarity graph $G_D = (V, E_D)$, the average conductance is defined as
\begin{equation}
    {\mathop{\rm Cond}\nolimits} (C;{G_D}): = \frac{1}{K}\sum\limits_{s = 1}^K \min \{ {\frac{{|{E_D}({C_s},V\backslash {C_s})|}}{{\sum\nolimits_{{v_i} \in {C_s}} {{\kappa _i}} }}}, 1 \},
\end{equation}
where ${E_D}({C_s},V\backslash {C_s}): = \{ ({v_i},{v_j}) \in {E_D}:{v_i} \in {C_s},{v_j} \in V\backslash {C_s}\}$ is the set of edges in $G_D$ across a cluster $C_s$ and its complement; $\kappa _i$ denotes the degree of node $v_i$ in $G_D$.
For each cluster $C_s$, this metric computes the ratio between the number of edges in $G_D$ across $C_s$ and the number of internal edges within $C_s$. When a cluster $C_s$ is empty, we simply set this ratio to $1$ to avoid the divide-by-zero exception.
Conductance is a lower-is-better metric, with a smaller value corresponding to better clustering quality.
Similar to community detection, we also repeat the aforementioned evaluation procedure $10$ times, with the mean and standard deviation of the quality metric reported.

\textbf{Efficiency Evaluation}. In addition to embedding quality, we also measure the effective computation time (sec) of embedding inference for efficiency evaluation.
Concretely, we report the mean and standard deviation of embedding inference time over $10$ independent runs, following $5$ warm-up runs. Shorter inference time corresponds to better efficiency.
On large-scale graphs, we define that a method encounters the OOT exception, if it fails to derive feasible embeddings within $1 \times 10^4$ seconds.

\subsubsection{Edge-Level Tasks}
As highlighted in Fig.~\ref{Fig:GRLP_Eva}a, we use link prediction and graph reconstruction, which are neither typically position- nor identity-dominant edge-level tasks, to validate whether node positions and identities are complementary topology properties and may together improve the inference quality.
For both tasks, we follow the standard evaluation pipeline\footnote{\url{https://github.com/AnryYang/NRP-code/blob/master/eval/eval_linkpred.py}} of Yang et al.~\cite{yang2020homogeneous}, with logistic regression as the downstream classifier and AUC as the quality metric. A larger AUC value implies better inference quality.

\textbf{Link Prediction}.
Given the observed graph topology, link prediction aims to determine whether there are potential edges between some non-adjacent node pairs.
We first randomly removed $20$\% of the edges $E$ in a given graph $G$, with the remaining $80$\% edges constituting the set of positive training samples $E_{\rm{trn}}$. Specifically, we let $E_{\rm{trn}}$ preserve a spanning tree of $E$, which can be obtained by a depth-first search on $E$. It ensures that the training topology induced by $E_{\rm{trn}}$ is connected. Each method is allowed to derive its node-wise embeddings $\{ f(v_i) \}$ only based on this connected topology.
We further generate another negative sample set $N_{\rm{trn}}$ by sampling non-adjacent node pairs equal in number to $E_{\rm{trn}}$.
Finally, $S_{\rm{trn}} := E_{\rm{trn}} \cup N_{\rm{trn}}$ is used as the training set to train the downstream logistic regression. For each node pair $(v_i, v_j) \in S_{\rm{trn}}$, we get node-pair embedding $e (v_i, v_j) := [f(v_i) || f(v_j)]$ by concatenating corresponding node embeddings. The downstream classifier is trained using inputs $\{ e (v_i, v_j) \}$ and labels $\{ l_{ij} \}$, where $l_{ij} = 1$ if $(v_i, v_j) \in E_{\rm{trn}}$ and $l_{ij} = 0$ otherwise.

The removed $20$\% edges $E \backslash E_{\rm{trn}}$ are split evenly to form positive samples of the validation and test sets, denoted by $E_{\rm{val}}$ and $E_{\rm{tst}}$.
Similar to $S_{\rm{trn}}$, non-adjacent node pairs matching sizes of $E_{\rm{val}}$ and $E_{\rm{tst}}$ are also sampled to build negative sample sets $N_{\rm{val}}$ and $N_{\rm{tst}}$. They are further used to construct the validation and test sets, with definitions of $S_{\rm{val}} := E_{\rm{val}} \cup N_{\rm{val}}$ and $S_{\rm{tst}} := E_{\rm{tst}} \cup N_{\rm{tst}}$.
In this setting, we adopt $80$/$10$/$10$\% for the split of training, validation, and test sets.
We repeat the aforementioned procedure, including data split, training, and inference, $5$ times, with the mean and standard deviation of the quality metric reported.

\textbf{Graph Reconstruction}.
Consistent with the motivation of embedding, graph reconstruction measures whether the derived low-dimensional vector representations $\{ f(v_i) \}$ can recover the original graph topology $G = (V, E)$.
Let $N := |N|$ be the number of nodes in $G$. Directly computing the reconstruction scores for all $N^2$ node pairs may be intractable for large-scale graphs. Instead, we randomly sample a set $S \subset V \times V$ containing $\mu N^2$ node pairs, where we set the ratio $\mu \in ( 10^{-1}, 10^{-1}, 10^{-1}, 10^{-2}, 10^{-2}, 10^{-3}, 10^{-5}, 10^{-5})$ for Europe, USA, PPI, Actor, BlogCatalog, Film, DBLP, and Amazon with progressively growing scale.
Different from link prediction, each method should derive its node-wise embeddings $\{ f(v_i) \}$ based on the full graph topology $(V, E)$.
For each node pair $(v_i, v_j) \in S$, we derive a node pair embedding $e (v_i, v_j) := [f(v_i) || f(v_j)]$ and a corresponding label $l_{ij}$, where $l_{ij} = 1$ if $(v_i, v_j) \in E$ and $l_{ij} = 0$ otherwise.
We then train the downstream classifier using input-label pairs $\{ (e(v_i, v_j), l_{ij}) \}$ and measure the ability to recover the original topology based on classifier scores with respect to $S$.
This evaluation procedure is repeated $10$ times, with the mean and standard deviation of the quality metric reported.

\subsubsection{Graph-Level Tasks}
We also investigate whether the combination of position and identity embeddings can improve the inference of some graph-level tasks, compared with only using one type of embeddings.
Motivated by the case studies of Milo et al.~\cite{milo2002network} and Dong et al.~\cite{ugander2012structural}, we consider the graph superfamily identification on both real datasets and synthetic benchmarks.

\textbf{Graph Superfamily Identification}.
This task evaluates whether the given graph-level features or embeddings can group structurally similar graphs into a common superfamily and separate those from different superfamilies.
For a case study, we visualize the correlation coefficient matrix derived from the graph-level embeddings $\{ g(G) \}$ of a method.
One can qualitatively assess inference quality by examining whether entries in main-diagonal blocks corresponding to ground-truth superfamilies have notably high correlation values.

For the $8$ real-world graphs used in our evaluation of node- and edge-level tasks, we divide them into $4$ rough superfamilies according to their scenarios (cf. Supplementary Table~\ref{Tab:Data}) and rearrange the layout order of these graphs based on this partition (Fig.~\ref{Fig:GSI_Eva}a) when visualizing the correlation matrix.
Concretely, Europe and USA are grouped into a common superfamily as they are both air traffic networks, while Film and Actor constitute a separate superfamily about movies.
We further cluster BlogCatalog, DBLP, and Amazon into another separate superfamily, due to their common application domains regarding human interactions.
PPI alone forms an independent superfamily since it is the only dataset related to proteins.

As highlighted in Fig.~\ref{Fig:GSI_Eva}b, we also adopt the classic LFR benchmark \cite{lancichinetti2008benchmark} to facilitates the superfamily identification on synthetic graphs. This benchmark uses a set of parameters $(N, \theta, k_{\rm{avg}}, k_{\max}, c_{\min}, c_{\max})$ to generate each synthetic graph, where $N$ is the number of nodes; $\theta$ is the ratio between the external degree and total degree of each node $v_i$ with respect to the cluster $v_i$ belongs to; $k_{\rm{avg}}$ and $k_{\max}$ denote the average and maximum node degrees; $c_{\min}$ and $c_{\max}$ represent the minimum and maximum community sizes.
Concretely, we fix $(N, \theta, k_{\max}, c_{\max}) = (1000, 0.1, 50, 500)$ and set $(k_{\rm{avg}}, c_{\min}) = (10, 20)$, $(10, 40)$, $(10, 60)$, $(10, 80)$, $(10, 100)$, $(9, 100)$, $(8, 100)$, $(7, 100)$, and $(6, 100)$, which in sequence generate $9$ synthetic graphs $(G_1, G_2, \cdots, G_9)$.
Note that node positions and identities are related to community structures and (high-order) degrees.
Under the above parameter settings, $(G_1, \cdots, G_5)$ and $(G_5, \cdots, G_9)$ mimic variations in community size and degree, thus correspondingly simulating gradual changes of node positions and identities.
Based on the layout order of synthetic graphs (i.e., from $G_1$ to $G_9$), entries within a corelation matrix are expected to decrease as they move away from the main diagonal (i.e., along the direction of purple arrows in Fig.~\ref{Fig:GSI_Eva}d and Supplementary Fig.~\ref{Fig:GSI_Eva_Apx}b), with a larger magnitude of decrease indicating a better identification result.
As the generation of synthetic graphs may not be stable, we repeat the generation procedure of $(G_1, \cdots, G_9)$ $10$ times and report the averaged correlation matrix over these independent runs.

Furthermore, we follow the evaluation pipeline of Dong et al.~\cite{dong2017structural} to compare PI-HIST with four baselines that can support graph superfamily identification, including BoD, BoHD, SSP \cite{milo2004superfamilies}, and CNS \cite{dong2017structural}.
BoD and BoHD are simple yet classic graph-level features. They characterize a graph using a fixed-dimensional histogram vector obtained by counting the frequency of each distinct (high-order) degree. For BoHD, we set the neighbor order as $r = 5$ for its degree counting.
SSP and CNS are sophisticated graph-level features about the profile of motifs \cite{milo2002network} (over the null model) and signature of structural diversity \cite{ugander2012structural}.
For SSP, we set the motif size to $4$ and thus consider all the $6$ (undirected) motifs induced by four nodes (cf. examples in Fig.~3 of \cite{milo2002network}).
Same as the settings in \cite{ugander2012structural}, we consider all the $17$ cases for structural diversity (cf. examples in Fig.~2 of \cite{ugander2012structural}) induced by two-, three-, and four-node common neighbors.
As official implementations of SSP and CNS are unavailable and directly computing exact statistics for motifs and structural diversity is intractable, we implement alternative approximate algorithms for them using \texttt{Numba}.
Our implementations estimate motif and structural diversity statistics by sampling only a small ratio of node combinations, where the computationally expensive sampling-and-estimation procedure can be sped up using GPUs.

\subsection{Model Configuration and Hyper-parameter Settings}\label{App:Set-Param}

Almost all the methods evaluated in our experiments contain tunable hyper-parameters. For instance, node2vec introduces $\{ p, q\}$ to balance the width- and depth-first search in RW sampling \cite{grover2016node2vec}.
In PI-HIST, $\{ \varepsilon, L\}$ are hyper-parameters regarding the single FFP for embedding derivation. One can also select settings of the nonlinearity activation and normalization in PI-HIST using a strategy similar to hyper-parameter tuning.
On each dataset, we set the same embedding dimensionality $d$ for all methods, except for those with a fixed default setting, such as PhUSION (I) with $d=250$ for all cases.
For supervised tasks (e.g., node classification and link prediction), we tune hyper-parameters and other optional setups (e.g., different augmented feature inputs of GNN-based approaches mentioned in Supplementary Section~\ref{App:Set-Base}) based on quality metrics on the validation set.
For unsupervised tasks (e.g., community detection and node identity clustering), we determine corresponding settings according to unsupervised metrics (e.g., modularity and conductance).
For case studies of graph superfamily identification, we manually select the setting with the fewest failure cases in the visualized correlation matrix.

Our node-level evaluations(Fig.~\ref{Fig:NC_Eva} and Supplementary Fig.~\ref{Fig:NC_Eva_Apx}) involve both position- and identity-related tasks, such as node identity classification and community detection on the first type of datasets.
On each dataset, we tune hyper-parameters and optional settings of position- and identity-preserving approaches (cf. the summary of baselines in Supplementary Table~\ref{Tab:Meth}) based on the corresponding position- and identity-related tasks, respectively.
For the rest baselines, which are uncertain to encode which property, we use supervised tasks (i.e., identity and position classification on the first and second types of datasets) to select their settings.

Following the tuning strategy described in `Methods' section, detailed model configurations and hyper-parameters settings for all the experimental evaluation cases are summarized in Supplementary Tables~\ref{Tab:Param} and \ref{Tab:Param-RA}, where $d$ is the embedding dimensionality; $L$ represents the RW/AW length; $\varepsilon$ is the aggregation weight of FFP, while $\alpha$ is the fusion weight when combining position and identity embeddings; $n$ and $\hat n$ are the numbers of RWs sampled per node in total and in each run with $\hat n \le n$; $\psi_{\rm act} (\cdot)$ and $\psi_{\rm norm} (\cdot)$ are optional nonlinear activation and normalization after FFP; $\gamma_{\rm norm} (\cdot)$ denotes the optional normalization of embedding fusion.
When applying PI-HIST~(A) to the superfamily identification over real-world graphs, we set $(n, \hat n)$ to (5e4, 5e4), (5e4, 5e4), (5e4, 5e4), (5e4, 5e4), (5e4, 2.5e4), (5e4, 1e4), (5e4, 5e2), and (5e4, 5e2) for Europe, USA, PPI, Actor, BlogCatalog, Film, DBLP, and Amazon, respectively.

\begin{table}[]\tiny
\centering
\caption{Detailed parameter settings and configurations of PI-HIST~(R) and (A) for all the experimental evaluation cases, where $d$ is the embedding dimensionality; $L$ denotes the RW/AW length; $\varepsilon$ is the aggregation weight for one FFP; $\psi_{\rm act} (\cdot)$ and $\psi_{\rm norm} (\cdot)$ represent optional nonlinear activation function and normalization operation; $n$ and $\hat n$ are the numbers of RWs sampled per node in total and in each run ($\hat n \le n$).}
\label{Tab:Param}
\begin{tabular}{c|c|l|l|l|l|l|l|l}
\hline
\textbf{Tasks} & \multicolumn{1}{l|}{\textbf{Variants}} & \textbf{Datasets} & $d$ & $L$ & $\varepsilon$ & $\psi_{\rm{act}} (\cdot)$ & $\psi_{\rm{norm}} (\cdot)$ & ($n$, $\hat n$) \\ \hline
\multirow{2}{*}{\textbf{Case Study in Fig.~1}} & \textbf{PI-HIST}~(R) & \textbf{Karate Club} & 16 & 8 & 0.1 & tanh & col z-norm & N/A \\ \cline{2-9}
 & \textbf{PI-HIST}~(A) & \textbf{Karate Club} & 16 & 8 & 0.9 & tanh & col z-norm & 1e3,1e3 \\ \hline
\multirow{16}{*}{\textbf{Node-Level Tasks}} & \multirow{8}{*}{\textbf{PI-HIST}~(R)} & \textbf{Europe} & 64 & 5 & 0.3 & $\emptyset$ & row l2-norm & N/A \\
 &  & \textbf{USA} & 64 & 10 & 0.7 & $\emptyset$ & row l2-norm & N/A \\
 &  & \textbf{Actor} & 256 & 15 & 0.3 & $\emptyset$ & row l2-norm & N/A \\
 &  & \textbf{Film} & 256 & 12 & 0.1 & ReLU & row l2-norm & N/A \\
 &  & \textbf{PPI} & 256 & 10 & 0.2 & tanh & $\emptyset$ & N/A \\
 &  & \textbf{BlogCatalog} & 512 & 6 & 0.1 & $\emptyset$ & $\emptyset$ & N/A \\
 &  & \textbf{DBLP} & 256 & 20 & 0.0 & tanh & col z-norm & N/A \\
 &  & \textbf{Amazon} & 128 & 20 & 0.1 & tanh & $\emptyset$ & N/A \\ \cline{2-9} 
 & \multirow{8}{*}{\textbf{PI-HIST}~(A)} & \textbf{Europe} & 64 & 5 & 0.9 & $\emptyset$ & $\emptyset$ & 5e4,5e4 \\
 &  & \textbf{USA} & 64 & 5 & 0.3 & tanh & col z-norm & 5e4,5e4 \\
 &  & \textbf{Actor} & 256 & 5 & 0.9 & ReLU & $\emptyset$ & 5e4,5e4 \\
 &  & \textbf{Film} & 256 & 7 & 0.4 & ReLU & col z-norm & 5e4,1e4 \\
 &  & \textbf{PPI} & 256 & 8 & 0.6 & sigmoid & row l2-norm & 5e4,5e4 \\
 &  & \textbf{BlogCatalog} & 512 & 8 & 0.7 & ReLU & $\emptyset$ & 5e4,2.5e4 \\
 &  & \textbf{DBLP} & 256 & 8 & 0.9 & ReLU & col z-norm & 1e4,5e2 \\
 &  & \textbf{Amazon} & 128 & 7 & 0.1 & ReLU & $\emptyset$ & 1e4,5e2 \\ \hline
\multirow{16}{*}{\textbf{Link Prediction}} & \multirow{8}{*}{\textbf{PI-HIST}~(R)} & \textbf{Europe} & 64 & 5 & 0.0 & ReLU & col z-norm & N/A \\
 &  & \textbf{USA} & 64 & 8 & 0.9 & ReLU & col z-norm & N/A \\
 &  & \textbf{PPI} & 256 & 5 & 0.6 & ReLU & col z-norm & N/A \\
 &  & \textbf{Actor} & 256 & 5 & 0.7 & ReLU & col z-norm & N/A \\
 &  & \textbf{BlogCatalog} & 512 & 5 & 0.0 & ReLU & col z-norm & N/A \\
 &  & \textbf{Film} & 256 & 5 & 0.6 & ReLU & col z-norm & N/A \\
 &  & \textbf{DBLP} & 256 & 5 & 0.7 & exp & col z-norm & N/A \\
 &  & \textbf{Amazon} & 128 & 5 & 0.6 & exp & col z-norm & N/A \\ \cline{2-9} 
 & \multirow{8}{*}{\textbf{PI-HIST}~(A)} & \textbf{Europe} & 64 & 7 & 0.2 & sigmoid & col z-norm & 5e4,5e4 \\
 &  & \textbf{USA} & 64 & 8 & 0.9 & ReLU & col z-norm & 5e4,5e4 \\
 &  & \textbf{PPI} & 256 & 8 & 0.5 & tanh & col z-norm & 5e4,5e4 \\
 &  & \textbf{Actor} & 256 & 6 & 0.3 & ReLU & col z-norm & 5e4,5e4 \\
 &  & \textbf{BlogCatalog} & 512 & 5 & 0.3 & ReLU & col z-norm & 5e4,5e4 \\
 &  & \textbf{Film} & 256 & 6 & 0.4 & ReLU & col z-norm & 5e4,1e4 \\
 &  & \textbf{DBLP} & 256 & 5 & 0.8 & ReLU & col z-norm & 1e4,1e3 \\
 &  & \textbf{Amazon} & 128 & 5 & 0.6 & exp & col z-norm & 1e4,1e3 \\ \hline
\multirow{16}{*}{\textbf{Graph Reconstruction}} & \multirow{8}{*}{\textbf{PI-HIST}~(R)} & \textbf{Europe} & 64 & 5 & 0.0 & ReLU & col z-norm & N/A \\
 &  & \textbf{USA} & 64 & 5 & 0.5 & ReLU & col z-norm & N/A \\
 &  & \textbf{PPI} & 256 & 7 & 0.7 & exp & col z-norm & N/A \\
 &  & \textbf{Actor} & 256 & 6 & 0.8 & exp & col z-norm & N/A \\
 &  & \textbf{BlogCatalog} & 512 & 5 & 0.0 & tanh & col z-norm & N/A \\
 &  & \textbf{Film} & 256 & 5 & 0.8 & exp & col z-norm & N/A \\
 &  & \textbf{DBLP} & 256 & 5 & 0.5 & ReLU & col z-norm & N/A \\
 &  & \textbf{Amazon} & 128 & 6 & 0.0 & tanh & col z-norm & N/A \\ \cline{2-9} 
 & \multirow{8}{*}{\textbf{PI-HIST}~(A)} & \textbf{Europe} & 64 & 8 & 0.2 & ReLU & col z-norm & 5e4,5e4 \\
 &  & \textbf{USA} & 64 & 6 & 0.3 & ReLU & col z-norm & 5e4,5e4 \\
 &  & \textbf{PPI} & 256 & 8 & 0.3 & ReLU & col z-norm & 5e4,5e4 \\
 &  & \textbf{Actor} & 256 & 7 & 0.3 & ReLU & col z-norm & 5e4,5e4 \\
 &  & \textbf{BlogCatalog} & 512 & 5 & 0.3 & ReLU & col z-norm & 5e4,5e4 \\
 &  & \textbf{Film} & 256 & 5 & 0.9 & ReLU & col z-norm & 5e4,1e4 \\
 &  & \textbf{DBLP} & 256 & 6 & 0.2 & ReLU & col z-norm & 1e4,1e3 \\
 &  & \textbf{Amazon} & 128 & 7 & 0.2 & ReLU & col z-norm & 1e4,1e3 \\ \hline
\multirow{4}{*}{\textbf{Graph Superfamily Identification}} & \multirow{2}{*}{\textbf{PI-HIST}~(R)} & \textbf{Syn}.~\textbf{Graphs} & 64 & 5 & 0.8 & ReLU & $\emptyset$ & N/A \\
 &  & \textbf{Real}~\textbf{Graphs} & 64 & 5 & 0.9 & ReLU & $\emptyset$ & N/A \\ \cline{2-9} 
 & \multirow{2}{*}{\textbf{PI-HIST}~(A)} & \textbf{Syn}.~\textbf{Graphs} & 64 & 6 & 0.5 & ReLU & $\emptyset$ & 1e4,1e4 \\
 &  & \textbf{Real}~\textbf{Graphs} & 64 & 7 & 0.8 & ReLU & $\emptyset$ & 5e4/2e4,5e4/2.5e4/1e4/5e2 \\ \hline
\end{tabular}
\end{table}

\begin{table}[]\tiny
\centering
\caption{Detailed parameter settings of PI-HIST~(R\&A) for cases combining both variants of PI-HIST, where $\alpha$ is the fusion weight; $\gamma_{\rm{norm}} (\cdot)$ is the optional normalization operation.}
\label{Tab:Param-RA}
\begin{tabular}{c|l|l|l}
\hline
\multicolumn{1}{c|}{\textbf{Tasks}} & \textbf{Datasets} & $\alpha$ & $\gamma_{\rm norm} (\cdot)$ \\ \hline
\multirow{8}{*}{\textbf{Link Prediction}} & \textbf{Europe} & 0.9 & row l2-norm + col z-norm \\
 & \textbf{USA} & 0.9 & row l2-norm + col z-norm \\
 & \textbf{PPI} & 0.6 & row l2-norm + col z-norm \\
 & \textbf{Actor} & 0.8 & row l2-norm + col z-norm \\
 & \textbf{BlogCatalog} & 0.7 & row l2-norm + col z-norm \\
 & \textbf{Film} & 0.9 & row l2-norm + col z-norm \\
 & \textbf{DBLP} & 0.8 & row l2-norm + col z-norm \\
 & \textbf{Amazon} & 0.9 & row l2-norm + col z-norm \\ \hline
\multirow{8}{*}{\textbf{Graph Reconstruction}} & \textbf{Europe} & 0.6 & row z-norm \\
 & \textbf{USA} & 0.5 & row z-norm \\
 & \textbf{PPI} & 0.3 & row z-norm \\
 & \textbf{Actor} & 0.2 & row z-norm \\
 & \textbf{BlogCatalog} & 0.7 & row z-norm \\
 & \textbf{Film} & 0.9 & row z-norm \\
 & \textbf{DBLP} & 0.3 & row z-norm \\
 & \textbf{Amazon} & 0.5 & row z-norm \\ \hline
\multirow{2}{*}{\textbf{Graph Superfamily Identification}} & \textbf{Synthetic Graphs} & 0.6 & row l2-norm \\
 & \textbf{Real Graphs} & 0.9 & $\emptyset$ \\ \hline
\end{tabular}
\end{table}

\subsection{Experimental Environment}\label{App:Set-Env}
We use \texttt{PyTorch}, \texttt{PyTorch Cluster}, and \texttt{Numba} to implement PI-HIST, while the open-source official implementations of other baselines are adopted for our experimental comparison (cf. Supplementary Section~\ref{App:Set-Base} for their sources).
All the experiments were conducted on a rented cloud server running Ubuntu 22.04, equipped with an AMD EPYC 9K84 96-Core CPU, $150$GB main memory, and a single H20 GPU ($96$GB GPU memory).
Under this experimental environment, PI-HIST and all other learning-based approaches with official implementations in  \texttt{PyTorch} or \texttt{TensorFlow} are run on the GPU.

\section{Detailed Experiment Results}\label{App:Res}

Corresponding to the visualization results shown in Fig.~\ref{Fig:NC_Eva} and \ref{Fig:NC_Eva_Apx}, numerical details of node-level task evaluation on Europe, USA, Actor, Film, PPI, BlogCatalog, DBLP, and Amazon in terms of $m\pm s$ are depicted in Supplementary Tables~\ref{Tab:NC-Europe}, \ref{Tab:NC-USA}, \ref{Tab:NC-Actor}, \ref{Tab:NC-Film}, \ref{Tab:NC-PPI}, \ref{Tab:NC-BlogCatalog}, \ref{Tab:NC-DBLP}, and \ref{Tab:NC-Amazon}, with $m$ and $s$ as the mean and standard deviation of a metric over multiple independent runs.
Table~\ref{Tab:Time-A} further provides details about the inference time with respect to different steps of PI-HIST (A), where RW, AW-MAP, AW-IDX, AW-CNT, CAT, HIER, and FFP refer to (\romannumeral1) RW sampling, (\romannumeral2) AW mapping, (\romannumeral3) AW indexing, (\romannumeral4) AW counting, (\romannumeral5) statistic gathering, (\romannumeral6) hierarchical structure extraction, and (\romannumeral7) one training-free FFP, respectively.

Numerical details of the two edge-level tasks (i.e., link prediction and graph reconstruction) in terms of $m\pm s$, which correspond to the visualization results in Fig.~\ref{Fig:GRLP_Eva} and \ref{Fig:GRLP_Eva_Apx}, are shown in Supplementary Tables~\ref{Tab:LP} and \ref{Tab:GR}, where Val and Test denote outcomes on the validation and test sets, respectively.
Supplementary Table~\ref{Tab:RA-alpha} gives detailed results of PI-HIST (R\&A) (highlighted by purple lines in Fig.~\ref{Fig:GRLP_Eva} and \ref{Fig:GRLP_Eva_Apx}) with respect to $\alpha \in \{ 0.0, 0.1, \cdots, 1.0\}$.

\begin{table}[]\tiny
\centering
\caption{Detailed results of node-level tasks on Europe with node identity ground-truth.}
\label{Tab:NC-Europe}
\begin{tabular}{l|l|ll|ll|l}
\hline
\multirow{2}{*}{} & \textbf{Time} & \multicolumn{2}{c|}{\textbf{Validation Set}(\%)} & \multicolumn{2}{c|}{\textbf{Test Set}(\%)} & \textbf{Modularity} \\ \cline{3-6}
 & (sec)$\downarrow$ & \textbf{Micro~F1}$\uparrow$ & \textbf{Macro~F1}$\uparrow$ & \textbf{Micro~F1}$\uparrow$ & \textbf{Macro~F1}$\uparrow$ & (\%)$\uparrow$ \\ \hline
node2vec & 7.8438$\pm$0.3677 & 30.26$\pm$7.50 & 26.92$\pm$7.69 & 34.66$\pm$2.22 & 33.51$\pm$3.04 & 12.74$\pm$4.88 \\
PhUSION~(P) & 0.2892$\pm$0.0078 & 30.77$\pm$9.03 & 29.17$\pm$8.70 & 32.10$\pm$2.64 & 30.89$\pm$2.57 & 10.55$\pm$3.22 \\
PaCEr~(P) & 3.6484$\pm$0.2124 & 36.15$\pm$6.73 & 34.16$\pm$6.13 & 39.72$\pm$1.50 & 39.43$\pm$1.60 & 9.72$\pm$1.66 \\ \hline
struc2vec & 7.8520$\pm$0.1582 & 50.77$\pm$6.36 & 47.39$\pm$6.83 & 52.74$\pm$3.26 & 52.33$\pm$3.74 & -5.39$\pm$1.68 \\
PhUSION~(I) & 1.7452$\pm$0.0150 & 47.44$\pm$7.28 & 42.96$\pm$7.21 & 51.21$\pm$2.44 & 49.02$\pm$3.87 & -5.33$\pm$1.13 \\ 
PaCEr~(I) & 0.5868$\pm$0.0038 & 54.36$\pm$7.84 & 52.60$\pm$8.69 & 55.05$\pm$3.16 & 54.42$\pm$2.75 & -4.43$\pm$0.62 \\ \hline
RandNE & 0.0065$\pm$0.0003 & 49.74$\pm$5.76 & 48.00$\pm$5.77 & 49.47$\pm$2.15 & 48.67$\pm$2.05 & -1.58$\pm$0.88 \\
LouvainNE & 0.0106$\pm$0.0001 & 24.62$\pm$11.76 & 23.21$\pm$10.38 & 27.01$\pm$3.14 & 26.51$\pm$3.39 & 18.71$\pm$0.74 \\
SketchNE & 0.7962$\pm$0.0662 & 38.97$\pm$8.57 & 35.85$\pm$8.74 & 37.47$\pm$2.68 & 34.21$\pm$3.94 & 13.71$\pm$4.29 \\
node2binary & 4.6615$\pm$0.1551 & 28.21$\pm$5.38 & 26.24$\pm$5.32 & 32.63$\pm$1.84 & 31.67$\pm$2.00 & 14.35$\pm$3.45 \\ \hline
S3GC & 34.2107$\pm$0.6347 & 36.67$\pm$8.58 & 32.79$\pm$8.90 & 38.29$\pm$2.41 & 34.68$\pm$3.24 & 11.31$\pm$0.14 \\
MAGI & 1.2443$\pm$0.0297 & 38.72$\pm$9.06 & 32.28$\pm$10.07 & 38.93$\pm$3.60 & 33.97$\pm$5.31 & 13.65$\pm$1.70 \\
GraLSP & 28.9134$\pm$3.5807 & 41.03$\pm$7.34 & 38.69$\pm$7.66 & 45.05$\pm$2.76 & 43.45$\pm$2.82 & 0.43$\pm$0.55 \\
DRSR & 7.1868$\pm$0.0299 & 50.77$\pm$4.70 & 48.05$\pm$5.44 & 53.13$\pm$2.98 & 51.95$\pm$2.90 & -4.15$\pm$0.19 \\
DGI & 0.9755$\pm$0.2283 & 46.15$\pm$7.69 & 40.10$\pm$8.21 & 47.69$\pm$2.88 & 45.35$\pm$4.42 & 0.13$\pm$3.29 \\
GraphMAE2 & 1.6811$\pm$0.4101 & 40.77$\pm$9.49 & 38.23$\pm$8.35 & 45.20$\pm$1.90 & 43.87$\pm$2.09 & 0.80$\pm$0.35 \\
GGD & 1.4241$\pm$0.0419 & 54.10$\pm$9.42 & 51.20$\pm$10.41 & 54.91$\pm$2.12 & 53.45$\pm$3.36 & -5.56$\pm$0.06 \\
E2Neg & 6.5305$\pm$0.2581 & 53.08$\pm$7.87 & 50.61$\pm$5.61 & 50.89$\pm$2.67 & 50.85$\pm$2.51 & -2.38$\pm$4.90 \\ \hline
\textbf{PI-HIST}~(R) & 0.0018$\pm$0.0000 & 32.31$\pm$7.88 & 30.28$\pm$7.98 & 33.17$\pm$4.21 & 31.20$\pm$4.70 & 18.68$\pm$0.41 \\
\textbf{PI-HIST}~(A) & 0.0891$\pm$0.0021 & 56.67$\pm$9.62 & 54.32$\pm$10.03 & 56.65$\pm$2.49 & 55.54$\pm$2.84 & -5.33$\pm$5.40 \\ \hline
\end{tabular}
\end{table}

\begin{table}[]\tiny
\centering
\caption{Detailed results of node-level tasks on USA with node identity ground-truth.}
\label{Tab:NC-USA}
\begin{tabular}{l|l|ll|ll|l}
\hline
\multirow{2}{*}{} & \textbf{Time} & \multicolumn{2}{c|}{\textbf{Validation Set}(\%)} & \multicolumn{2}{c|}{\textbf{Test Set}(\%)} & \textbf{Modularity} \\ \cline{3-6}
 & (sec)$\downarrow$ & \textbf{Micro~F1}$\uparrow$ & \textbf{Macro~F1}$\uparrow$ & \textbf{Micro~F1}$\uparrow$ & \textbf{Macro~F1}$\uparrow$ & (\%)$\uparrow$ \\ \hline
node2vec & 26.8722$\pm$0.7963 & 50.51$\pm$4.31 & 49.14$\pm$4.77 & 49.33$\pm$1.42 & 48.61$\pm$1.60 & 26.94$\pm$2.73 \\
PhUSION~(P) & 1.7797$\pm$0.0207 & 49.49$\pm$2.37 & 48.05$\pm$2.82 & 50.26$\pm$1.42 & 49.25$\pm$1.70 & 24.24$\pm$1.88 \\
PaCEr~(P) & 7.3735$\pm$0.0828 & 51.95$\pm$3.68 & 51.61$\pm$3.63 & 50.88$\pm$1.35 & 50.74$\pm$1.24 & 22.46$\pm$0.44 \\ \hline
struc2vec & 32.6666$\pm$0.5896 & 55.85$\pm$5.23 & 54.62$\pm$4.36 & 55.46$\pm$0.97 & 54.60$\pm$1.18 & 4.21$\pm$2.71 \\
PhUSION~(I) & 12.3627$\pm$0.1339 & 57.88$\pm$2.91 & 56.39$\pm$3.27 & 59.00$\pm$1.57 & 58.06$\pm$2.03 & 12.74$\pm$0.05 \\ 
PaCEr~(I) & 4.1170$\pm$0.0230 & 56.10$\pm$4.28 & 54.00$\pm$5.51 & 56.53$\pm$1.83 & 55.13$\pm$1.84 & -0.34$\pm$1.08 \\ \hline
RandNE & 0.0140$\pm$0.0007 & 54.32$\pm$3.02 & 53.19$\pm$3.06 & 54.21$\pm$2.17 & 53.32$\pm$2.72 & 11.37$\pm$0.78 \\
LouvainNE & 0.0231$\pm$0.0009 & 45.68$\pm$3.50 & 44.00$\pm$3.78 & 45.25$\pm$1.68 & 43.94$\pm$2.80 & 27.80$\pm$0.34 \\
SketchNE & 0.8249$\pm$0.0607 & 54.07$\pm$3.41 & 52.16$\pm$4.30 & 53.03$\pm$1.33 & 51.81$\pm$1.83 & 27.94$\pm$2.32 \\
node2binary & 14.1784$\pm$1.6952 & 45.76$\pm$4.29 & 44.49$\pm$3.81 & 46.98$\pm$1.52 & 46.21$\pm$2.12 & 24.26$\pm$0.88 \\ \hline
S3GC & 29.1119$\pm$0.7194 & 53.56$\pm$2.78 & 51.35$\pm$3.95 & 51.84$\pm$1.89 & 49.97$\pm$2.26 & 26.78$\pm$0.08 \\
MAGI & 1.5214$\pm$0.0277 & 49.66$\pm$3.46 & 46.72$\pm$3.53 & 48.06$\pm$2.00 & 45.38$\pm$2.94 & 24.15$\pm$0.02 \\
GraLSP & 28.9351$\pm$3.7649 & 56.36$\pm$5.64 & 54.90$\pm$5.17 & 56.33$\pm$1.75 & 55.41$\pm$1.98 & 4.66$\pm$0.36 \\
DRSR & 7.8639$\pm$0.0938 & 60.17$\pm$4.82 & 58.25$\pm$4.83 & 60.77$\pm$1.87 & 59.66$\pm$1.69 & 11.74$\pm$0.69 \\
DGI & 1.4196$\pm$0.3677 & 56.86$\pm$3.04 & 55.33$\pm$1.87 & 56.46$\pm$1.44 & 55.01$\pm$1.65 & 11.28$\pm$3.01 \\
GraphMAE2 & 1.6050$\pm$0.4592 & 51.44$\pm$6.08 & 49.19$\pm$6.53 & 51.64$\pm$2.41 & 49.72$\pm$2.54 & 2.58$\pm$0.63 \\
GGD & 1.3626$\pm$0.0025 & 56.10$\pm$3.99 & 55.67$\pm$3.77 & 56.77$\pm$1.67 & 56.50$\pm$1.66 & 11.34$\pm$6.20 \\
E2Neg & 6.5868$\pm$0.1357 & 55.93$\pm$3.75 & 55.51$\pm$3.79 & 57.21$\pm$1.29 & 56.96$\pm$1.23 & 8.34$\pm$3.12 \\ \hline
\textbf{PI-HIST}~(R) & 0.0058$\pm$0.0042 & 50.51$\pm$2.25 & 48.47$\pm$3.27 & 50.36$\pm$2.30 & 48.93$\pm$2.77 & 27.22$\pm$1.17 \\
\textbf{PI-HIST}~(A) & 0.1313$\pm$0.0064 & 61.86$\pm$3.13 & 59.85$\pm$4.34 & 60.23$\pm$1.13 & 58.66$\pm$1.41 & 23.61$\pm$0.53 \\ \hline
\end{tabular}
\end{table}

\begin{table}[]\tiny
\centering
\caption{Detailed results of node-level tasks on Actor with node identity ground-truth.}
\label{Tab:NC-Actor}
\begin{tabular}{l|l|ll|ll|l}
\hline
\multirow{2}{*}{} & \textbf{Time} & \multicolumn{2}{c|}{\textbf{Validation Set}(\%)} & \multicolumn{2}{c|}{\textbf{Test Set}(\%)} & \textbf{Modularity} \\ \cline{3-6}
 & (sec)$\downarrow$ & \textbf{Micro~F1}$\uparrow$ & \textbf{Macro~F1}$\uparrow$ & \textbf{Micro~F1}$\uparrow$ & \textbf{Macro~F1}$\uparrow$ & (\%)$\uparrow$ \\ \hline
node2vec & 359.6592$\pm$11.4928 & 28.96$\pm$1.99 & 25.44$\pm$1.61 & 28.81$\pm$0.65 & 25.24$\pm$0.55 & 46.23$\pm$0.66 \\
PhUSION~(P) & 72.2466$\pm$0.9959 & 31.71$\pm$2.58 & 27.28$\pm$2.06 & 31.54$\pm$0.40 & 27.11$\pm$0.33 & 46.30$\pm$0.83 \\
PaCEr~(P) & 172.2412$\pm$0.4037 & 35.13$\pm$1.67 & 31.04$\pm$1.46 & 34.75$\pm$0.50 & 30.97$\pm$0.43 & 46.44$\pm$0.34 \\ \hline
struc2vec & 292.1057$\pm$7.5681 & 39.93$\pm$1.60 & 33.44$\pm$1.46 & 40.03$\pm$0.35 & 33.78$\pm$0.42 & 0.80$\pm$0.70 \\
PhUSION~(I) & 476.5349$\pm$3.7501 & 40.40$\pm$1.53 & 32.93$\pm$1.32 & 40.82$\pm$0.69 & 33.49$\pm$1.15 & 6.62$\pm$0.80 \\ 
PaCEr~(I) & 150.3713$\pm$0.4370 & 41.50$\pm$1.88 & 32.52$\pm$2.07 & 41.77$\pm$0.63 & 32.94$\pm$1.64 & -4.82$\pm$0.01 \\ \hline
RandNE & 3.7662$\pm$0.1607 & 30.30$\pm$1.40 & 24.82$\pm$0.81 & 30.19$\pm$1.11 & 24.66$\pm$0.95 & 7.05$\pm$2.71 \\
LouvainNE & 0.3428$\pm$0.0045 & 28.23$\pm$1.95 & 21.92$\pm$2.28 & 27.82$\pm$0.88 & 21.58$\pm$0.79 & 46.28$\pm$1.28 \\
SketchNE & 1.3812$\pm$0.0536 & 31.20$\pm$1.76 & 25.43$\pm$1.27 & 31.51$\pm$0.61 & 25.83$\pm$0.49 & 46.16$\pm$0.41 \\
node2binary & 128.7087$\pm$25.8684 & 27.25$\pm$1.74 & 23.04$\pm$1.38 & 26.51$\pm$0.58 & 22.33$\pm$0.62 & 28.06$\pm$0.92 \\ \hline
S3GC & 32.2573$\pm$0.5685 & 31.37$\pm$1.62 & 25.62$\pm$1.16 & 31.61$\pm$0.44 & 25.92$\pm$0.39 & 44.25$\pm$0.01 \\
MAGI & 21.7904$\pm$0.5835 & 37.02$\pm$1.59 & 29.36$\pm$1.85 & 37.72$\pm$0.68 & 30.05$\pm$1.29 & 33.00$\pm$0.00 \\
GraLSP & 249.4036$\pm$43.8074 & 41.36$\pm$2.54 & 33.07$\pm$2.54 & 41.55$\pm$0.38 & 33.35$\pm$0.84 & 4.58$\pm$0.06 \\
DRSR & 11.6590$\pm$0.1086 & 43.29$\pm$1.55 & 35.31$\pm$1.07 & 43.72$\pm$0.46 & 35.78$\pm$0.45 & 10.90$\pm$1.70 \\
DGI & 2.3272$\pm$0.3494 & 42.37$\pm$2.18 & 33.77$\pm$1.64 & 42.14$\pm$0.59 & 33.77$\pm$0.76 & 5.69$\pm$1.11 \\
GraphMAE2 & 4.7755$\pm$2.6565 & 39.22$\pm$1.57 & 31.50$\pm$0.97 & 39.13$\pm$0.59 & 31.47$\pm$1.01 & 0.63$\pm$0.47 \\
GGD & 1.6110$\pm$0.0199 & 40.58$\pm$1.87 & 34.50$\pm$1.52 & 40.33$\pm$0.42 & 34.07$\pm$0.32 & 5.42$\pm$2.13 \\
E2Neg & 7.2887$\pm$0.1599 & 41.29$\pm$2.09 & 33.87$\pm$1.30 & 41.00$\pm$0.90 & 33.82$\pm$0.61 & 1.55$\pm$1.09 \\ \hline
\textbf{PI-HIST}~(R) & 0.0078$\pm$0.0001 & 29.67$\pm$1.36 & 23.95$\pm$1.15 & 29.38$\pm$0.62 & 23.69$\pm$0.61 & 45.16$\pm$0.01 \\
\textbf{PI-HIST}~(A) & 0.6730$\pm$0.0099 & 42.75$\pm$0.99 & 34.51$\pm$0.97 & 43.39$\pm$0.57 & 35.18$\pm$0.76 & 2.28$\pm$0.00 \\ \hline
\end{tabular}
\end{table}

\begin{table}[]\tiny
\centering
\caption{Detailed results of node-level tasks on Film with node identity ground-truth.}
\label{Tab:NC-Film}
\begin{tabular}{l|l|ll|ll|l}
\hline
\multirow{2}{*}{} & \textbf{Time} & \multicolumn{2}{c|}{\textbf{Validation Set}(\%)} & \multicolumn{2}{c|}{\textbf{Test Set}(\%)} & \textbf{Modularity} \\ \cline{3-6}
 & (sec)$\downarrow$ & \textbf{Micro~F1}$\uparrow$ & \textbf{Macro~F1}$\uparrow$ & \textbf{Micro~F1}$\uparrow$ & \textbf{Macro~F1}$\uparrow$ & (\%)$\uparrow$ \\ \hline
node2vec & 359.6592$\pm$11.4928 & 28.96$\pm$1.99 & 25.44$\pm$1.61 & 28.81$\pm$0.65 & 25.24$\pm$0.55 & 46.23$\pm$0.66 \\
PhUSION~(P) & 72.2466$\pm$0.9959 & 31.71$\pm$2.58 & 27.28$\pm$2.06 & 31.54$\pm$0.40 & 27.11$\pm$0.33 & 46.30$\pm$0.83 \\
PaCEr~(P) & 172.2412$\pm$0.4037 & 35.13$\pm$1.67 & 31.04$\pm$1.46 & 34.75$\pm$0.50 & 30.97$\pm$0.43 & 46.44$\pm$0.34 \\ \hline
struc2vec & 292.1057$\pm$7.5681 & 39.93$\pm$1.60 & 33.44$\pm$1.46 & 40.03$\pm$0.35 & 33.78$\pm$0.42 & 0.80$\pm$0.70 \\
PhUSION~(I) & 476.5349$\pm$3.7501 & 40.40$\pm$1.53 & 32.93$\pm$1.32 & 40.82$\pm$0.69 & 33.49$\pm$1.15 & 6.62$\pm$0.80 \\ 
PaCEr~(I) & 150.3713$\pm$0.4370 & 41.50$\pm$1.88 & 32.52$\pm$2.07 & 41.77$\pm$0.63 & 32.94$\pm$1.64 & -4.82$\pm$0.01 \\ \hline
RandNE & 3.7662$\pm$0.1607 & 30.30$\pm$1.40 & 24.82$\pm$0.81 & 30.19$\pm$1.11 & 24.66$\pm$0.95 & 7.05$\pm$2.71 \\
LouvainNE & 0.3428$\pm$0.0045 & 28.23$\pm$1.95 & 21.92$\pm$2.28 & 27.82$\pm$0.88 & 21.58$\pm$0.79 & 46.28$\pm$1.28 \\
SketchNE & 1.3812$\pm$0.0536 & 31.20$\pm$1.76 & 25.43$\pm$1.27 & 31.51$\pm$0.61 & 25.83$\pm$0.49 & 46.16$\pm$0.41 \\
node2binary & 128.7087$\pm$25.8684 & 27.25$\pm$1.74 & 23.04$\pm$1.38 & 26.51$\pm$0.58 & 22.33$\pm$0.62 & 28.06$\pm$0.92 \\ \hline
S3GC & 32.2573$\pm$0.5685 & 31.37$\pm$1.62 & 25.62$\pm$1.16 & 31.61$\pm$0.44 & 25.92$\pm$0.39 & 44.25$\pm$0.01 \\
MAGI & 21.7904$\pm$0.5835 & 37.02$\pm$1.59 & 29.36$\pm$1.85 & 37.72$\pm$0.68 & 30.05$\pm$1.29 & 33.00$\pm$0.00 \\
GraLSP & 249.4036$\pm$43.8074 & 41.36$\pm$2.54 & 33.07$\pm$2.54 & 41.55$\pm$0.38 & 33.35$\pm$0.84 & 4.58$\pm$0.06 \\
DRSR & 11.6590$\pm$0.1086 & 43.29$\pm$1.55 & 35.31$\pm$1.07 & 43.72$\pm$0.46 & 35.78$\pm$0.45 & 10.90$\pm$1.70 \\
DGI & 2.3272$\pm$0.3494 & 42.37$\pm$2.18 & 33.77$\pm$1.64 & 42.14$\pm$0.59 & 33.77$\pm$0.76 & 5.69$\pm$1.11 \\
GraphMAE2 & 4.7755$\pm$2.6565 & 39.22$\pm$1.57 & 31.50$\pm$0.97 & 39.13$\pm$0.59 & 31.47$\pm$1.01 & 0.63$\pm$0.47 \\
GGD & 1.6110$\pm$0.0199 & 40.58$\pm$1.87 & 34.50$\pm$1.52 & 40.33$\pm$0.42 & 34.07$\pm$0.32 & 5.42$\pm$2.13 \\
E2Neg & 7.2887$\pm$0.1599 & 41.29$\pm$2.09 & 33.87$\pm$1.30 & 41.00$\pm$0.90 & 33.82$\pm$0.61 & 1.55$\pm$1.09 \\ \hline
\textbf{PI-HIST}~(R) & 0.0078$\pm$0.0001 & 29.67$\pm$1.36 & 23.95$\pm$1.15 & 29.38$\pm$0.62 & 23.69$\pm$0.61 & 45.16$\pm$0.01 \\
\textbf{PI-HIST}~(A) & 0.6730$\pm$0.0099 & 42.75$\pm$0.99 & 34.51$\pm$0.97 & 43.39$\pm$0.57 & 35.18$\pm$0.76 & 2.28$\pm$0.00 \\ \hline
\end{tabular}
\end{table}

\begin{table}[]\tiny
\centering
\caption{Detailed results of node-level tasks on PPI with node position ground-truth.}
\label{Tab:NC-PPI}
\begin{tabular}{l|l|ll|ll|l}
\hline
\multirow{2}{*}{} & \textbf{Time} & \multicolumn{2}{c|}{\textbf{Validation Set}(\%)} & \multicolumn{2}{c|}{\textbf{Test Set}(\%)} & \textbf{Conductance} \\ \cline{3-6}
 & (sec)$\downarrow$ & \textbf{Micro~F1}$\uparrow$ & \textbf{Macro~F1}$\uparrow$ & \textbf{Micro~F1}$\uparrow$ & \textbf{Macro~F1}$\uparrow$ & (\%)$\downarrow$ \\ \hline
node2vec & 178.8106$\pm$6.8075 & 18.73$\pm$1.42 & 15.48$\pm$0.80 & 18.19$\pm$0.38 & 15.73$\pm$0.46 & 77.58$\pm$2.53 \\
PhUSION~(P) & 22.1798$\pm$0.3195 & 20.55$\pm$0.97 & 16.39$\pm$1.02 & 20.15$\pm$0.41 & 16.79$\pm$0.53 & 77.47$\pm$2.81 \\
PaCEr~(P) & 64.1417$\pm$0.1399 & 15.10$\pm$0.83 & 12.87$\pm$1.06 & 15.49$\pm$0.40 & 13.67$\pm$0.42 & 73.30$\pm$1.42 \\ \hline
struc2vec & 157.3473$\pm$6.2590 & 7.76$\pm$0.67 & 6.01$\pm$0.46 & 8.10$\pm$0.41 & 6.40$\pm$0.33 & 71.94$\pm$4.85 \\
PhUSION~(I) & 143.6224$\pm$1.1783 & 9.46$\pm$0.50 & 4.38$\pm$0.39 & 9.22$\pm$0.28 & 4.21$\pm$0.28 & 56.14$\pm$3.56 \\ 
PaCEr~(I) & 56.3766$\pm$0.1443 & 9.62$\pm$0.90 & 5.54$\pm$0.94 & 9.77$\pm$0.49 & 5.85$\pm$0.51 & 63.98$\pm$1.13 \\ \hline
RandNE & 2.0744$\pm$0.0166 & 14.64$\pm$0.79 & 9.51$\pm$0.58 & 14.92$\pm$0.58 & 10.28$\pm$0.64 & 73.11$\pm$1.94 \\
LouvainNE & 0.1800$\pm$0.0025 & 15.63$\pm$1.24 & 13.37$\pm$0.67 & 15.12$\pm$0.32 & 13.23$\pm$0.21 & 79.58$\pm$1.51 \\
SketchNE & 1.0414$\pm$0.0517 & 21.02$\pm$1.50 & 14.40$\pm$1.26 & 20.94$\pm$0.62 & 14.61$\pm$0.88 & 76.74$\pm$1.76 \\
node2binary & 53.2905$\pm$7.6874 & 17.08$\pm$1.38 & 13.06$\pm$1.08 & 17.47$\pm$0.43 & 13.99$\pm$0.25 & 83.75$\pm$1.12 \\ \hline
S3GC & 107.0300$\pm$0.7505 & 21.16$\pm$1.19 & 14.19$\pm$1.26 & 20.61$\pm$0.66 & 14.21$\pm$0.85 & 73.82$\pm$0.69 \\
MAGI & 7.3130$\pm$0.3360 & 14.02$\pm$0.92 & 6.61$\pm$0.58 & 13.72$\pm$0.43 & 6.62$\pm$0.46 & 83.95$\pm$0.25 \\
GraLSP & 128.5290$\pm$20.0905 & 8.28$\pm$1.00 & 5.80$\pm$0.98 & 8.24$\pm$0.47 & 5.74$\pm$0.43 & 67.28$\pm$1.34 \\
DRSR & 15.1257$\pm$0.1645 & 12.10$\pm$0.65 & 8.35$\pm$0.92 & 12.46$\pm$0.42 & 8.61$\pm$0.41 & 55.72$\pm$0.36 \\
DGI & 2.1321$\pm$0.4393 & 11.84$\pm$1.07 & 6.34$\pm$0.89 & 11.62$\pm$0.57 & 6.54$\pm$0.53 & 82.13$\pm$0.33 \\
GraphMAE2 & 5.3486$\pm$2.5031 & 9.63$\pm$0.76 & 3.40$\pm$0.22 & 9.64$\pm$0.58 & 3.57$\pm$0.30 & 76.38$\pm$0.39 \\
GGD & 1.6078$\pm$0.0394 & 10.77$\pm$0.99 & 4.62$\pm$0.64 & 10.45$\pm$0.47 & 4.54$\pm$0.42 & 76.09$\pm$0.27 \\
E2Neg & 6.9376$\pm$0.1139 & 10.57$\pm$1.15 & 8.52$\pm$0.90 & 10.32$\pm$0.38 & 8.68$\pm$0.30 & 76.07$\pm$0.71 \\ \hline
\textbf{PI-HIST}~(R)  & 0.0061$\pm$0.0011 & 21.05$\pm$1.23 & 16.01$\pm$0.92 & 21.24$\pm$0.22 & 16.78$\pm$0.56 & 79.75$\pm$0.70 \\
\textbf{PI-HIST}~(A) & 0.9303$\pm$0.0096 & 9.78$\pm$0.97 & 3.95$\pm$0.68 & 9.77$\pm$0.34 & 3.96$\pm$0.32 & 63.01$\pm$1.26 \\ \hline
\end{tabular}
\end{table}

\begin{table}[]\tiny
\centering
\caption{Detailed results of node-level tasks on BlogCatalog with node position ground-truth.}
\label{Tab:NC-BlogCatalog}
\begin{tabular}{l|l|ll|ll|l}
\hline
\multirow{2}{*}{} & \textbf{Time} & \multicolumn{2}{c|}{\textbf{Validation Set}(\%)} & \multicolumn{2}{c|}{\textbf{Test Set}(\%)} & \textbf{Conductance} \\ \cline{3-6}
 & (sec)$\downarrow$ & \textbf{Micro~F1}$\uparrow$ & \textbf{Macro~F1}$\uparrow$ & \textbf{Micro~F1}$\uparrow$ & \textbf{Macro~F1}$\uparrow$ & (\%)$\downarrow$ \\ \hline
node2vec & 1613.4396$\pm$24.3453 & 34.42$\pm$1.25 & 20.37$\pm$1.22 & 34.18$\pm$0.49 & 21.32$\pm$0.69 & 77.84$\pm$1.87 \\
PhUSION~(P) & 257.2738$\pm$3.8922 & 39.41$\pm$1.19 & 25.04$\pm$1.47 & 39.05$\pm$0.29 & 25.01$\pm$0.38 & 78.58$\pm$1.56 \\
PaCEr~(P) & 901.4042$\pm$3.6675 & 32.48$\pm$1.28 & 21.60$\pm$1.30 & 31.57$\pm$0.36 & 21.16$\pm$0.34 & 68.32$\pm$1.23 \\ \hline
struc2vec & 1094.7581$\pm$16.1948 & 12.50$\pm$0.94 & 4.99$\pm$0.54 & 12.04$\pm$0.28 & 4.85$\pm$0.14 & 74.71$\pm$3.36 \\
PhUSION~(I) & 1292.3129$\pm$5.3792 & 10.63$\pm$0.85 & 6.14$\pm$0.97 & 10.35$\pm$0.28 & 5.97$\pm$0.27 & 50.32$\pm$2.21 \\ 
PaCEr~(I) & 849.3407$\pm$3.8926 & 16.72$\pm$0.80 & 3.55$\pm$0.78 & 16.29$\pm$0.44 & 3.47$\pm$0.16 & 60.38$\pm$1.07 \\ \hline
RandNE & 123.8933$\pm$11.0708 & 29.95$\pm$0.83 & 12.79$\pm$0.90 & 29.67$\pm$0.33 & 13.09$\pm$0.34 & 50.85$\pm$1.43 \\
LouvainNE & 0.9442$\pm$0.0128 & 23.65$\pm$1.04 & 12.82$\pm$0.63 & 23.06$\pm$0.17 & 13.47$\pm$0.27 & 82.20$\pm$0.48 \\
SketchNE & 1.9865$\pm$0.0491 & 40.77$\pm$1.17 & 21.22$\pm$1.18 & 40.28$\pm$0.35 & 21.91$\pm$0.56 & 78.91$\pm$1.11 \\
node2binary & 266.1865$\pm$47.5454 & 18.97$\pm$0.64 & 7.46$\pm$0.33 & 18.23$\pm$0.40 & 7.43$\pm$0.28 & 79.01$\pm$0.97 \\ \hline
S3GC & 120.4018$\pm$1.7052 & 35.31$\pm$1.01 & 15.85$\pm$0.60 & 34.45$\pm$0.42 & 15.96$\pm$0.57 & 72.65$\pm$0.82 \\
MAGI & 46.9488$\pm$8.1153 & 18.96$\pm$0.68 & 4.38$\pm$0.28 & 18.37$\pm$0.36 & 4.32$\pm$0.13 & 70.05$\pm$0.90 \\
GraLSP & 1008.7149$\pm$74.9156 & 15.68$\pm$0.65 & 4.04$\pm$0.45 & 15.55$\pm$0.40 & 4.23$\pm$0.24 & 70.24$\pm$0.43 \\
DRSR & 249.6355$\pm$1.3925 & 25.02$\pm$1.12 & 8.93$\pm$0.53 & 24.54$\pm$0.32 & 9.27$\pm$0.09 & 54.37$\pm$1.97 \\
DGI & 16.5468$\pm$3.1438 & 20.69$\pm$0.68 & 5.35$\pm$0.27 & 20.10$\pm$0.25 & 5.31$\pm$0.22 & 69.85$\pm$0.85 \\
GraphMAE2 & 4.6504$\pm$2.1057 & 18.19$\pm$0.59 & 3.84$\pm$0.36 & 17.97$\pm$0.39 & 4.01$\pm$0.15 & 79.13$\pm$0.50 \\
GGD & 9.0169$\pm$0.0283 & 18.44$\pm$0.62 & 3.94$\pm$0.34 & 17.98$\pm$0.23 & 3.96$\pm$0.16 & 68.37$\pm$1.09 \\
E2Neg & 9.3498$\pm$0.1002 & 20.35$\pm$0.79 & 10.70$\pm$0.71 & 19.57$\pm$0.36 & 10.23$\pm$0.27 & 61.22$\pm$1.10 \\ \hline
\textbf{PI-HIST}~(R) & 0.0225$\pm$0.0034 & 39.30$\pm$0.92 & 24.52$\pm$0.82 & 39.12$\pm$0.28 & 24.93$\pm$0.36 & 68.51$\pm$2.80 \\
\textbf{PI-HIST}~(A) & 2.7563$\pm$0.0848 & 17.38$\pm$0.60 & 5.24$\pm$0.46 & 16.82$\pm$0.30 & 5.26$\pm$0.17 & 56.32$\pm$1.43 \\ \hline
\end{tabular}
\end{table}

\begin{table}[]\tiny
\centering
\caption{Detailed results of node-level tasks on DBLP with node position ground-truth.}
\label{Tab:NC-DBLP}
\begin{tabular}{l|l|ll|ll|l}
\hline
\multirow{2}{*}{} & \textbf{Time} & \multicolumn{2}{c|}{\textbf{Validation Set}(\%)} & \multicolumn{2}{c|}{\textbf{Test Set}(\%)} & \textbf{Conductance} \\ \cline{3-6}
 & (sec)$\downarrow$ & \textbf{Micro~F1}$\uparrow$ & \textbf{Macro~F1}$\uparrow$ & \textbf{Micro~F1}$\uparrow$ & \textbf{Macro~F1}$\uparrow$ & (\%)$\downarrow$ \\ \hline
node2vec & \multicolumn{6}{c}{OOT} \\
PhUSION~(P) & \multicolumn{6}{c}{OOM} \\
PaCEr~(P) & \multicolumn{6}{c}{OOM} \\ \hline
struc2vec & \multicolumn{6}{c}{OOT} \\
PhUSION~(I) & \multicolumn{6}{c}{OOM} \\ 
PaCEr~(I) & \multicolumn{6}{c}{OOM} \\ \hline
RandNE & \multicolumn{1}{l|}{188.6241$\pm$15.6970} & 45.37$\pm$0.70 & \multicolumn{1}{l|}{29.53$\pm$2.26} & 45.21$\pm$0.17 & \multicolumn{1}{l|}{29.96$\pm$1.69} & 44.84$\pm$1.25 \\
LouvainNE & \multicolumn{1}{l|}{13.9016$\pm$1.0948} & 55.82$\pm$0.62 & \multicolumn{1}{l|}{65.97$\pm$1.55} & 55.78$\pm$0.26 & \multicolumn{1}{l|}{67.12$\pm$0.62} & 51.83$\pm$0.56 \\
SketchNE & \multicolumn{1}{l|}{12.4036$\pm$0.4253} & 63.49$\pm$0.50 & \multicolumn{1}{l|}{68.53$\pm$1.20} & 63.74$\pm$0.22 & \multicolumn{1}{l|}{70.57$\pm$1.05} & 52.47$\pm$3.22 \\
node2binary & \multicolumn{6}{c}{OOM} \\ \hline
S3GC & \multicolumn{1}{l|}{4540.4367$\pm$85.9206} & 57.30$\pm$0.51 & \multicolumn{1}{l|}{39.14$\pm$2.26} & 57.29$\pm$0.30 & \multicolumn{1}{l|}{40.38$\pm$2.63} & 53.11$\pm$0.15 \\
MAGI & \multicolumn{1}{l|}{1170.9504$\pm$43.3845} & 26.32$\pm$0.45 & \multicolumn{1}{l|}{6.17$\pm$0.35} & 26.31$\pm$0.21 & \multicolumn{1}{l|}{6.22$\pm$0.20} & 60.07$\pm$0.46 \\
GraLSP & \multicolumn{6}{c}{OOM} \\
DRSR & \multicolumn{6}{c}{OOM} \\
DGI & \multicolumn{1}{l|}{135.8253$\pm$0.1137} & 18.01$\pm$0.50 & \multicolumn{1}{l|}{1.87$\pm$0.11} & 17.91$\pm$0.13 & \multicolumn{1}{l|}{1.86$\pm$0.04} & 67.76$\pm$0.46 \\
GraphMAE2 & \multicolumn{1}{l|}{65.1096$\pm$9.2301} & 18.68$\pm$0.54 & \multicolumn{1}{l|}{2.33$\pm$0.37} & 18.68$\pm$0.12 & \multicolumn{1}{l|}{2.37$\pm$0.24} & 64.03$\pm$0.17 \\
GGD & \multicolumn{1}{l|}{30.1133$\pm$0.0453} & 25.70$\pm$0.39 & \multicolumn{1}{l|}{28.38$\pm$1.56} & 25.73$\pm$0.21 & \multicolumn{1}{l|}{28.73$\pm$0.67} & 58.29$\pm$1.27 \\
E2Neg & \multicolumn{1}{l|}{44.5398$\pm$5.2809} & 20.02$\pm$0.53 & \multicolumn{1}{l|}{14.20$\pm$1.65} & 20.31$\pm$0.17 & \multicolumn{1}{l|}{15.43$\pm$0.77} & 69.96$\pm$1.03 \\ \hline
\textbf{PI-HIST}~(R) & \multicolumn{1}{l|}{0.1870$\pm$0.0064} & 61.74$\pm$0.50 & \multicolumn{1}{l|}{73.08$\pm$1.37} & 61.72$\pm$0.22 & \multicolumn{1}{l|}{74.60$\pm$0.51} & 51.94$\pm$3.51 \\
\textbf{PI-HIST}~(A) & \multicolumn{1}{l|}{20.5245$\pm$0.9441} & 20.76$\pm$0.40 & \multicolumn{1}{l|}{4.26$\pm$0.58} & 20.70$\pm$0.16 & \multicolumn{1}{l|}{4.61$\pm$0.40} & 44.95$\pm$2.71 \\ \hline
\end{tabular}
\end{table}

\begin{table}[]\tiny
\centering
\caption{Detailed results of node-level tasks on Amazon with node position ground-truth.}
\label{Tab:NC-Amazon}
\begin{tabular}{l|l|ll|ll|l}
\hline
\multirow{2}{*}{} & \textbf{Time} & \multicolumn{2}{c|}{\textbf{Validation Set}(\%)} & \multicolumn{2}{c|}{\textbf{Test Set}(\%)} & \textbf{Conductance} \\ \cline{3-6}
 & (sec)$\downarrow$ & \textbf{Micro~F1}$\uparrow$ & \textbf{Macro~F1}$\uparrow$ & \textbf{Micro~F1}$\uparrow$ & \textbf{Macro~F1}$\uparrow$ & (\%)$\downarrow$ \\ \hline
node2vec & \multicolumn{6}{c}{OOT} \\
PhUSION~(P) & \multicolumn{6}{c}{OOM} \\
PaCEr~(P) & \multicolumn{6}{c}{OOM} \\ \hline
struc2vec & \multicolumn{6}{c}{OOT} \\
PhUSION~(I) & \multicolumn{6}{c}{OOM} \\ 
PaCEr~(I) & \multicolumn{6}{c}{OOM} \\ \hline
RandNE & \multicolumn{1}{l|}{78.4991$\pm$3.6214} & 91.55$\pm$1.77 & \multicolumn{1}{l|}{93.10$\pm$1.80} & 91.30$\pm$1.70 & \multicolumn{1}{l|}{92.98$\pm$1.54} & 74.13$\pm$0.06 \\
LouvainNE & \multicolumn{1}{l|}{9.1255$\pm$1.1484} & 98.60$\pm$0.72 & \multicolumn{1}{l|}{98.45$\pm$0.76} & 98.63$\pm$0.36 & \multicolumn{1}{l|}{98.50$\pm$0.45} & 67.31$\pm$0.98 \\
SketchNE & \multicolumn{1}{l|}{9.8709$\pm$0.1689} & 99.08$\pm$0.50 & \multicolumn{1}{l|}{99.02$\pm$0.55} & 99.08$\pm$0.39 & \multicolumn{1}{l|}{99.02$\pm$0.46} & 68.16$\pm$1.51 \\
node2binary & \multicolumn{6}{c}{OOM} \\ \hline
S3GC & \multicolumn{1}{l|}{1442.9805$\pm$8.8064} & 94.99$\pm$2.00 & \multicolumn{1}{l|}{93.81$\pm$2.30} & 94.98$\pm$1.69 & \multicolumn{1}{l|}{94.30$\pm$2.18} & 68.00$\pm$0.51 \\
MAGI & \multicolumn{1}{l|}{1198.7392$\pm$13.9581} & 68.20$\pm$2.91 & \multicolumn{1}{l|}{71.51$\pm$3.20} & 67.96$\pm$1.94 & \multicolumn{1}{l|}{71.42$\pm$2.26} & 82.16$\pm$0.04 \\
GraLSP & \multicolumn{6}{c}{OOM} \\
DRSR & \multicolumn{6}{c}{OOM} \\
DGI & \multicolumn{1}{l|}{48.3331$\pm$0.6428} & 27.08$\pm$1.96 & \multicolumn{1}{l|}{21.30$\pm$2.30} & 27.07$\pm$0.96 & \multicolumn{1}{l|}{21.83$\pm$1.25} & 70.49$\pm$0.73 \\
GraphMAE2 & \multicolumn{1}{l|}{38.4175$\pm$1.7095} & 20.03$\pm$2.23 & \multicolumn{1}{l|}{10.42$\pm$1.97} & 20.51$\pm$1.27 & \multicolumn{1}{l|}{10.95$\pm$1.76} & 76.59$\pm$0.63 \\
GGD & \multicolumn{1}{l|}{16.7129$\pm$0.1496} & 58.68$\pm$2.08 & \multicolumn{1}{l|}{60.27$\pm$2.68} & 58.58$\pm$1.54 & \multicolumn{1}{l|}{60.43$\pm$1.57} & 67.33$\pm$0.97 \\
E2Neg & \multicolumn{1}{l|}{42.2155$\pm$2.3756} & 54.77$\pm$3.47 & \multicolumn{1}{l|}{56.57$\pm$4.05} & 54.38$\pm$1.08 & \multicolumn{1}{l|}{57.03$\pm$1.10} & 85.31$\pm$0.02 \\ \hline
\textbf{PI-HIST}~(R) & \multicolumn{1}{l|}{0.1099$\pm$0.0046} & 98.83$\pm$0.61 & \multicolumn{1}{l|}{98.85$\pm$0.50} & 98.84$\pm$0.48 & \multicolumn{1}{l|}{98.85$\pm$0.48} & 68.91$\pm$5.35 \\
\textbf{PI-HIST}~(A) & \multicolumn{1}{l|}{17.3222$\pm$1.5904} & 42.22$\pm$2.25 & \multicolumn{1}{l|}{39.27$\pm$2.73} & 43.49$\pm$1.50 & \multicolumn{1}{l|}{41.65$\pm$1.98} & 56.70$\pm$1.75 \\ \hline
\end{tabular}
\end{table}

\begin{table}[]\tiny
\centering
\caption{Detailed inference time (sec)$\downarrow$ of different steps in PI-HIST~(A) evaluated by node-level tasks, where RW, AW-MAP, AW-IDX, AW-CNT, CAT, HIER, and FFP denote the time (sec) of (\romannumeral1) RW sampling, (\romannumeral2) AW mapping, (\romannumeral3) AW indexing, (\romannumeral4) AW counting, (\romannumeral5) statistic gathering, (\romannumeral6) hierarchical structure extraction, and (\romannumeral7) one FFP.}
\label{Tab:Time-A}
\begin{tabular}{l|p{1.25cm}p{1.25cm}p{1.25cm}p{1.25cm}p{1.25cm}p{1.25cm}l|l}
\hline
 & RW & AW-MAP & AW-IDX & AW-CNT & GAT & HIER & FFP & \textbf{Total} \\ \hline
\textbf{Europe} & 0.0030$\pm$0.0011 & 0.0004$\pm$0.0000 & 0.0004$\pm$0.0000 & 0.0138$\pm$0.0018 & 0.0009$\pm$0.0001 & 0.0079$\pm$0.0015 & 0.0627$\pm$0.0003 & 0.0891$\pm$0.0021 \\
\textbf{USA} & 0.0082$\pm$0.0050 & 0.0004$\pm$0.0000 & 0.0004$\pm$0.0000 & 0.0435$\pm$0.0014 & 0.0069$\pm$0.0032 & 0.0091$\pm$0.0021 & 0.0627$\pm$0.0002 & 0.1313$\pm$0.0064 \\
\textbf{Actor} & 0.0389$\pm$0.0087 & 0.0005$\pm$0.0001 & 0.0006$\pm$0.0004 & 0.3493$\pm$0.0065 & 0.0042$\pm$0.0003 & 0.0352$\pm$0.0099 & 0.2442$\pm$0.0011 & 0.6730$\pm$0.0099 \\
\textbf{Film} & 2.2775$\pm$0.2241 & 0.0028$\pm$0.0004 & 0.0604$\pm$0.0724 & 1.3298$\pm$0.0637 & 0.3584$\pm$0.0497 & 0.2700$\pm$0.0142 & 0.5985$\pm$0.0012 & 4.8974$\pm$0.1999 \\ \hline
\textbf{PPI} & 0.0243$\pm$0.0055 & 0.0004$\pm$0.0000 & 0.0026$\pm$0.0031 & 0.2049$\pm$0.0043 & 0.0241$\pm$0.0034 & 0.0772$\pm$0.0104 & 0.5968$\pm$0.0012 & 0.9303$\pm$0.0096 \\
\textbf{BlogC}. & 0.7299$\pm$0.1022 & 0.0012$\pm$0.0002 & 0.0239$\pm$0.0592 & 0.5393$\pm$0.0352 & 0.0388$\pm$0.0040 & 0.1081$\pm$0.0194 & 1.3153$\pm$0.0016 & 2.7563$\pm$0.0848 \\
\textbf{DBLP} & 6.5442$\pm$0.4550 & 0.0101$\pm$0.0004 & 0.0844$\pm$0.0189 & 2.6930$\pm$0.0308 & 5.6679$\pm$0.8540 & 5.5233$\pm$0.0418 & 0.0015$\pm$0.0001 & 20.5245$\pm$0.9441 \\
\textbf{Amazon} & 4.8365$\pm$0.5854 & 0.0110$\pm$0.0019 & 0.0859$\pm$0.0175 & 2.5868$\pm$0.0456 & 5.3334$\pm$1.2971 & 3.5042$\pm$0.0295 & 0.9645$\pm$0.0042 & 17.3222$\pm$1.5904 \\ \hline
\end{tabular}
\end{table}

\begin{table}[]\tiny
\centering
\caption{Detailed results of link prediction on all datasets, where Val and Test denote results on the validation and test sets, respectively.}
\label{Tab:LP}
\begin{tabular}{l|ll|ll|ll|ll}
\hline
\multirow{2}{*}{} & \multicolumn{2}{c|}{\textbf{Europe}} & \multicolumn{2}{c|}{\textbf{USA}} & \multicolumn{2}{c|}{\textbf{PPI}} & \multicolumn{2}{c}{\textbf{Actor}} \\ \cline{2-9} 
 & \textbf{Val}(\%) & \textbf{Test}(\%) & \textbf{Val}(\%) & \textbf{Test}(\%) & \textbf{Val}(\%) & \textbf{Test}(\%) &\textbf{Val}(\%) & \textbf{Test}(\%) \\
 & \textbf{AUC}$\uparrow$ & \textbf{AUC}$\uparrow$ & \textbf{AUC}$\uparrow$ & \textbf{AUC}$\uparrow$ & \textbf{AUC}$\uparrow$ & \textbf{AUC}$\uparrow$ & \textbf{AUC}$\uparrow$ & \textbf{AUC}$\uparrow$ \\ \hline
node2vec & 88.18$\pm$0.84 & 89.03$\pm$0.88 & 91.41$\pm$0.80 & 91.40$\pm$0.34 & 87.99$\pm$0.41 & 88.31$\pm$0.20 & 81.01$\pm$0.40 & 80.18$\pm$0.86 \\
PhUSION~(P) & 86.89$\pm$0.80 & 87.99$\pm$1.25 & 89.45$\pm$0.36 & 89.38$\pm$0.71 & 87.54$\pm$0.55 & 87.83$\pm$0.30 & 83.60$\pm$0.56 & 82.97$\pm$0.55 \\
PaCEr~(P) & 87.30$\pm$1.19 & 88.39$\pm$1.22 & 89.94$\pm$1.24 & 89.72$\pm$0.42 & 87.03$\pm$0.43 & 87.01$\pm$0.20 & 83.84$\pm$0.47 & 83.08$\pm$0.87 \\ \hline
struc2vec & 89.65$\pm$0.69 & 90.80$\pm$0.84 & 95.35$\pm$0.49 & 95.01$\pm$0.36 & 91.01$\pm$0.36 & 91.02$\pm$0.25 & 90.71$\pm$0.59 & 90.17$\pm$0.49 \\
PhUSION~(I) & 89.87$\pm$0.66 & 90.98$\pm$0.80 & 95.58$\pm$0.59 & 95.35$\pm$0.21 & 91.08$\pm$0.37 & 91.09$\pm$0.10 & 89.60$\pm$0.53 & 89.13$\pm$0.50 \\
PaCEr~(I) & 88.77$\pm$0.89 & 89.97$\pm$0.71 & 93.11$\pm$0.41 & 92.85$\pm$0.21 & 89.85$\pm$0.33 & 89.76$\pm$0.25 & 88.91$\pm$0.42 & 88.30$\pm$0.49 \\ \hline
RandNE & 89.02$\pm$0.89 & 90.15$\pm$0.78 & 94.49$\pm$0.59 & 94.24$\pm$0.32 & 88.35$\pm$0.29 & 88.26$\pm$0.15 & 83.57$\pm$0.54 & 83.69$\pm$0.67 \\
LouvainNE & 76.16$\pm$1.33 & 77.10$\pm$1.25 & 83.26$\pm$0.97 & 82.99$\pm$0.24 & 74.26$\pm$0.90 & 74.65$\pm$0.46 & 72.88$\pm$0.84 & 72.54$\pm$1.00 \\
SketchNE & 81.48$\pm$1.14 & 83.02$\pm$1.63 & 88.39$\pm$0.49 & 87.82$\pm$0.36 & 86.06$\pm$0.61 & 86.07$\pm$0.63 & 82.55$\pm$0.82 & 82.35$\pm$0.50 \\
node2binary & 82.57$\pm$1.26 & 83.92$\pm$0.99 & 90.50$\pm$0.77 & 89.79$\pm$0.85 & 80.73$\pm$2.65 & 80.89$\pm$2.28 & 76.37$\pm$0.60 & 75.63$\pm$1.27 \\ \hline
S3GC & 87.67$\pm$0.85 & 89.08$\pm$1.18 & 89.66$\pm$0.41 & 89.01$\pm$0.32 & 89.79$\pm$0.41 & 89.93$\pm$0.14 & 77.07$\pm$0.45 & 77.12$\pm$1.10 \\
MAGI & 89.14$\pm$0.92 & 90.12$\pm$0.95 & 94.17$\pm$0.74 & 93.89$\pm$0.73 & 90.68$\pm$0.50 & 90.75$\pm$0.18 & 89.97$\pm$0.55 & 89.38$\pm$0.37 \\
GraLSP & 89.91$\pm$0.79 & 90.72$\pm$0.71 & 95.26$\pm$0.42 & 94.95$\pm$0.13 & 91.08$\pm$0.39 & 91.14$\pm$0.15 & 90.58$\pm$0.62 & 89.81$\pm$0.34 \\
DRSR & 88.91$\pm$1.08 & 90.17$\pm$0.75 & 95.91$\pm$0.34 & 95.74$\pm$0.30 & 91.09$\pm$0.41 & 91.05$\pm$0.28 & 90.68$\pm$0.42 & 90.15$\pm$0.70 \\
DGI & 88.91$\pm$0.83 & 89.90$\pm$0.91 & 95.27$\pm$0.45 & 95.13$\pm$0.35 & 91.28$\pm$0.39 & 91.24$\pm$0.14 & 88.46$\pm$0.54 & 87.81$\pm$0.46 \\
GraphMAE2 & 85.99$\pm$2.23 & 87.46$\pm$1.73 & 88.85$\pm$2.22 & 88.04$\pm$2.47 & 87.70$\pm$1.55 & 87.98$\pm$1.06 & 86.27$\pm$1.72 & 85.75$\pm$1.64 \\
GGD & 89.48$\pm$0.71 & 90.84$\pm$0.80 & 96.05$\pm$0.38 & 95.89$\pm$0.25 & 91.34$\pm$0.41 & 91.22$\pm$0.11 & 90.29$\pm$0.48 & 89.68$\pm$0.44 \\
E2Neg & 89.05$\pm$0.78 & 90.14$\pm$0.71 & 95.34$\pm$0.50 & 95.10$\pm$0.15 & 91.05$\pm$0.43 & 90.99$\pm$0.25 & 90.36$\pm$0.37 & 89.72$\pm$0.47 \\ \hline
\textbf{PI-HIST}~(R) & 87.83$\pm$1.05 & 88.92$\pm$1.01 & 92.01$\pm$0.27 & 91.75$\pm$0.32 & 87.67$\pm$0.51 & 87.77$\pm$0.25 & 84.51$\pm$0.51 & 84.69$\pm$0.63 \\
\textbf{PI-HIST}~(A) & 89.57$\pm$1.06 & 90.89$\pm$0.70 & 95.64$\pm$0.30 & 95.46$\pm$0.30 & 91.15$\pm$0.32 & 91.14$\pm$0.22 & 90.34$\pm$0.62 & 89.80$\pm$0.50 \\
\textbf{PI-HIST}~(R\&A) & 89.91$\pm$0.88 & 91.07$\pm$0.80 & 96.07$\pm$0.35 & 95.85$\pm$0.20 & 91.40$\pm$0.46 & 91.38$\pm$0.25 & 90.51$\pm$0.52 & 89.94$\pm$0.47 \\ \hline
\multirow{2}{*}{} & \multicolumn{2}{c|}{\textbf{BlogCatalog}} & \multicolumn{2}{c|}{\textbf{Film}} & \multicolumn{2}{c|}{\textbf{DBLP}} & \multicolumn{2}{c}{\textbf{Amazon}} \\ \cline{2-9} 
 & \textbf{Val}(\%) & \textbf{Test}(\%) & \textbf{Val}(\%) & \textbf{Test}(\%) & \textbf{Val}(\%) & \textbf{Test}(\%) &\textbf{Val}(\%) & \textbf{Test}(\%) \\
 & \textbf{AUC}$\uparrow$ & \textbf{AUC}$\uparrow$ & \textbf{AUC}$\uparrow$ & \textbf{AUC}$\uparrow$ & \textbf{AUC}$\uparrow$ & \textbf{AUC}$\uparrow$ & \textbf{AUC}$\uparrow$ & \textbf{AUC}$\uparrow$ \\ \hline
node2vec & 95.31$\pm$0.09 & 95.29$\pm$0.08 & 79.79$\pm$0.16 & 79.65$\pm$0.55 & \multicolumn{2}{c|}{OOT} & \multicolumn{2}{c}{OOT} \\
PhUSION~(P) & 95.25$\pm$0.08 & 95.24$\pm$0.05 & 82.88$\pm$0.13 & 82.68$\pm$0.54 & \multicolumn{2}{c|}{OOM} & \multicolumn{2}{c}{OOM} \\
PaCEr~(P) & 94.76$\pm$0.10 & 94.73$\pm$0.09 & 80.91$\pm$0.55 & 80.76$\pm$0.53 & \multicolumn{2}{c|}{OOM} & \multicolumn{2}{c}{OOM} \\ \hline
struc2vec & 95.53$\pm$0.07 & 95.48$\pm$0.11 & 88.71$\pm$0.14 & 88.41$\pm$0.25 & \multicolumn{2}{c|}{OOT} & \multicolumn{2}{c}{OOT} \\
PhUSION~(I) & 95.87$\pm$0.08 & 95.85$\pm$0.09 & 85.62$\pm$0.08 & 85.32$\pm$0.39 & \multicolumn{2}{c|}{OOM} & \multicolumn{2}{c}{OOM} \\
PaCEr~(I) & 94.75$\pm$0.06 & 94.73$\pm$0.10 & 87.67$\pm$0.15 & 87.33$\pm$0.38 & \multicolumn{2}{c|}{OOM} & \multicolumn{2}{c}{OOM} \\ \hline
RandNE & 95.13$\pm$0.07 & 95.12$\pm$0.09 & 86.22$\pm$0.19 & 86.13$\pm$0.34 & 69.75$\pm$0.09 & 69.91$\pm$0.08 & 53.68$\pm$0.20 & 53.88$\pm$0.12 \\
LouvainNE & 75.48$\pm$0.51 & 75.17$\pm$0.34 & 67.72$\pm$0.50 & 68.08$\pm$0.76 & 62.57$\pm$0.16 & 62.48$\pm$0.24 & 54.98$\pm$0.22 & 55.11$\pm$0.24 \\
SketchNE & 94.69$\pm$0.16 & 94.74$\pm$0.12 & 78.10$\pm$0.48 & 78.04$\pm$0.66 & 68.61$\pm$0.16 & 68.55$\pm$0.20 & 58.33$\pm$0.09 & 58.32$\pm$0.11 \\
node2binary & 80.74$\pm$1.70 & 80.64$\pm$1.62 & 76.01$\pm$4.29 & 75.85$\pm$4.30 & \multicolumn{2}{c|}{OOM} & \multicolumn{2}{c}{OOM} \\ \hline
S3GC & 93.02$\pm$0.09 & 93.00$\pm$0.08 & 73.12$\pm$0.40 & 72.83$\pm$0.63 & 80.43$\pm$0.22 & 80.46$\pm$0.29 & 62.44$\pm$0.16 & 62.48$\pm$0.24 \\
MAGI & 95.23$\pm$0.13 & 95.22$\pm$0.16 & 87.04$\pm$2.41 & 86.81$\pm$2.27 & 84.42$\pm$0.46 & 84.44$\pm$0.38 & 72.77$\pm$0.74 & 72.89$\pm$0.67 \\
GraLSP & 95.92$\pm$0.07 & 95.89$\pm$0.08 & 88.24$\pm$0.11 & 88.01$\pm$0.29 & \multicolumn{2}{c|}{OOM} & \multicolumn{2}{c}{OOM} \\
DRSR & 95.48$\pm$0.13 & 95.47$\pm$0.17 & 88.27$\pm$0.13 & 87.97$\pm$0.20 & \multicolumn{2}{c|}{OOM} & \multicolumn{2}{c}{OOM} \\
DGI & 96.15$\pm$0.06 & 96.12$\pm$0.07 & 87.02$\pm$0.15 & 86.70$\pm$0.29 & 88.23$\pm$0.10 & 88.23$\pm$0.07 & 78.62$\pm$0.11 & 78.69$\pm$0.04 \\
GraphMAE2 & 93.36$\pm$0.66 & 93.32$\pm$0.63 & 85.01$\pm$0.65 & 84.75$\pm$0.78 & 85.96$\pm$0.12 & 85.89$\pm$0.16 & 77.78$\pm$0.22 & 77.85$\pm$0.17 \\
GGD & 95.99$\pm$0.08 & 95.98$\pm$0.09 & 88.44$\pm$0.14 & 88.20$\pm$0.21 & 88.57$\pm$0.12 & 88.59$\pm$0.08 & 79.19$\pm$0.13 & 79.25$\pm$0.06 \\
E2Neg & 95.47$\pm$0.12 & 95.45$\pm$0.19 & 88.21$\pm$0.19 & 87.97$\pm$0.34 & 88.03$\pm$0.07 & 88.07$\pm$0.03 & 78.93$\pm$0.09 & 79.01$\pm$0.02 \\ \hline
\textbf{PI-HIST}~(R) & 95.16$\pm$0.04 & 95.14$\pm$0.06 & 83.68$\pm$0.23 & 83.42$\pm$0.44 & 82.28$\pm$0.06 & 82.28$\pm$0.10 & 66.57$\pm$0.07 & 66.62$\pm$0.14 \\
\textbf{PI-HIST}~(A) & 95.88$\pm$0.09 & 95.91$\pm$0.07 & 87.29$\pm$0.16 & 86.96$\pm$0.38 & 87.61$\pm$0.11 & 87.59$\pm$0.03 & 78.81$\pm$0.07 & 78.90$\pm$0.06 \\
\textbf{PI-HIST}~(R\&A) & 96.29$\pm$0.08 & 96.30$\pm$0.08 & 88.18$\pm$0.24 & 87.92$\pm$0.31 & 88.24$\pm$0.09 & 88.25$\pm$0.05 & 78.99$\pm$0.11 & 79.07$\pm$0.03 \\ \hline
\end{tabular}
\end{table}

\begin{table}[]\tiny
\centering
\caption{Detailed results of graph reconstruction on all datasets.}
\label{Tab:GR}
\begin{tabular}{l|l|l|l|l|l|l|l|l}
\hline
\multirow{2}{*}{} & \textbf{Europe} & \textbf{USA} & \textbf{PPI} & \textbf{Actor} & \textbf{BlogCatalog} & \textbf{Film} & \textbf{DBLP} & \textbf{Amazon} \\ 
 & \textbf{AUC}(\%)$\uparrow$ & \textbf{AUC}(\%)$\uparrow$ & \textbf{AUC}(\%)$\uparrow$ & \textbf{AUC}(\%)$\uparrow$ & \textbf{AUC}(\%)$\uparrow$ & \textbf{AUC}(\%)$\uparrow$ & \textbf{AUC}(\%)$\uparrow$ & \textbf{AUC}(\%)$\uparrow$ \\ \hline
node2vec & 90.96$\pm$0.42 & 91.02$\pm$0.26 & 87.75$\pm$0.31 & 88.46$\pm$1.38 & 95.41$\pm$0.20 & 92.58$\pm$1.15 & OOT & OOT \\
PhUSION~(P) & 90.64$\pm$0.33 & 87.84$\pm$0.27 & 87.15$\pm$0.34 & 89.98$\pm$0.91 & 95.30$\pm$0.20 & 93.70$\pm$1.08 & OOM & OOM \\
PaCEr~(P) & 90.17$\pm$0.46 & 92.81$\pm$0.22 & 86.67$\pm$0.24 & 91.74$\pm$0.98 & 94.64$\pm$0.22 & 94.07$\pm$1.18 & OOM & OOM \\ \hline
struc2vec & 92.04$\pm$0.38 & 94.25$\pm$0.27 & 89.53$\pm$0.19 & 88.95$\pm$1.61 & 94.90$\pm$0.21 & 88.01$\pm$1.73 & OOT & OOT \\
PhUSION~(I) & 92.02$\pm$0.27 & 95.39$\pm$0.16 & 89.44$\pm$0.25 & 88.22$\pm$1.21 & 95.26$\pm$0.22 & 85.13$\pm$2.45 & OOM & OOM \\
PaCEr~(I) & 91.15$\pm$0.33 & 92.51$\pm$0.24 & 87.46$\pm$0.17 & 85.37$\pm$1.32 & 94.85$\pm$0.16 & 81.18$\pm$1.45 & OOM & OOM \\ \hline
RandNE & 91.44$\pm$0.36 & 92.67$\pm$0.30 & 86.21$\pm$0.26 & 87.80$\pm$0.99 & 94.64$\pm$0.19 & 93.04$\pm$0.78 & 88.46$\pm$1.42 & 84.08$\pm$3.55 \\
LouvainNE & 79.12$\pm$0.91 & 81.74$\pm$0.61 & 73.66$\pm$0.27 & 78.19$\pm$1.84 & 76.45$\pm$0.18 & 71.44$\pm$2.83 & 85.08$\pm$1.97 & 77.79$\pm$3.43 \\
SketchNE & 89.58$\pm$0.42 & 92.72$\pm$0.23 & 85.48$\pm$0.22 & 90.58$\pm$0.68 & 90.92$\pm$0.32 & 94.80$\pm$0.64 & 85.83$\pm$2.12 & 78.52$\pm$5.10 \\
node2binary & 87.80$\pm$0.52 & 91.41$\pm$0.33 & 83.69$\pm$0.27 & 86.92$\pm$1.34 & 83.71$\pm$0.58 & 94.72$\pm$0.88 & OOM & OOM \\ \hline
S3GC & 89.56$\pm$0.41 & 89.06$\pm$0.26 & 88.82$\pm$0.20 & 85.22$\pm$0.84 & 93.59$\pm$0.20 & 79.17$\pm$2.46 & 85.61$\pm$1.77 & 76.69$\pm$4.06 \\
MAGI & 91.64$\pm$0.42 & 95.18$\pm$0.26 & 88.95$\pm$0.15 & 87.17$\pm$0.72 & 95.02$\pm$0.17 & 85.56$\pm$1.66 & 89.20$\pm$1.64 & 84.10$\pm$4.04 \\
GraLSP & 91.51$\pm$0.24 & 94.04$\pm$0.24 & 89.66$\pm$0.20 & 84.31$\pm$1.16 & 95.45$\pm$0.19 & 85.24$\pm$1.86 & OOM & OOM \\
DRSR & 91.65$\pm$0.30 & 95.11$\pm$0.26 & 89.09$\pm$0.21 & 84.39$\pm$1.16 & 95.65$\pm$0.18 & 74.95$\pm$3.16 & OOM & OOM \\
DGI & 91.80$\pm$0.35 & 94.73$\pm$0.21 & 89.74$\pm$0.19 & 88.41$\pm$1.18 & 96.10$\pm$0.17 & 87.56$\pm$2.12 & 88.88$\pm$1.64 & 82.62$\pm$4.35 \\
GraphMAE2 & 90.93$\pm$0.34 & 92.16$\pm$0.33 & 87.59$\pm$0.19 & 85.68$\pm$0.68 & 94.98$\pm$0.20 & 84.07$\pm$2.10 & 88.61$\pm$1.57 & 83.41$\pm$4.67 \\
GGD & 92.25$\pm$0.31 & 95.34$\pm$0.23 & 89.12$\pm$0.14 & 89.81$\pm$0.82 & 95.31$\pm$0.17 & 91.82$\pm$1.39 & 89.19$\pm$1.60 & 84.66$\pm$3.64 \\
E2Neg & 92.04$\pm$0.32 & 94.77$\pm$0.24 & 89.53$\pm$0.22 & 86.00$\pm$1.19 & 95.99$\pm$0.17 & 86.91$\pm$1.28 & 88.24$\pm$1.89 & 81.57$\pm$4.73 \\ \hline
\textbf{PI-HIST}~(R) & 90.13$\pm$0.38 & 91.92$\pm$0.26 & 85.11$\pm$0.44 & 90.83$\pm$0.65 & 94.65$\pm$0.19 & 94.31$\pm$0.76 & 88.12$\pm$1.59 & 83.78$\pm$4.19 \\
\textbf{PI-HIST}~(A) & 92.09$\pm$0.34 & 95.06$\pm$0.20 & 89.42$\pm$0.17 & 87.63$\pm$0.67 & 95.40$\pm$0.14 & 84.91$\pm$2.42 & 83.37$\pm$2.60 & 82.56$\pm$4.70 \\
\textbf{PI-HIST}~(R\&A) & 92.87$\pm$0.36 & 95.61$\pm$0.17 & 90.36$\pm$0.18 & 95.63$\pm$0.54 & 96.21$\pm$0.13 & 95.39$\pm$0.69 & 90.80$\pm$1.27 & 87.68$\pm$3.14 \\ \hline
\end{tabular}
\end{table}

\begin{table}[]\tiny
\centering
\caption{Detailed results of PI-HIST~(R\&A) with different settings of $\alpha$, where LP and GR denote results of link prediction and graph reconstruction, respectively.}
\label{Tab:RA-alpha}
\begin{tabular}{p{0.45cm}|p{0.8cm}|p{0.85cm}p{0.85cm}p{0.85cm}p{0.85cm}p{0.85cm}p{0.85cm}p{0.85cm}p{0.85cm}p{0.85cm}p{0.85cm}p{0.85cm}l}
\hline
 &  & $\alpha$=0.0 & 0.1 & 0.2 & 0.3 & 0.4 & 0.5 & 0.6 & 0.7 & 0.8 & 0.9 & 1.0 \\ \hline
\multirow{8}{*}{\textbf{\begin{tabular}[c]{@{}c@{}c@{}}\textbf{LP}\\ \textbf{AUC}\\(\%)$\uparrow$\end{tabular}}} & \textbf{Europe} & 90.67$\pm$0.77 & 90.81$\pm$0.84 & 90.83$\pm$0.83 & 90.86$\pm$0.83 & 90.91$\pm$0.82 & 90.96$\pm$0.82 & 91.02$\pm$0.81 & 91.06$\pm$0.81 & 91.07$\pm$0.80 & 91.05$\pm$0.78 & 89.79$\pm$0.88 \\
 & \textbf{USA} & 95.58$\pm$0.23 & 95.76$\pm$0.20 & 95.79$\pm$0.20 & 95.80$\pm$0.21 & 95.82$\pm$0.21 & 95.83$\pm$0.21 & 95.85$\pm$0.21 & 95.85$\pm$0.20 & 95.85$\pm$0.20 & 95.85$\pm$0.20 & 92.70$\pm$0.32 \\
 & \textbf{PPI} & 91.17$\pm$0.16 & 91.32$\pm$0.23 & 91.34$\pm$0.23 & 91.36$\pm$0.24 & 91.37$\pm$0.24 & 91.38$\pm$0.24 & 91.38$\pm$0.25 & 91.37$\pm$0.25 & 91.36$\pm$0.26 & 91.35$\pm$0.26 & 88.20$\pm$0.21 \\
 & \textbf{Actor} & 89.72$\pm$0.41 & 89.61$\pm$0.41 & 89.67$\pm$0.41 & 89.74$\pm$0.41 & 89.79$\pm$0.42 & 89.83$\pm$0.44 & 89.87$\pm$0.46 & 89.91$\pm$0.47 & 89.94$\pm$0.47 & 89.93$\pm$0.49 & 85.31$\pm$0.66 \\
 & \textbf{BlogC}. & 95.82$\pm$0.07 & 96.19$\pm$0.07 & 96.21$\pm$0.07 & 96.23$\pm$0.07 & 96.25$\pm$0.07 & 96.28$\pm$0.07 & 96.29$\pm$0.08 & 96.30$\pm$0.08 & 96.29$\pm$0.08 & 96.29$\pm$0.08 & 95.21$\pm$0.07 \\
 & \textbf{Film} & 87.41$\pm$0.32 & 87.53$\pm$0.32 & 87.59$\pm$0.32 & 87.67$\pm$0.32 & 87.76$\pm$0.32 & 87.83$\pm$0.31 & 87.87$\pm$0.31 & 87.90$\pm$0.31 & 87.91$\pm$0.31 & 87.92$\pm$0.31 & 84.29$\pm$0.40 \\
 & \textbf{DBLP} & 87.50$\pm$0.07 & 87.55$\pm$0.07 & 87.60$\pm$0.06 & 87.68$\pm$0.06 & 87.80$\pm$0.07 & 87.95$\pm$0.06 & 88.09$\pm$0.06 & 88.20$\pm$0.06 & 88.25$\pm$0.05 & 88.24$\pm$0.06 & 82.11$\pm$0.11 \\
 & \textbf{Amazon} & 78.34$\pm$0.06 & 78.35$\pm$0.06 & 78.45$\pm$0.07 & 78.60$\pm$0.05 & 78.75$\pm$0.05 & 78.87$\pm$0.05 & 78.95$\pm$0.04 & 79.03$\pm$0.04 & 79.06$\pm$0.04 & 79.07$\pm$0.03 & 65.08$\pm$0.14 \\ \hline
\multirow{8}{*}{\textbf{\begin{tabular}[c]{@{}c@{}c@{}}\textbf{GR}\\ \textbf{AUC}\\(\%)$\uparrow$\end{tabular}}} & \textbf{Europe} & 91.87$\pm$0.38 & 92.27$\pm$0.37 & 92.51$\pm$0.37 & 92.66$\pm$0.36 & 92.78$\pm$0.36 & 92.85$\pm$0.36 & 92.87$\pm$0.36 & 92.86$\pm$0.36 & 92.79$\pm$0.35 & 92.58$\pm$0.34 & 90.93$\pm$0.41 \\
 & \textbf{USA} & 95.23$\pm$0.17 & 95.59$\pm$0.17 & 95.59$\pm$0.17 & 95.60$\pm$0.18 & 95.61$\pm$0.17 & 95.61$\pm$0.16 & 95.58$\pm$0.15 & 95.53$\pm$0.16 & 95.45$\pm$0.18 & 95.26$\pm$0.17 & 92.03$\pm$0.27 \\
 & \textbf{PPI} & 89.53$\pm$0.14 & 90.17$\pm$0.20 & 90.34$\pm$0.17 & 90.36$\pm$0.18 & 90.35$\pm$0.17 & 90.30$\pm$0.17 & 90.18$\pm$0.18 & 90.11$\pm$0.20 & 89.99$\pm$0.15 & 89.79$\pm$0.19 & 86.21$\pm$0.30 \\
 & \textbf{Actor} & 89.49$\pm$0.62 & 94.86$\pm$0.78 & 95.63$\pm$0.54 & 95.39$\pm$0.54 & 94.73$\pm$0.56 & 94.35$\pm$0.54 & 94.25$\pm$0.59 & 93.76$\pm$0.41 & 93.34$\pm$0.47 & 93.28$\pm$0.66 & 90.34$\pm$0.56 \\
 & \textbf{BlogC}. & 95.31$\pm$0.18 & 95.88$\pm$0.16 & 96.08$\pm$0.13 & 96.19$\pm$0.14 & 96.17$\pm$0.13 & 96.16$\pm$0.14 & 96.17$\pm$0.11 & 96.21$\pm$0.13 & 96.15$\pm$0.07 & 96.05$\pm$0.13 & 95.06$\pm$0.14 \\
 & \textbf{Film} & 84.04$\pm$1.75 & 84.69$\pm$1.83 & 89.12$\pm$4.08 & 87.84$\pm$3.32 & 92.45$\pm$1.81 & 91.01$\pm$2.69 & 94.18$\pm$1.02 & 91.85$\pm$1.26 & 95.34$\pm$0.60 & 95.39$\pm$0.69 & 94.11$\pm$1.14 \\
 & \textbf{DBLP} & 86.63$\pm$2.11 & 90.51$\pm$1.25 & 90.73$\pm$1.27 & 90.80$\pm$1.27 & 90.74$\pm$1.27 & 90.67$\pm$1.26 & 90.61$\pm$1.28 & 90.49$\pm$1.35 & 90.26$\pm$1.40 & 89.96$\pm$1.50 & 88.55$\pm$1.58 \\
 & \textbf{Amazon} & 83.44$\pm$4.47 & 87.31$\pm$3.22 & 87.66$\pm$3.16 & 87.64$\pm$3.09 & 87.63$\pm$3.13 & 87.68$\pm$3.14 & 87.67$\pm$3.17 & 87.56$\pm$3.23 & 87.37$\pm$3.28 & 87.04$\pm$3.34 & 83.77$\pm$3.96 \\ \hline
\end{tabular}
\end{table}


\end{appendices}


\begin{thebibliography}{79}
\ifx \bisbn   \undefined \def \bisbn  #1{ISBN #1}\fi
\ifx \binits  \undefined \def \binits#1{#1}\fi
\ifx \bauthor  \undefined \def \bauthor#1{#1}\fi
\ifx \batitle  \undefined \def \batitle#1{#1}\fi
\ifx \bjtitle  \undefined \def \bjtitle#1{#1}\fi
\ifx \bvolume  \undefined \def \bvolume#1{\textbf{#1}}\fi
\ifx \byear  \undefined \def \byear#1{#1}\fi
\ifx \bissue  \undefined \def \bissue#1{#1}\fi
\ifx \bfpage  \undefined \def \bfpage#1{#1}\fi
\ifx \blpage  \undefined \def \blpage #1{#1}\fi
\ifx \burl  \undefined \def \burl#1{\textsf{#1}}\fi
\ifx \doiurl  \undefined \def \doiurl#1{\url{https://doi.org/#1}}\fi
\ifx \betal  \undefined \def \betal{\textit{et al.}}\fi
\ifx \binstitute  \undefined \def \binstitute#1{#1}\fi
\ifx \binstitutionaled  \undefined \def \binstitutionaled#1{#1}\fi
\ifx \bctitle  \undefined \def \bctitle#1{#1}\fi
\ifx \beditor  \undefined \def \beditor#1{#1}\fi
\ifx \bpublisher  \undefined \def \bpublisher#1{#1}\fi
\ifx \bbtitle  \undefined \def \bbtitle#1{#1}\fi
\ifx \bedition  \undefined \def \bedition#1{#1}\fi
\ifx \bseriesno  \undefined \def \bseriesno#1{#1}\fi
\ifx \blocation  \undefined \def \blocation#1{#1}\fi
\ifx \bsertitle  \undefined \def \bsertitle#1{#1}\fi
\ifx \bsnm \undefined \def \bsnm#1{#1}\fi
\ifx \bsuffix \undefined \def \bsuffix#1{#1}\fi
\ifx \bparticle \undefined \def \bparticle#1{#1}\fi
\ifx \barticle \undefined \def \barticle#1{#1}\fi
\bibcommenthead
\ifx \bconfdate \undefined \def \bconfdate #1{#1}\fi
\ifx \botherref \undefined \def \botherref #1{#1}\fi
\ifx \url \undefined \def \url#1{\textsf{#1}}\fi
\ifx \bchapter \undefined \def \bchapter#1{#1}\fi
\ifx \bbook \undefined \def \bbook#1{#1}\fi
\ifx \bcomment \undefined \def \bcomment#1{#1}\fi
\ifx \oauthor \undefined \def \oauthor#1{#1}\fi
\ifx \citeauthoryear \undefined \def \citeauthoryear#1{#1}\fi
\ifx \endbibitem  \undefined \def \endbibitem {}\fi
\ifx \bconflocation  \undefined \def \bconflocation#1{#1}\fi
\ifx \arxivurl  \undefined \def \arxivurl#1{\textsf{#1}}\fi
\csname PreBibitemsHook\endcsname

\bibitem[\protect\citeauthoryear{Reji~Chacko et~al.}{2025}]{reji2025species}
\begin{barticle}
\bauthor{\bsnm{Reji~Chacko}, \binits{M.}},
\bauthor{\bsnm{Albouy}, \binits{C.}},
\bauthor{\bsnm{Altermatt}, \binits{F.}},
\bauthor{\bsnm{Boussange}, \binits{V.}},
\bauthor{\bsnm{Br{\"a}ndle}, \binits{M.}},
\bauthor{\bsnm{Farwig}, \binits{N.}},
\bauthor{\bsnm{Gossner}, \binits{M.M.}},
\bauthor{\bsnm{Ho}, \binits{H.-C.}},
\bauthor{\bsnm{Joss}, \binits{A.}},
\bauthor{\bsnm{Neff}, \binits{F.}}, \betal:
\batitle{Species loss in key habitats accelerates regional food web disruption}.
\bjtitle{Communications Biology}
\bvolume{8}(\bissue{1}),
\bfpage{988}
(\byear{2025})
\end{barticle}
\endbibitem

\bibitem[\protect\citeauthoryear{Sporns et~al.}{2005}]{sporns2005human}
\begin{barticle}
\bauthor{\bsnm{Sporns}, \binits{O.}},
\bauthor{\bsnm{Tononi}, \binits{G.}},
\bauthor{\bsnm{K{\"o}tter}, \binits{R.}}:
\batitle{The human connectome: {A} structural description of the human brain}.
\bjtitle{PLoS Computational Biology}
\bvolume{1}(\bissue{4}),
\bfpage{42}
(\byear{2005})
\end{barticle}
\endbibitem

\bibitem[\protect\citeauthoryear{Pastor-Satorras and Vespignani}{2007}]{pastor2007evolution}
\begin{bbook}
\bauthor{\bsnm{Pastor-Satorras}, \binits{R.}},
\bauthor{\bsnm{Vespignani}, \binits{A.}}:
\bbtitle{Evolution and Structure of the Internet: {A} Statistical Physics Approach}.
\bpublisher{Cambridge University Press},
\blocation{Cambridge, UK}
(\byear{2007})
\end{bbook}
\endbibitem

\bibitem[\protect\citeauthoryear{Brummitt et~al.}{2012}]{brummitt2012suppressing}
\begin{barticle}
\bauthor{\bsnm{Brummitt}, \binits{C.D.}},
\bauthor{\bsnm{D’Souza}, \binits{R.M.}},
\bauthor{\bsnm{Leicht}, \binits{E.A.}}:
\batitle{Suppressing cascades of load in interdependent networks}.
\bjtitle{Proceedings of the National Academy of Sciences (PNAS)}
\bvolume{109}(\bissue{12}),
\bfpage{680}--\blpage{689}
(\byear{2012})
\end{barticle}
\endbibitem

\bibitem[\protect\citeauthoryear{Guimera et~al.}{2005}]{guimera2005worldwide}
\begin{barticle}
\bauthor{\bsnm{Guimera}, \binits{R.}},
\bauthor{\bsnm{Mossa}, \binits{S.}},
\bauthor{\bsnm{Turtschi}, \binits{A.}},
\bauthor{\bsnm{Amaral}, \binits{L.N.}}:
\batitle{The worldwide air transportation network: {A}nomalous centrality, community structure, and cities' global roles}.
\bjtitle{Proceedings of the National Academy of Sciences (PNAS)}
\bvolume{102}(\bissue{22}),
\bfpage{7794}--\blpage{7799}
(\byear{2005})
\end{barticle}
\endbibitem

\bibitem[\protect\citeauthoryear{Giot et~al.}{2003}]{giot2003protein}
\begin{barticle}
\bauthor{\bsnm{Giot}, \binits{L.}},
\bauthor{\bsnm{Bader}, \binits{J.S.}},
\bauthor{\bsnm{Brouwer}, \binits{C.}},
\bauthor{\bsnm{Chaudhuri}, \binits{A.}},
\bauthor{\bsnm{Kuang}, \binits{B.}},
\bauthor{\bsnm{Li}, \binits{Y.}},
\bauthor{\bsnm{Hao}, \binits{Y.}},
\bauthor{\bsnm{Ooi}, \binits{C.}},
\bauthor{\bsnm{Godwin}, \binits{B.}},
\bauthor{\bsnm{Vitols}, \binits{E.}}, \betal:
\batitle{A protein interaction map of {D}rosophila {M}elanogaster}.
\bjtitle{Science}
\bvolume{302}(\bissue{5651}),
\bfpage{1727}--\blpage{1736}
(\byear{2003})
\end{barticle}
\endbibitem

\bibitem[\protect\citeauthoryear{Schneider et~al.}{2011}]{schneider2011mitigation}
\begin{barticle}
\bauthor{\bsnm{Schneider}, \binits{C.M.}},
\bauthor{\bsnm{Moreira}, \binits{A.A.}},
\bauthor{\bsnm{Andrade~Jr}, \binits{J.S.}},
\bauthor{\bsnm{Havlin}, \binits{S.}},
\bauthor{\bsnm{Herrmann}, \binits{H.J.}}:
\batitle{Mitigation of malicious attacks on networks}.
\bjtitle{Proceedings of the National Academy of Sciences}
\bvolume{108}(\bissue{10}),
\bfpage{3838}--\blpage{3841}
(\byear{2011})
\end{barticle}
\endbibitem

\bibitem[\protect\citeauthoryear{Fortunato and Newman}{2022}]{fortunato202220}
\begin{barticle}
\bauthor{\bsnm{Fortunato}, \binits{S.}},
\bauthor{\bsnm{Newman}, \binits{M.E.}}:
\batitle{20 years of network community detection}.
\bjtitle{Nature Physics}
\bvolume{18}(\bissue{8}),
\bfpage{848}--\blpage{850}
(\byear{2022})
\end{barticle}
\endbibitem

\bibitem[\protect\citeauthoryear{Su et~al.}{2022}]{su2022comprehensive}
\begin{barticle}
\bauthor{\bsnm{Su}, \binits{X.}},
\bauthor{\bsnm{Xue}, \binits{S.}},
\bauthor{\bsnm{Liu}, \binits{F.}},
\bauthor{\bsnm{Wu}, \binits{J.}},
\bauthor{\bsnm{Yang}, \binits{J.}},
\bauthor{\bsnm{Zhou}, \binits{C.}},
\bauthor{\bsnm{Hu}, \binits{W.}},
\bauthor{\bsnm{Paris}, \binits{C.}},
\bauthor{\bsnm{Nepal}, \binits{S.}},
\bauthor{\bsnm{Jin}, \binits{D.}}, \betal:
\batitle{A comprehensive survey on community detection with deep learning}.
\bjtitle{IEEE Transactions on Neural Networks and Learning Systems (TNNLS)}
\bvolume{35}(\bissue{4}),
\bfpage{4682}--\blpage{4702}
(\byear{2022})
\end{barticle}
\endbibitem

\bibitem[\protect\citeauthoryear{Qiao et~al.}{2025}]{qiao2025deep}
\begin{botherref}
\oauthor{\bsnm{Qiao}, \binits{H.}},
\oauthor{\bsnm{Tong}, \binits{H.}},
\oauthor{\bsnm{An}, \binits{B.}},
\oauthor{\bsnm{King}, \binits{I.}},
\oauthor{\bsnm{Aggarwal}, \binits{C.}},
\oauthor{\bsnm{Pang}, \binits{G.}}:
Deep graph anomaly detection: {A} survey and new perspectives.
IEEE Transactions on Knowledge and Data Engineering (TKDE)
(2025)
\end{botherref}
\endbibitem

\bibitem[\protect\citeauthoryear{Ekle and Eberle}{2024}]{ekle2024anomaly}
\begin{barticle}
\bauthor{\bsnm{Ekle}, \binits{O.A.}},
\bauthor{\bsnm{Eberle}, \binits{W.}}:
\batitle{Anomaly detection in dynamic graphs: {A} comprehensive survey}.
\bjtitle{ACM Transactions on Knowledge Discovery from Data (TKDD)}
\bvolume{18}(\bissue{8}),
\bfpage{1}--\blpage{44}
(\byear{2024})
\end{barticle}
\endbibitem

\bibitem[\protect\citeauthoryear{Mart{\'\i}nez et~al.}{2016}]{martinez2016survey}
\begin{barticle}
\bauthor{\bsnm{Mart{\'\i}nez}, \binits{V.}},
\bauthor{\bsnm{Berzal}, \binits{F.}},
\bauthor{\bsnm{Cubero}, \binits{J.-C.}}:
\batitle{A survey of link prediction in complex networks}.
\bjtitle{ACM computing surveys (CSUR)}
\bvolume{49}(\bissue{4}),
\bfpage{1}--\blpage{33}
(\byear{2016})
\end{barticle}
\endbibitem

\bibitem[\protect\citeauthoryear{Kumar et~al.}{2020}]{kumar2020link}
\begin{barticle}
\bauthor{\bsnm{Kumar}, \binits{A.}},
\bauthor{\bsnm{Singh}, \binits{S.S.}},
\bauthor{\bsnm{Singh}, \binits{K.}},
\bauthor{\bsnm{Biswas}, \binits{B.}}:
\batitle{Link prediction techniques, applications, and performance: {A} survey}.
\bjtitle{Physica A: Statistical Mechanics and its Applications}
\bvolume{553},
\bfpage{124289}
(\byear{2020})
\end{barticle}
\endbibitem

\bibitem[\protect\citeauthoryear{Qin and Yeung}{2023}]{qin2023temporal}
\begin{barticle}
\bauthor{\bsnm{Qin}, \binits{M.}},
\bauthor{\bsnm{Yeung}, \binits{D.-Y.}}:
\batitle{Temporal link prediction: {A} unified framework, taxonomy, and review}.
\bjtitle{ACM Computing Surveys (CSUR)}
\bvolume{56}(\bissue{4}),
\bfpage{1}--\blpage{40}
(\byear{2023})
\end{barticle}
\endbibitem

\bibitem[\protect\citeauthoryear{Bhattacharya et~al.}{2026}]{Bhattacharya2026}
\begin{bbook}
\bauthor{\bsnm{Bhattacharya}, \binits{R.}},
\bauthor{\bsnm{Nagwani}, \binits{N.K.}},
\bauthor{\bsnm{Asudani}, \binits{D.S.}},
\bauthor{\bsnm{Chhabra}, \binits{G.S.}},
\bauthor{\bsnm{Bhattacharya}, \binits{S.}},
\bauthor{\bsnm{Kadam}, \binits{S.}}:
In: \beditor{\bsnm{Bhattacharya}, \binits{R.}},
\beditor{\bsnm{Rathore}, \binits{Y.K.}},
\beditor{\bsnm{Tran}, \binits{T.A.}},
\beditor{\bsnm{Swarnkar}, \binits{S.K.}} (eds.)
\bbtitle{A Comprehensive Overview of Graph Convolutional Network},
pp. \bfpage{1}--\blpage{19}.
\bpublisher{Springer},
\blocation{Cham, Switzerland}
(\byear{2026})
\end{bbook}
\endbibitem

\bibitem[\protect\citeauthoryear{Shehzad et~al.}{2026}]{shehzad2026graph}
\begin{botherref}
\oauthor{\bsnm{Shehzad}, \binits{A.}},
\oauthor{\bsnm{Xia}, \binits{F.}},
\oauthor{\bsnm{Abid}, \binits{S.}},
\oauthor{\bsnm{Peng}, \binits{C.}},
\oauthor{\bsnm{Yu}, \binits{S.}},
\oauthor{\bsnm{Zhang}, \binits{D.}},
\oauthor{\bsnm{Verspoor}, \binits{K.}}:
Graph transformers: {A} survey.
IEEE Transactions on Neural Networks and Learning Systems (TNNLS)
(2026)
\end{botherref}
\endbibitem

\bibitem[\protect\citeauthoryear{Rossi et~al.}{2020}]{rossi2020proximity}
\begin{barticle}
\bauthor{\bsnm{Rossi}, \binits{R.A.}},
\bauthor{\bsnm{Jin}, \binits{D.}},
\bauthor{\bsnm{Kim}, \binits{S.}},
\bauthor{\bsnm{Ahmed}, \binits{N.K.}},
\bauthor{\bsnm{Koutra}, \binits{D.}},
\bauthor{\bsnm{Lee}, \binits{J.B.}}:
\batitle{On proximity and structural role-based embeddings in networks: {M}isconceptions, techniques, and applications}.
\bjtitle{ACM Transactions on Knowledge Discovery from Data (TKDD)}
\bvolume{14}(\bissue{5}),
\bfpage{1}--\blpage{37}
(\byear{2020})
\end{barticle}
\endbibitem

\bibitem[\protect\citeauthoryear{Zhu et~al.}{2021}]{zhu2021node}
\begin{bchapter}
\bauthor{\bsnm{Zhu}, \binits{J.}},
\bauthor{\bsnm{Lu}, \binits{X.}},
\bauthor{\bsnm{Heimann}, \binits{M.}},
\bauthor{\bsnm{Koutra}, \binits{D.}}:
\bctitle{Node proximity is all you need: {U}nified structural and positional node and graph embedding}.
In: \bbtitle{Proceedings of the 2021 SIAM International Conference on Data Mining (SDM)},
pp. \bfpage{163}--\blpage{171}
(\byear{2021}).
\bcomment{SIAM}
\end{bchapter}
\endbibitem

\bibitem[\protect\citeauthoryear{Grando et~al.}{2018}]{grando2018machine}
\begin{barticle}
\bauthor{\bsnm{Grando}, \binits{F.}},
\bauthor{\bsnm{Granville}, \binits{L.Z.}},
\bauthor{\bsnm{Lamb}, \binits{L.C.}}:
\batitle{Machine learning in network centrality measures: {T}utorial and outlook}.
\bjtitle{ACM Computing Surveys (CSUR)}
\bvolume{51}(\bissue{5}),
\bfpage{1}--\blpage{32}
(\byear{2018})
\end{barticle}
\endbibitem

\bibitem[\protect\citeauthoryear{Qin and Yeung}{2024}]{mengirwe}
\begin{botherref}
\oauthor{\bsnm{Qin}, \binits{M.}},
\oauthor{\bsnm{Yeung}, \binits{D.-Y.}}:
{IRWE}: Inductive random walk for joint inference of identity and position network embedding.
Transactions on Machine Learning Research (TMLR)
\textbf{2024}
(2024)
\end{botherref}
\endbibitem

\bibitem[\protect\citeauthoryear{Ribeiro et~al.}{2017}]{ribeiro2017struc2vec}
\begin{bchapter}
\bauthor{\bsnm{Ribeiro}, \binits{L.F.}},
\bauthor{\bsnm{Saverese}, \binits{P.H.}},
\bauthor{\bsnm{Figueiredo}, \binits{D.R.}}:
\bctitle{struc2vec: {L}earning node representations from structural identity}.
In: \bbtitle{Proceedings of the 23rd ACM SIGKDD International Conference on Knowledge Discovery and Data Mining},
pp. \bfpage{385}--\blpage{394}
(\byear{2017})
\end{bchapter}
\endbibitem

\bibitem[\protect\citeauthoryear{Ju et~al.}{2024}]{ju2024comprehensive}
\begin{barticle}
\bauthor{\bsnm{Ju}, \binits{W.}},
\bauthor{\bsnm{Fang}, \binits{Z.}},
\bauthor{\bsnm{Gu}, \binits{Y.}},
\bauthor{\bsnm{Liu}, \binits{Z.}},
\bauthor{\bsnm{Long}, \binits{Q.}},
\bauthor{\bsnm{Qiao}, \binits{Z.}},
\bauthor{\bsnm{Qin}, \binits{Y.}},
\bauthor{\bsnm{Shen}, \binits{J.}},
\bauthor{\bsnm{Sun}, \binits{F.}},
\bauthor{\bsnm{Xiao}, \binits{Z.}}, \betal:
\batitle{A comprehensive survey on deep graph representation learning}.
\bjtitle{Neural Networks}
\bvolume{173},
\bfpage{106207}
(\byear{2024})
\end{barticle}
\endbibitem

\bibitem[\protect\citeauthoryear{Ju et~al.}{2025}]{ju2025survey}
\begin{botherref}
\oauthor{\bsnm{Ju}, \binits{W.}},
\oauthor{\bsnm{Yi}, \binits{S.}},
\oauthor{\bsnm{Wang}, \binits{Y.}},
\oauthor{\bsnm{Xiao}, \binits{Z.}},
\oauthor{\bsnm{Mao}, \binits{Z.}},
\oauthor{\bsnm{Li}, \binits{H.}},
\oauthor{\bsnm{Gu}, \binits{Y.}},
\oauthor{\bsnm{Qin}, \binits{Y.}},
\oauthor{\bsnm{Yin}, \binits{N.}},
\oauthor{\bsnm{Wang}, \binits{S.}}, et al.:
A survey of graph neural networks in real world: {I}mbalance, noise, privacy and ood challenges.
IEEE Transactions on Pattern Analysis and Machine Intelligence (TPAMI)
(2025)
\end{botherref}
\endbibitem

\bibitem[\protect\citeauthoryear{Zheng et~al.}{2026}]{zheng2026graph}
\begin{botherref}
\oauthor{\bsnm{Zheng}, \binits{X.}},
\oauthor{\bsnm{Wang}, \binits{Y.}},
\oauthor{\bsnm{Liu}, \binits{Y.}},
\oauthor{\bsnm{Li}, \binits{M.}},
\oauthor{\bsnm{Zhang}, \binits{M.}},
\oauthor{\bsnm{Jin}, \binits{D.}},
\oauthor{\bsnm{Yu}, \binits{P.S.}},
\oauthor{\bsnm{Pan}, \binits{S.}}:
Graph neural networks for graphs with heterophily: {A} survey.
IEEE Transactions on Knowledge and Data Engineering (TKDE)
(2026)
\end{botherref}
\endbibitem

\bibitem[\protect\citeauthoryear{Newman and Clauset}{2016}]{newman2016structure}
\begin{barticle}
\bauthor{\bsnm{Newman}, \binits{M.E.}},
\bauthor{\bsnm{Clauset}, \binits{A.}}:
\batitle{Structure and inference in annotated networks}.
\bjtitle{Nature communications}
\bvolume{7}(\bissue{1}),
\bfpage{11863}
(\byear{2016})
\end{barticle}
\endbibitem

\bibitem[\protect\citeauthoryear{Peel et~al.}{2017}]{peel2017ground}
\begin{barticle}
\bauthor{\bsnm{Peel}, \binits{L.}},
\bauthor{\bsnm{Larremore}, \binits{D.B.}},
\bauthor{\bsnm{Clauset}, \binits{A.}}:
\batitle{The ground truth about metadata and community detection in networks}.
\bjtitle{Science Advances}
\bvolume{3}(\bissue{5}),
\bfpage{1602548}
(\byear{2017})
\end{barticle}
\endbibitem

\bibitem[\protect\citeauthoryear{Qin et~al.}{2018}]{qin2018adaptive}
\begin{barticle}
\bauthor{\bsnm{Qin}, \binits{M.}},
\bauthor{\bsnm{Jin}, \binits{D.}},
\bauthor{\bsnm{Lei}, \binits{K.}},
\bauthor{\bsnm{Gabrys}, \binits{B.}},
\bauthor{\bsnm{Musial-Gabrys}, \binits{K.}}:
\batitle{Adaptive community detection incorporating topology and content in social networks}.
\bjtitle{Knowledge-Based Systems}
\bvolume{161},
\bfpage{342}--\blpage{356}
(\byear{2018})
\end{barticle}
\endbibitem

\bibitem[\protect\citeauthoryear{Qin and Lei}{2021}]{qin2021dual}
\begin{barticle}
\bauthor{\bsnm{Qin}, \binits{M.}},
\bauthor{\bsnm{Lei}, \binits{K.}}:
\batitle{Dual-channel hybrid community detection in attributed networks}.
\bjtitle{Information Sciences}
\bvolume{551},
\bfpage{146}--\blpage{167}
(\byear{2021})
\end{barticle}
\endbibitem

\bibitem[\protect\citeauthoryear{Wang et~al.}{2020}]{wang2020gcn}
\begin{bchapter}
\bauthor{\bsnm{Wang}, \binits{X.}},
\bauthor{\bsnm{Zhu}, \binits{M.}},
\bauthor{\bsnm{Bo}, \binits{D.}},
\bauthor{\bsnm{Cui}, \binits{P.}},
\bauthor{\bsnm{Shi}, \binits{C.}},
\bauthor{\bsnm{Pei}, \binits{J.}}:
\bctitle{{AM-GCN}: Adaptive multi-channel graph convolutional networks}.
In: \bbtitle{Proceedings of the 26th ACM SIGKDD International Conference on Knowledge Discovery and Data Mining},
pp. \bfpage{1243}--\blpage{1253}
(\byear{2020})
\end{bchapter}
\endbibitem

\bibitem[\protect\citeauthoryear{Zhang et~al.}{2024}]{zhang2024disentangled}
\begin{barticle}
\bauthor{\bsnm{Zhang}, \binits{W.}},
\bauthor{\bsnm{Pan}, \binits{L.}},
\bauthor{\bsnm{Guo}, \binits{X.}},
\bauthor{\bsnm{Jiao}, \binits{P.}}:
\batitle{Disentangled representation learning for structural role discovery}.
\bjtitle{IEEE Transactions on Computational Social Systems (TCSS)}
\bvolume{11}(\bissue{6}),
\bfpage{7200}--\blpage{7211}
(\byear{2024})
\end{barticle}
\endbibitem

\bibitem[\protect\citeauthoryear{Tang et~al.}{2025}]{tang2025hyperrole}
\begin{bchapter}
\bauthor{\bsnm{Tang}, \binits{H.}},
\bauthor{\bsnm{Du}, \binits{M.}},
\bauthor{\bsnm{Jiao}, \binits{P.}},
\bauthor{\bsnm{Wu}, \binits{H.}},
\bauthor{\bsnm{Zhao}, \binits{Z.}}:
\bctitle{{H}yper{R}ole: {H}yperbolic graph transformer for role discovery in online social networks}.
In: \bbtitle{IEEE INFOCOM 2025 - Proceedings of the 2025 IEEE Conference on Computer Communications},
pp. \bfpage{1}--\blpage{10}
(\byear{2025}).
\bcomment{IEEE}
\end{bchapter}
\endbibitem

\bibitem[\protect\citeauthoryear{You et~al.}{2021}]{you2021identity}
\begin{bchapter}
\bauthor{\bsnm{You}, \binits{J.}},
\bauthor{\bsnm{Gomes-Selman}, \binits{J.M.}},
\bauthor{\bsnm{Ying}, \binits{R.}},
\bauthor{\bsnm{Leskovec}, \binits{J.}}:
\bctitle{Identity-aware graph neural networks}.
In: \bbtitle{Proceedings of the 35th AAAI Conference on Artificial Intelligence},
vol. \bseriesno{35},
pp. \bfpage{10737}--\blpage{10745}
(\byear{2021})
\end{bchapter}
\endbibitem

\bibitem[\protect\citeauthoryear{You et~al.}{2019}]{you2019position}
\begin{bchapter}
\bauthor{\bsnm{You}, \binits{J.}},
\bauthor{\bsnm{Ying}, \binits{R.}},
\bauthor{\bsnm{Leskovec}, \binits{J.}}:
\bctitle{Position-aware graph neural networks}.
In: \bbtitle{Proceedings of the 36th International Conference on Machine Learning (ICML)},
pp. \bfpage{7134}--\blpage{7143}
(\byear{2019}).
\bcomment{PMLR}
\end{bchapter}
\endbibitem

\bibitem[\protect\citeauthoryear{Devvrit et~al.}{2022}]{devvrit2022s3gc}
\begin{bchapter}
\bauthor{\bsnm{Devvrit}, \binits{F.}},
\bauthor{\bsnm{Sinha}, \binits{A.}},
\bauthor{\bsnm{Dhillon}, \binits{I.}},
\bauthor{\bsnm{Jain}, \binits{P.}}:
\bctitle{{S3GC}: {S}calable self-supervised graph clustering}.
In: \bbtitle{Proceedings of the 36th International Conference on Neural Information Processing Systems (NeurIPS)},
pp. \bfpage{3248}--\blpage{3261}
(\byear{2022})
\end{bchapter}
\endbibitem

\bibitem[\protect\citeauthoryear{Liu et~al.}{2024}]{liu2024revisiting}
\begin{bchapter}
\bauthor{\bsnm{Liu}, \binits{Y.}},
\bauthor{\bsnm{Li}, \binits{J.}},
\bauthor{\bsnm{Chen}, \binits{Y.}},
\bauthor{\bsnm{Wu}, \binits{R.}},
\bauthor{\bsnm{Wang}, \binits{E.}},
\bauthor{\bsnm{Zhou}, \binits{J.}},
\bauthor{\bsnm{Tian}, \binits{S.}},
\bauthor{\bsnm{Shen}, \binits{S.}},
\bauthor{\bsnm{Fu}, \binits{X.}},
\bauthor{\bsnm{Meng}, \binits{C.}}, \betal:
\bctitle{Revisiting modularity maximization for graph clustering: {A} contrastive learning perspective}.
In: \bbtitle{Proceedings of the 30th ACM SIGKDD Conference on Knowledge Discovery and Data Mining},
pp. \bfpage{1968}--\blpage{1979}
(\byear{2024})
\end{bchapter}
\endbibitem

\bibitem[\protect\citeauthoryear{Grover and Leskovec}{2016}]{grover2016node2vec}
\begin{bchapter}
\bauthor{\bsnm{Grover}, \binits{A.}},
\bauthor{\bsnm{Leskovec}, \binits{J.}}:
\bctitle{node2vec: {S}calable feature learning for networks}.
In: \bbtitle{Proceedings of the 22nd ACM SIGKDD International Conference on Knowledge Discovery and Data Mining},
pp. \bfpage{855}--\blpage{864}
(\byear{2016})
\end{bchapter}
\endbibitem

\bibitem[\protect\citeauthoryear{Ivanov and Burnaev}{2018}]{ivanov2018anonymous}
\begin{bchapter}
\bauthor{\bsnm{Ivanov}, \binits{S.}},
\bauthor{\bsnm{Burnaev}, \binits{E.}}:
\bctitle{Anonymous walk embeddings}.
In: \bbtitle{Proceedings of the 35th International Conference on Machine Learning (ICML)},
pp. \bfpage{2186}--\blpage{2195}
(\byear{2018}).
\bcomment{PMLR}
\end{bchapter}
\endbibitem

\bibitem[\protect\citeauthoryear{Micali and Zhu}{2016}]{micali2016reconstructing}
\begin{barticle}
\bauthor{\bsnm{Micali}, \binits{S.}},
\bauthor{\bsnm{Zhu}, \binits{Z.A.}}:
\batitle{Reconstructing markov processes from independent and anonymous experiments}.
\bjtitle{Discrete Applied Mathematics}
\bvolume{200},
\bfpage{108}--\blpage{122}
(\byear{2016})
\end{barticle}
\endbibitem

\bibitem[\protect\citeauthoryear{Zachary}{1977}]{zachary1977information}
\begin{barticle}
\bauthor{\bsnm{Zachary}, \binits{W.W.}}:
\batitle{An information flow model for conflict and fission in small groups}.
\bjtitle{Journal of anthropological research}
\bvolume{33}(\bissue{4}),
\bfpage{452}--\blpage{473}
(\byear{1977})
\end{barticle}
\endbibitem

\bibitem[\protect\citeauthoryear{Van~der Maaten and Hinton}{2008}]{van2008visualizing}
\begin{botherref}
\oauthor{\bsnm{Maaten}, \binits{L.}},
\oauthor{\bsnm{Hinton}, \binits{G.}}:
Visualizing data using t-{SNE}.
Journal of Machine Learning Research (JMLR)
\textbf{9}(11)
(2008)
\end{botherref}
\endbibitem

\bibitem[\protect\citeauthoryear{Jiao et~al.}{2021}]{jiao2021survey}
\begin{barticle}
\bauthor{\bsnm{Jiao}, \binits{P.}},
\bauthor{\bsnm{Guo}, \binits{X.}},
\bauthor{\bsnm{Pan}, \binits{T.}},
\bauthor{\bsnm{Zhang}, \binits{W.}},
\bauthor{\bsnm{Pei}, \binits{Y.}},
\bauthor{\bsnm{Pan}, \binits{L.}}:
\batitle{A survey on role-oriented network embedding}.
\bjtitle{IEEE Transactions on Big Data}
\bvolume{8}(\bissue{4}),
\bfpage{933}--\blpage{952}
(\byear{2021})
\end{barticle}
\endbibitem

\bibitem[\protect\citeauthoryear{Yang and Leskovec}{2015}]{yang2015defining}
\begin{barticle}
\bauthor{\bsnm{Yang}, \binits{J.}},
\bauthor{\bsnm{Leskovec}, \binits{J.}}:
\batitle{Defining and evaluating network communities based on ground-truth}.
\bjtitle{Knowledge and Information Systems}
\bvolume{42}(\bissue{1}),
\bfpage{181}--\blpage{213}
(\byear{2015})
\end{barticle}
\endbibitem

\bibitem[\protect\citeauthoryear{Von~Luxburg}{2007}]{von2007tutorial}
\begin{barticle}
\bauthor{\bsnm{Von~Luxburg}, \binits{U.}}:
\batitle{A tutorial on spectral clustering}.
\bjtitle{Statistics and computing}
\bvolume{17}(\bissue{4}),
\bfpage{395}--\blpage{416}
(\byear{2007})
\end{barticle}
\endbibitem

\bibitem[\protect\citeauthoryear{Bhowmick et~al.}{2020}]{bhowmick2020louvainne}
\begin{bchapter}
\bauthor{\bsnm{Bhowmick}, \binits{A.K.}},
\bauthor{\bsnm{Meneni}, \binits{K.}},
\bauthor{\bsnm{Danisch}, \binits{M.}},
\bauthor{\bsnm{Guillaume}, \binits{J.-L.}},
\bauthor{\bsnm{Mitra}, \binits{B.}}:
\bctitle{{L}ouvain{NE}: {H}ierarchical {L}ouvain method for high quality and scalable network embedding}.
In: \bbtitle{Proceedings of the 13th International Conference on Web Search and Data Mining (WSDM)},
pp. \bfpage{43}--\blpage{51}
(\byear{2020})
\end{bchapter}
\endbibitem

\bibitem[\protect\citeauthoryear{Xie et~al.}{2023}]{xie2023sketchne}
\begin{barticle}
\bauthor{\bsnm{Xie}, \binits{Y.}},
\bauthor{\bsnm{Dong}, \binits{Y.}},
\bauthor{\bsnm{Qiu}, \binits{J.}},
\bauthor{\bsnm{Yu}, \binits{W.}},
\bauthor{\bsnm{Feng}, \binits{X.}},
\bauthor{\bsnm{Tang}, \binits{J.}}:
\batitle{{S}ketch{NE}: {E}mbedding billion-scale networks accurately in one hour}.
\bjtitle{IEEE Transactions on Knowledge and Data Engineering (TKDE)}
\bvolume{35}(\bissue{10}),
\bfpage{10666}--\blpage{10680}
(\byear{2023})
\end{barticle}
\endbibitem

\bibitem[\protect\citeauthoryear{Yang et~al.}{2020}]{yang2020homogeneous}
\begin{barticle}
\bauthor{\bsnm{Yang}, \binits{R.}},
\bauthor{\bsnm{Shi}, \binits{J.}},
\bauthor{\bsnm{Xiao}, \binits{X.}},
\bauthor{\bsnm{Yang}, \binits{Y.}},
\bauthor{\bsnm{Bhowmick}, \binits{S.S.}}:
\batitle{Homogeneous network embedding for massive graphs via reweighted personalized {P}age{R}ank}.
\bjtitle{Proceedings of the VLDB Endowment}
\bvolume{13}(\bissue{5}),
\bfpage{670}--\blpage{683}
(\byear{2020})
\end{barticle}
\endbibitem

\bibitem[\protect\citeauthoryear{Milo et~al.}{2004}]{milo2004superfamilies}
\begin{barticle}
\bauthor{\bsnm{Milo}, \binits{R.}},
\bauthor{\bsnm{Itzkovitz}, \binits{S.}},
\bauthor{\bsnm{Kashtan}, \binits{N.}},
\bauthor{\bsnm{Levitt}, \binits{R.}},
\bauthor{\bsnm{Shen-Orr}, \binits{S.}},
\bauthor{\bsnm{Ayzenshtat}, \binits{I.}},
\bauthor{\bsnm{Sheffer}, \binits{M.}},
\bauthor{\bsnm{Alon}, \binits{U.}}:
\batitle{Superfamilies of evolved and designed networks}.
\bjtitle{Science}
\bvolume{303}(\bissue{5663}),
\bfpage{1538}--\blpage{1542}
(\byear{2004})
\end{barticle}
\endbibitem

\bibitem[\protect\citeauthoryear{Dong et~al.}{2017}]{dong2017structural}
\begin{bchapter}
\bauthor{\bsnm{Dong}, \binits{Y.}},
\bauthor{\bsnm{Johnson}, \binits{R.A.}},
\bauthor{\bsnm{Xu}, \binits{J.}},
\bauthor{\bsnm{Chawla}, \binits{N.V.}}:
\bctitle{Structural diversity and homophily: {A} study across more than one hundred big networks}.
In: \bbtitle{Proceedings of the 23rd ACM SIGKDD International Conference on Knowledge Discovery and Data Mining},
pp. \bfpage{807}--\blpage{816}
(\byear{2017})
\end{bchapter}
\endbibitem

\bibitem[\protect\citeauthoryear{Lancichinetti et~al.}{2008}]{lancichinetti2008benchmark}
\begin{barticle}
\bauthor{\bsnm{Lancichinetti}, \binits{A.}},
\bauthor{\bsnm{Fortunato}, \binits{S.}},
\bauthor{\bsnm{Radicchi}, \binits{F.}}:
\batitle{Benchmark graphs for testing community detection algorithms}.
\bjtitle{Physical Review E—Statistical, Nonlinear, and Soft Matter Physics}
\bvolume{78}(\bissue{4}),
\bfpage{046110}
(\byear{2008})
\end{barticle}
\endbibitem

\bibitem[\protect\citeauthoryear{Milo et~al.}{2002}]{milo2002network}
\begin{barticle}
\bauthor{\bsnm{Milo}, \binits{R.}},
\bauthor{\bsnm{Shen-Orr}, \binits{S.}},
\bauthor{\bsnm{Itzkovitz}, \binits{S.}},
\bauthor{\bsnm{Kashtan}, \binits{N.}},
\bauthor{\bsnm{Chklovskii}, \binits{D.}},
\bauthor{\bsnm{Alon}, \binits{U.}}:
\batitle{Network motifs: {S}imple building blocks of complex networks}.
\bjtitle{Science}
\bvolume{298}(\bissue{5594}),
\bfpage{824}--\blpage{827}
(\byear{2002})
\end{barticle}
\endbibitem

\bibitem[\protect\citeauthoryear{Ugander et~al.}{2012}]{ugander2012structural}
\begin{barticle}
\bauthor{\bsnm{Ugander}, \binits{J.}},
\bauthor{\bsnm{Backstrom}, \binits{L.}},
\bauthor{\bsnm{Marlow}, \binits{C.}},
\bauthor{\bsnm{Kleinberg}, \binits{J.}}:
\batitle{Structural diversity in social contagion}.
\bjtitle{Proceedings of the National Academy of Sciences (PNAS)}
\bvolume{109}(\bissue{16}),
\bfpage{5962}--\blpage{5966}
(\byear{2012})
\end{barticle}
\endbibitem

\bibitem[\protect\citeauthoryear{Srinivasan and Ribeiro}{2020}]{srinivasanequivalence}
\begin{bchapter}
\bauthor{\bsnm{Srinivasan}, \binits{B.}},
\bauthor{\bsnm{Ribeiro}, \binits{B.}}:
\bctitle{On the equivalence between positional node embeddings and structural graph representations}.
In: \bbtitle{Proceedings of the 8th International Conference on Learning Representations (ICLR)}
(\byear{2020}).
\burl{https://openreview.net/forum?id=SJxzFySKwH}
\end{bchapter}
\endbibitem

\bibitem[\protect\citeauthoryear{Yan et~al.}{2024}]{yan2024pacer}
\begin{bchapter}
\bauthor{\bsnm{Yan}, \binits{Y.}},
\bauthor{\bsnm{Hu}, \binits{Y.}},
\bauthor{\bsnm{Zhou}, \binits{Q.}},
\bauthor{\bsnm{Liu}, \binits{L.}},
\bauthor{\bsnm{Zeng}, \binits{Z.}},
\bauthor{\bsnm{Chen}, \binits{Y.}},
\bauthor{\bsnm{Pan}, \binits{M.}},
\bauthor{\bsnm{Chen}, \binits{H.}},
\bauthor{\bsnm{Das}, \binits{M.}},
\bauthor{\bsnm{Tong}, \binits{H.}}:
\bctitle{{P}a{CE}r: Network embedding from positional to structural}.
In: \bbtitle{Proceedings of the ACM Web Conference 2024},
pp. \bfpage{2485}--\blpage{2496}
(\byear{2024})
\end{bchapter}
\endbibitem

\bibitem[\protect\citeauthoryear{Qin et~al.}{2025a}]{qin2025efficient}
\begin{bchapter}
\bauthor{\bsnm{Qin}, \binits{M.}},
\bauthor{\bsnm{Liu}, \binits{J.}},
\bauthor{\bsnm{King}, \binits{I.}}:
\bctitle{Efficient identity and position graph embedding via spectral-based random feature aggregation}.
In: \bbtitle{Proceedings of the 31st ACM SIGKDD Conference on Knowledge Discovery and Data Mining V.2},
pp. \bfpage{2350}--\blpage{2361}
(\byear{2025})
\end{bchapter}
\endbibitem

\bibitem[\protect\citeauthoryear{Qin et~al.}{2025b}]{qin2025infraredgp}
\begin{bchapter}
\bauthor{\bsnm{Qin}, \binits{M.}},
\bauthor{\bsnm{Li}, \binits{W.}},
\bauthor{\bsnm{Cui}, \binits{J.}},
\bauthor{\bsnm{Pei}, \binits{S.}}:
\bctitle{{I}nfrared{GP}: Efficient graph partitioning via spectral graph neural networks with negative corrections}.
In: \bbtitle{2025 IEEE High Performance Extreme Computing Conference (HPEC)},
pp. \bfpage{1}--\blpage{7}
(\byear{2025}).
\bcomment{IEEE}
\end{bchapter}
\endbibitem

\bibitem[\protect\citeauthoryear{Jin et~al.}{2020}]{jin2020gralsp}
\begin{bchapter}
\bauthor{\bsnm{Jin}, \binits{Y.}},
\bauthor{\bsnm{Song}, \binits{G.}},
\bauthor{\bsnm{Shi}, \binits{C.}}:
\bctitle{{G}ra{LSP}: {G}raph neural networks with local structural patterns}.
In: \bbtitle{Proceedings of the AAAI Conference on Artificial Intelligence},
vol. \bseriesno{34},
pp. \bfpage{4361}--\blpage{4368}
(\byear{2020})
\end{bchapter}
\endbibitem

\bibitem[\protect\citeauthoryear{Yan et~al.}{2024}]{yan2024topological}
\begin{bchapter}
\bauthor{\bsnm{Yan}, \binits{Y.}},
\bauthor{\bsnm{Hu}, \binits{Y.}},
\bauthor{\bsnm{Zhou}, \binits{Q.}},
\bauthor{\bsnm{Wu}, \binits{S.}},
\bauthor{\bsnm{Wang}, \binits{D.}},
\bauthor{\bsnm{Tong}, \binits{H.}}:
\bctitle{Topological anonymous walk embedding: {A} new structural node embedding approach}.
In: \bbtitle{Proceedings of the 33rd ACM International Conference on Information and Knowledge Management (CIKM)},
pp. \bfpage{2796}--\blpage{2806}
(\byear{2024})
\end{bchapter}
\endbibitem

\bibitem[\protect\citeauthoryear{Joly et~al.}{2025}]{joly2025taxonomy}
\begin{bchapter}
\bauthor{\bsnm{Joly}, \binits{A.}},
\bauthor{\bsnm{Keriven}, \binits{N.}},
\bauthor{\bsnm{Roumy}, \binits{A.}}:
\bctitle{Taxonomy of reduction matrices for graph coarsening}.
In: \bbtitle{Advances in Neural Information Processing Systems (NeurIPS)},
pp. \bfpage{96058}--\blpage{96083}
(\byear{2025})
\end{bchapter}
\endbibitem

\bibitem[\protect\citeauthoryear{Huang and Villar}{2021}]{huang2021short}
\begin{bchapter}
\bauthor{\bsnm{Huang}, \binits{N.T.}},
\bauthor{\bsnm{Villar}, \binits{S.}}:
\bctitle{A short tutorial on the {W}eisfeiler-{L}ehman test and its variants}.
In: \bbtitle{Proceedings of the 2021 IEEE International Conference on Acoustics, Speech and Signal Processing (ICASSP)},
pp. \bfpage{8533}--\blpage{8537}
(\byear{2021}).
\bcomment{IEEE}
\end{bchapter}
\endbibitem

\bibitem[\protect\citeauthoryear{Peng et~al.}{2021}]{peng2021hyperbolic}
\begin{barticle}
\bauthor{\bsnm{Peng}, \binits{W.}},
\bauthor{\bsnm{Varanka}, \binits{T.}},
\bauthor{\bsnm{Mostafa}, \binits{A.}},
\bauthor{\bsnm{Shi}, \binits{H.}},
\bauthor{\bsnm{Zhao}, \binits{G.}}:
\batitle{Hyperbolic deep neural networks: {A} survey}.
\bjtitle{IEEE Transactions on Pattern Analysis and Machine Intelligence (TPAMI)}
\bvolume{44}(\bissue{12}),
\bfpage{10023}--\blpage{10044}
(\byear{2021})
\end{barticle}
\endbibitem

\bibitem[\protect\citeauthoryear{Sadat et~al.}{2026}]{sadat2026hyperbolic}
\begin{botherref}
\oauthor{\bsnm{Sadat}, \binits{S.A.}},
\oauthor{\bsnm{Miliani}, \binits{M.Y.T.}},
\oauthor{\bsnm{El~Hames}, \binits{K.H.}},
\oauthor{\bsnm{Seba}, \binits{H.}},
\oauthor{\bsnm{Haddad}, \binits{M.}}:
Hyperbolic graph embedding: {A} survey and an evaluation on anomaly detection.
Pattern Recognition,
114385
(2026)
\end{botherref}
\endbibitem

\bibitem[\protect\citeauthoryear{Hamilton et~al.}{2017}]{hamilton2017inductive}
\begin{bchapter}
\bauthor{\bsnm{Hamilton}, \binits{W.L.}},
\bauthor{\bsnm{Ying}, \binits{R.}},
\bauthor{\bsnm{Leskovec}, \binits{J.}}:
\bctitle{Inductive representation learning on large graphs}.
In: \bbtitle{Proceedings of the 31st International Conference on Neural Information Processing Systems (NeurIPS)},
pp. \bfpage{1025}--\blpage{1035}
(\byear{2017})
\end{bchapter}
\endbibitem

\bibitem[\protect\citeauthoryear{Wu et~al.}{2019}]{wu2019simplifying}
\begin{bchapter}
\bauthor{\bsnm{Wu}, \binits{F.}},
\bauthor{\bsnm{Souza}, \binits{A.}},
\bauthor{\bsnm{Zhang}, \binits{T.}},
\bauthor{\bsnm{Fifty}, \binits{C.}},
\bauthor{\bsnm{Yu}, \binits{T.}},
\bauthor{\bsnm{Weinberger}, \binits{K.}}:
\bctitle{Simplifying graph convolutional networks}.
In: \bbtitle{Proceedings of the 36th International Conference on Machine Learning (ICML)},
pp. \bfpage{6861}--\blpage{6871}
(\byear{2019}).
\bcomment{PMLR}
\end{bchapter}
\endbibitem

\bibitem[\protect\citeauthoryear{Dasgupta and Gupta}{2003}]{dasgupta2003elementary}
\begin{barticle}
\bauthor{\bsnm{Dasgupta}, \binits{S.}},
\bauthor{\bsnm{Gupta}, \binits{A.}}:
\batitle{An elementary proof of a theorem of {J}ohnson and {L}indenstrauss}.
\bjtitle{Random Structures and Algorithms}
\bvolume{22}(\bissue{1}),
\bfpage{60}--\blpage{65}
(\byear{2003})
\end{barticle}
\endbibitem

\bibitem[\protect\citeauthoryear{Zhang et~al.}{2018}]{zhang2018billion}
\begin{bchapter}
\bauthor{\bsnm{Zhang}, \binits{Z.}},
\bauthor{\bsnm{Cui}, \binits{P.}},
\bauthor{\bsnm{Li}, \binits{H.}},
\bauthor{\bsnm{Wang}, \binits{X.}},
\bauthor{\bsnm{Zhu}, \binits{W.}}:
\bctitle{Billion-scale network embedding with iterative random projection}.
In: \bbtitle{Proceedings of the 2018 IEEE International Conference on Data Mining (ICDM)},
pp. \bfpage{787}--\blpage{796}
(\byear{2018}).
\bcomment{IEEE}
\end{bchapter}
\endbibitem

\bibitem[\protect\citeauthoryear{Talukder et~al.}{2025}]{talukder2025node2binary}
\begin{bchapter}
\bauthor{\bsnm{Talukder}, \binits{N.}},
\bauthor{\bsnm{Gyurek}, \binits{C.}},
\bauthor{\bsnm{Hasan}, \binits{M.A.}}:
\bctitle{{N}ode2binary: {C}ompact graph node embeddings using binary vectors}.
In: \bbtitle{Proceedings of the ACM on Web Conference 2025},
pp. \bfpage{495}--\blpage{505}
(\byear{2025})
\end{bchapter}
\endbibitem

\bibitem[\protect\citeauthoryear{Veli{\v{c}}kovi{\'c} et~al.}{2019}]{velivckovicdeep}
\begin{bchapter}
\bauthor{\bsnm{Veli{\v{c}}kovi{\'c}}, \binits{P.}},
\bauthor{\bsnm{Fedus}, \binits{W.}},
\bauthor{\bsnm{Hamilton}, \binits{W.L.}},
\bauthor{\bsnm{Li{\`o}}, \binits{P.}},
\bauthor{\bsnm{Bengio}, \binits{Y.}},
\bauthor{\bsnm{Hjelm}, \binits{R.D.}}:
\bctitle{Deep graph infomax}.
In: \bbtitle{Proceedings of the 7th International Conference on Learning Representations (ICLR)}
(\byear{2019}).
\burl{https://openreview.net/forum?id=rklz9iAcKQ}
\end{bchapter}
\endbibitem

\bibitem[\protect\citeauthoryear{Hou et~al.}{2023}]{hou2023graphmae2}
\begin{bchapter}
\bauthor{\bsnm{Hou}, \binits{Z.}},
\bauthor{\bsnm{He}, \binits{Y.}},
\bauthor{\bsnm{Cen}, \binits{Y.}},
\bauthor{\bsnm{Liu}, \binits{X.}},
\bauthor{\bsnm{Dong}, \binits{Y.}},
\bauthor{\bsnm{Kharlamov}, \binits{E.}},
\bauthor{\bsnm{Tang}, \binits{J.}}:
\bctitle{{G}raph{MAE}2: {A} decoding-enhanced masked self-supervised graph learner}.
In: \bbtitle{Proceedings of the ACM Web Conference 2023},
pp. \bfpage{737}--\blpage{746}
(\byear{2023})
\end{bchapter}
\endbibitem

\bibitem[\protect\citeauthoryear{Zheng et~al.}{2022}]{zheng2022rethinking}
\begin{bchapter}
\bauthor{\bsnm{Zheng}, \binits{Y.}},
\bauthor{\bsnm{Pan}, \binits{S.}},
\bauthor{\bsnm{Lee}, \binits{V.C.}},
\bauthor{\bsnm{Zheng}, \binits{Y.}},
\bauthor{\bsnm{Yu}, \binits{P.S.}}:
\bctitle{Rethinking and scaling up graph contrastive learning: {A}n extremely efficient approach with group discrimination}.
In: \bbtitle{Proceedings of the 36th International Conference on Neural Information Processing Systems (NeurIPS)},
pp. \bfpage{10809}--\blpage{10820}
(\byear{2022})
\end{bchapter}
\endbibitem

\bibitem[\protect\citeauthoryear{Huang et~al.}{2025}]{huang2025does}
\begin{bchapter}
\bauthor{\bsnm{Huang}, \binits{Y.}},
\bauthor{\bsnm{Zhao}, \binits{J.}},
\bauthor{\bsnm{He}, \binits{D.}},
\bauthor{\bsnm{Jin}, \binits{D.}},
\bauthor{\bsnm{Huang}, \binits{Y.}},
\bauthor{\bsnm{Wang}, \binits{Z.}}:
\bctitle{Does {GCL} need a large number of negative samples? {E}nhancing graph contrastive learning with effective and efficient negative sampling}.
In: \bbtitle{Proceedings of the AAAI Conference on Artificial Intelligence},
pp. \bfpage{17511}--\blpage{17518}
(\byear{2025})
\end{bchapter}
\endbibitem

\bibitem[\protect\citeauthoryear{Arriaga and Vempala}{2006}]{arriaga2006algorithmic}
\begin{barticle}
\bauthor{\bsnm{Arriaga}, \binits{R.I.}},
\bauthor{\bsnm{Vempala}, \binits{S.}}:
\batitle{An algorithmic theory of learning: {R}obust concepts and random projection}.
\bjtitle{Machine Learning}
\bvolume{63}(\bissue{2}),
\bfpage{161}--\blpage{182}
(\byear{2006})
\end{barticle}
\endbibitem

\bibitem[\protect\citeauthoryear{Martinsson et~al.}{2011}]{martinsson2011randomized}
\begin{barticle}
\bauthor{\bsnm{Martinsson}, \binits{P.-G.}},
\bauthor{\bsnm{Rokhlin}, \binits{V.}},
\bauthor{\bsnm{Tygert}, \binits{M.}}:
\batitle{A randomized algorithm for the decomposition of matrices}.
\bjtitle{Applied and Computational Harmonic Analysis}
\bvolume{30}(\bissue{1}),
\bfpage{47}--\blpage{68}
(\byear{2011})
\end{barticle}
\endbibitem

\bibitem[\protect\citeauthoryear{Qiu et~al.}{2019}]{qiu2019netsmf}
\begin{bchapter}
\bauthor{\bsnm{Qiu}, \binits{J.}},
\bauthor{\bsnm{Dong}, \binits{Y.}},
\bauthor{\bsnm{Ma}, \binits{H.}},
\bauthor{\bsnm{Li}, \binits{J.}},
\bauthor{\bsnm{Wang}, \binits{C.}},
\bauthor{\bsnm{Wang}, \binits{K.}},
\bauthor{\bsnm{Tang}, \binits{J.}}:
\bctitle{Net{SMF}: {L}arge-scale network embedding as sparse matrix factorization}.
In: \bbtitle{The World Wide Web Conference},
pp. \bfpage{1509}--\blpage{1520}
(\byear{2019})
\end{bchapter}
\endbibitem

\bibitem[\protect\citeauthoryear{Perozzi et~al.}{2014}]{perozzi2014deepwalk}
\begin{bchapter}
\bauthor{\bsnm{Perozzi}, \binits{B.}},
\bauthor{\bsnm{Al-Rfou}, \binits{R.}},
\bauthor{\bsnm{Skiena}, \binits{S.}}:
\bctitle{{D}eep{W}alk: {O}nline learning of social representations}.
In: \bbtitle{Proceedings of the 20th ACM SIGKDD International Conference on Knowledge Discovery and Data Mining},
pp. \bfpage{701}--\blpage{710}
(\byear{2014})
\end{bchapter}
\endbibitem

\bibitem[\protect\citeauthoryear{Blondel et~al.}{2008}]{blondel2008fast}
\begin{barticle}
\bauthor{\bsnm{Blondel}, \binits{V.D.}},
\bauthor{\bsnm{Guillaume}, \binits{J.-L.}},
\bauthor{\bsnm{Lambiotte}, \binits{R.}},
\bauthor{\bsnm{Lefebvre}, \binits{E.}}:
\batitle{Fast unfolding of communities in large networks}.
\bjtitle{Journal of Statistical Mechanics: Theory and Experiment}
\bvolume{2008}(\bissue{10}),
\bfpage{10008}
(\byear{2008})
\end{barticle}
\endbibitem

\bibitem[\protect\citeauthoryear{Traag et~al.}{2019}]{traag2019louvain}
\begin{barticle}
\bauthor{\bsnm{Traag}, \binits{V.A.}},
\bauthor{\bsnm{Waltman}, \binits{L.}},
\bauthor{\bsnm{Van~Eck}, \binits{N.J.}}:
\batitle{From {L}ouvain to {L}eiden: {G}uaranteeing well-connected communities}.
\bjtitle{Scientific Reports}
\bvolume{9}(\bissue{1}),
\bfpage{5233}
(\byear{2019})
\end{barticle}
\endbibitem

\bibitem[\protect\citeauthoryear{Newman}{2006}]{newman2006modularity}
\begin{barticle}
\bauthor{\bsnm{Newman}, \binits{M.E.}}:
\batitle{Modularity and community structure in networks}.
\bjtitle{Proceedings of the National Academy of Sciences (PNAS)}
\bvolume{103}(\bissue{23}),
\bfpage{8577}--\blpage{8582}
(\byear{2006})
\end{barticle}
\endbibitem

\bibitem[\protect\citeauthoryear{Bentley}{1975}]{bentley1975multidimensional}
\begin{barticle}
\bauthor{\bsnm{Bentley}, \binits{J.L.}}:
\batitle{Multidimensional binary search trees used for associative searching}.
\bjtitle{Communications of the ACM}
\bvolume{18}(\bissue{9}),
\bfpage{509}--\blpage{517}
(\byear{1975})
\end{barticle}
\endbibitem

\bibitem[\protect\citeauthoryear{Mizutani}{2021}]{mizutani2021improved}
\begin{barticle}
\bauthor{\bsnm{Mizutani}, \binits{T.}}:
\batitle{Improved analysis of spectral algorithm for clustering}.
\bjtitle{Optimization Letters}
\bvolume{15}(\bissue{4}),
\bfpage{1303}--\blpage{1325}
(\byear{2021})
\end{barticle}
\endbibitem

\end{thebibliography}
\end{document}